\pgfplotsset{compat=1.17} %
\theoremstyle{plain}
\newtheorem{theorem}{Theorem}[section]
\newtheorem{lemma}[theorem]{Lemma}
\theoremstyle{definition}
\newtheorem{definition}[theorem]{Definition}
\newtheorem{assumption}[theorem]{Assumption}
\theoremstyle{remark}
\newcounter{numrel}%
\newcommand{\numrel}[1]{%
  \stepcounter{numrel}%
  \overset{(\roman{numrel})}{#1}%
}
\newcommand{\restartnumrel}{\setcounter{numrel}{0}}
\newenvironment{breakablealgorithm}
  {%
   \begin{center}
     \refstepcounter{algorithm}%
     \hrule height.8pt depth0pt \kern2pt%
     \renewcommand{\caption}[2][\relax]{%
       {\raggedright\textbf{\ALG@name~\thealgorithm} ##2\par}%
       \ifx\relax##1\relax %
         \addcontentsline{loa}{algorithm}{\protect\numberline{\thealgorithm}##2}%
       \else %
         \addcontentsline{loa}{algorithm}{\protect\numberline{\thealgorithm}##1}%
       \fi
       \kern2pt\hrule\kern2pt
     }
  }{%
     \kern2pt\hrule\relax%
   \end{center}
  }
\title{Communication-Efficient Byzantine-Resilient Federated Zero-Order
Optimization}
\author{%
    Afonso de Sá Delgado Neto\\
  Technical University of Munich\\
  Munich, Germany\\
  \texttt{afonso.delgado@tum.de} \\
  \And
  Maximilian Egger \\
  Technical University of Munich\\
  Munich, Germany\\
    \texttt{maximilian.egger@tum.de} \\
  \AND
  Mayank Bakshi \\
  Arizona State University\\
  Tucson, AZ, United States \\
  \texttt{bakshi@asu.edu} \\
  \And
  Rawad Bitar \\
  Technical University of Munich\\
  Munich, Germany\\
    \texttt{rawad.bitar@tum.de} \\
}
 \newcommand{\algname}{\textsc{CyBeR-0}\xspace}
\newcommand{\Trmean}{\operatorname{TrMn}}
\newcommand{\gij}{\ensuremath{\mathbf{g}_{i,j}}}
\newcommand{\gi}{\ensuremath{\mathbf{g}_{i}}}
\newcommand{\ginorm}{\ensuremath{g_i}}
\newcommand{\ggamma}{\ensuremath{\mathbf{g}}}
\newcommand{\ggammanorm}{\ensuremath{g}}
\newcommand{\gpop}{\ensuremath{\bar{\mathbf{g}}}}
\newcommand{\gpopnorm}{\bar{\ensuremath{g}}}
\newcommand{\ghat}{\ensuremath{\hat{\mathbf{g}}^{t}}}
\newcommand{\dataspace}{\ensuremath{\Gamma}}
\newcommand{\muextraarg}{\ensuremath{}}
\newcommand{\zeroextraarg}{\ensuremath{, 0}}
\newcommand{\eg}{\emph{, e.g.,}\xspace}
\newcommand{\ie}{\emph{, i.e.,}\xspace}
\newcommand{\define}{\triangleq}
\DeclareMathOperator*{\argmin}{argmin}
\newcommand{\bfg}{{\boldsymbol g}}
\newcommand{\bfw}{{\boldsymbol w}}
\newcommand{\bfx}{{\boldsymbol x}}
\newcommand{\bfz}{{\boldsymbol z}}
\newcommand{\bgamma}{\boldsymbol{\gamma}}
\newcommand{\cD}{\mathcal{D}}
\newcommand{\cW}{\mathcal{W}}
\newcommand{\twonorm}[1]{\left\|#1\right\|_2}
\newcommand{\inabs}[1]{\ensuremath{\left|#1\right|}}
\newcommand{\inset}[1]{\ensuremath{\left\{#1\right\}}}
\newcommand{\st}{\ensuremath{\,:\,}}
\newcommand{\rvD}{\mathsf{D}}
\begin{document}

\maketitle

\begin{abstract}

  We introduce \algname, the first zero-order optimization algorithm for memory-and-communication efficient Federated Learning, resilient to Byzantine faults. We show through extensive numerical experiments on the MNIST dataset
  and fine-tuning RoBERTa-Large that \algname outperforms state-of-the-art algorithms in terms of communication and memory efficiency while reaching similar accuracy. We provide theoretical guarantees on its convergence for convex loss functions.
  \end{abstract}

  \section{Introduction}
Federated Learning \cite{mcmahan2017communication}  is a distributed machine learning approach where a shared model is trained across multiple devices or servers, each holding their own local data, and a central server, called the federator. Rather than transferring the data itself, the devices (called clients) collaboratively update and improve a central model by exchanging model parameters.

Federated Learning (FL) faces several key challenges, such as heterogeneity across clients' data \cite{ZhaoFedNonIID2018,zhu2021federated}, maintaining the privacy of the clients' data \cite{schlegel2023codedpaddedfl,Egger2023Private,jahani2023swiftagg+,zhu2019applying,truex2019hybrid,bagdasaryan2019differential,wei2020federated,tang2024private}, security against Byzantine clients \cite{shen2016auror, blanchardMachineLearningAdversaries2017,li2020learning,yinByzantineRobustDistributedLearning2021,xhemrishi2023balancing} and communication efficiency \cite{wen2017terngrad,karimireddy2019error,vogels2019powersgd,m2021efficient,makkuvaLASERLinearCompression2023,tang2023z,qin2023federated}. We focus on jointly maintaining security and communication efficiency, two challenges that have been studied separately in the literature.

Security against Byzantine clients\ie clients deliberately corrupting their computation to disrupt the process, is paramount. It is shown in~\cite{blanchardMachineLearningAdversaries2017} that, if undetected, only one Byzantine client inserting errors is enough to prevent the learning algorithm from converging.

In FL settings, clients run backpropagation on their local data and communicate a $d$-dimensional vector (usually $d$ is in the order of $10^6$--$10^9$) to the federator, making communication and computation two crucial bottlenecks, especially in settings where the clients are edge devices with limited computation and communication capabilities and energy constraints. %

We introduce \algname, a novel \underline{c}ommunication-\underline{e}fficient and \underline{By}zantine-\underline{r}esilient algorithm within the context of \underline{zero}-order optimization. Zero-order optimization, \emph{i.e.}, optimization techniques that require no gradient computation, has seen significant traction in recent years\eg \cite{salimansEvolutionStrategiesScalable2017,ilyas2018black,malladiFineTuningLanguageModels2023,fangCommunicationEfficientStochasticZerothOrder2022,fang2022communication}. Zero-order optimization alleviates both computation and communication costs by allowing clients to send approximations of their gradients using random linear projections as \emph{(i)} it does not require backpropagation; and \emph{(ii)} it allows significant compression of the communicated vector. 

While \cite{fang2022communication} achieves communication efficiency through reduced frequencies of model exchanges, we achieve significant communication cost savings by a novel bi-directional shared seed concept, inspired by \cite{salimansEvolutionStrategiesScalable2017}. As a result, clients and the federator only send \emph{$k$ real numbers}, for a parameter $k>0$, instead of high-dimensional vectors. This is coupled with an effective robust aggregation technique specifically tailored to the zero-order optimization context. 

We validate the effectiveness of \algname through extensive experiments. For MNIST, we observe the same accuracy as in~\cite{yinByzantineRobustDistributedLearning2021} with a hundred-fold saving in communication. In fine-tuning Large Language Models (LLMs) \cite{malladiFineTuningLanguageModels2023}, \algname achieves high accuracy in the presence of Byzantine clients, together with a million-fold saving in terms of communication compared to uncompressed transmission. The superiority of \algname in fine-tuning LLMs is motivated by the findings of \cite{salimansEvolutionStrategiesScalable2017} and \cite{malladiFineTuningLanguageModels2023} in which it is shown that certain problem domains exhibit an inherently low \textit{intrinsic} dimensionality, making zero-order optimization (hence also \algname) a suitable optimization technique.
The fast convergence of \algname aligns with those findings. Furthermore, it was infeasible to compare to state-of-the-art robust FL schemes that require backpropagation as the simulations overloaded the memory of our GPUs of type Nvidia GeForce RTX 4090; highlighting the memory efficiency of \algname. In addition, we theoretically prove that \algname converges under the assumption of strong convexity. 

To our knowledge, this is the first work that simultaneously addresses security and communication/memory efficiency in FL through zero-order methods; offering a unique advantage, %
particularly in environments where first-order gradient information is not available or not computationally feasible due to memory-constrained client devices.

\section{Problem Setting}\label{sec:prob-setting}
We start with notation and conventions used in the paper. 

\textbf{Notation.} Vectors are represented by boldface letters\eg \(\bfz\) %
and sets are denoted by calligraphic letters \eg \(\mathcal{W}\). The \(L_2\) norm of a vector \(\bfw\) is denoted by \(\|\bfw\|_2\). The inner product of
vectors \(\bfz\) and \(\bfw\) is represented interchangeably by
\(\left\langle\bfz, \bfw\right\rangle\) and \(\bfz^\intercal\bfw\), chosen to enhance the clarity of exposition. The $i$-the coordinate of a vector $\bfw$ is denoted by $\bfw^{(i)}$. The natural logarithm is represented by \(\log\).

We define \([n]\define \inset{1,\cdots,n}\) and $\mathbb{S}^d$ as the set of unit-norm vectors in $\mathbb{R}^d$\ie $\mathbb{S}^d = \left\{\bfx \in \mathbb{R}^d \st \twonorm{\bfx} = 1\right\}$. Given a convex set $\mathcal{W} \subseteq \mathbb{R}^d$, the Euclidean projection of a vector $\bfw\in\mathbb{R}^d$ on $\mathcal{W}$ is $\Pi_\cW(\bfw) \define \argmin_{\bfw^\prime \in  \mathcal{W}} \Vert \bfw^\prime- \bfw \Vert_2^2$. %
We denote the operation of uniformly sampling a vector $\bfz$ from $\mathbb{S}^d$ as $\bfz \sim \mathbb{S}^d$. Similarly, sampling independently and uniformly a set of vectors $Z = \left\{\bfz_1, \bfz_2, \dots, \bfz_n\right\}$ is denoted as either $Z \sim \mathbb{S}^d$ or $\bfz_1,\bfz_2, \dots, \bfz_n \sim \mathbb{S}^d$.

\textbf{Federated Learning.} The model consists of a network comprising of a federator and $m$ clients, denoted by the indices
$\left\{1, 2, \ldots, m\right\}$. Each client $i$ has a set of data samples  $ \cD_i \define \left\{\bgamma^{i,1},\ldots,\bgamma^{i,\inabs{\cD_i}}\right\} \subseteq \Gamma$, where $\dataspace$ is the data space.
Let $\cD \define \bigcup_{i\in[m]} \cD_i$ be the global dataset. The loss function, denoted by $f(\bfw;{\bgamma})$, is defined with respect to the
parameter vector $\bfw \in \mathcal{W} \subseteq \mathbb{R}^d$ and a data vector ${\bgamma}\in \dataspace$.

The federator exchanges messages with clients over multiple rounds with the goal of minimizing the statistical loss $\hat{F}(\bfw) = \frac{1}{\inabs{\cD}}\sum_{\bgamma \in \cD} f(\bfw,\bgamma)$. At training step $t \geq 0$, the federator sends a model vector $\bfw^t$ to the clients.  Each client $i$ then computes an update message $\bfg_i^t$ based on the federator's model vector $\bfw^t$ and its local dataset $\cD_i$ and sends the result back to the federator. Finally, the federator updates the model to $\bfw^{t+1}$ as a function of $\bfw^t$ and $\{\bfg^t_i:i\in[m]\}$. At the end of training step $T$, the federator outputs its learned model $\bfw^T$.

\textbf{Adversarial Model.} Byzantine clients behave as in \cref{def:attack}. A fraction $\alpha m$ (unknown to the federator), $0\leq \alpha < 1/2$, of the clients are Byzantine. Byzantine clients' indices are denoted by $\mathcal{B}$. 

\begin{definition}[Attack Model]\label{def:attack}
A Byzantine client $b \in \mathcal{B}$ has complete knowledge of the vectors transmitted by all other clients to the federator at each training step. Given this knowledge, it may send arbitrary vectors to the federator, denoted as $*$, aiming to disrupt the optimization procedure.
\end{definition}

\section{Robust Zero-Order Federated Learning}
We introduce \algname, a Byzantine-resilient federated zero-order optimization. \algname builds on the
principles of MeZO \cite{malladiFineTuningLanguageModels2023}, a memory-efficient centralized algorithm. In addition to providing Byzantine resilience, \algname significantly improves the communication efficiency compared to FedZO \cite{fang2022communication}, the FL counterpart of MeZO. \algname is given in Algorithm~\ref{alg:theoretical} and is explained next. 

While the components of \algname are separately well-studied, their successful combination yields subtle but significant complexities. Extending centralized zero-order algorithms to Byzantine-resilient FL requires ensuring that the choice of the vectors $\bfz_1,\cdots,\bfz_k$ does not compromise resiliency and still allows convergence. In addition, proving that the robustness of a trimmed mean operation, meant to operate on the coordinates of gradient vectors, extends to the zero-order estimator requires novel theoretical tools. Further challenges include practical validation of the proposed algorithm and investigating zero-order fine-tuning in federated large language models. We provide next a formal explanation of the main concepts and the algorithm.

\begin{algorithm}[t]
    \caption{\algname}
    \label{alg:theoretical}
    \begin{algorithmic}
        \STATE {\bfseries Input: } $\bfw_0 \in \mathcal{W}$ is the initial model parameter vector, $\beta$ is the trimmed mean factor, $\eta$ is the learning rate, $\mu$ is the perturbation step, $k$ is the number of samples per estimate and $T$ is the total number of learning steps.
        \FOR{$t=0$ \textbf{to} $T$}
        \STATE $\blacktriangleright$ \textbf{Federator}
        \STATE $\;\;\;$ Samples $Z^t = \left\{\bfz_1^t, \bfz_2^t, \dots, \bfz_k^t\right\} \sim \mathbb{S}^d$
        \STATE $\;\;\;$ Distributes $\bfw^t = \bfw^t$ to each client
        \FOR{$i=0$ \textbf{to} $m$ \textbf{in parallel}}
        \STATE $\triangleright$ \textbf{client} $i$
        \STATE $\;\;\;$ \textbf{for each} $\bfz_r^t\in Z^t$ \textbf{do}
        \STATE $\;\;\;\;\;\;$ Compute and send $\ginorm(\bfw^t, \bfz_r^t\muextraarg)$ to federator, cf. \cref{def:zero_order}
        \STATE $\;\;\;$ \textbf{end for}
        \ENDFOR
        \STATE $\blacktriangleright$ \textbf{Federator}
        \STATE $\;\;\;$ $\ghat \leftarrow \frac{1}{k}\sum_{r=1}^k \hat{g}_{\beta}(\bfw^t, \bfz_r^t\muextraarg)\bfz_r^t$
        \STATE $\;\;\;$ $\bfw^{t+1} \leftarrow \Pi_\mathcal{W}(\bfw^t - \eta_t \ghat)$
        \ENDFOR
    \end{algorithmic}
\end{algorithm}

\begin{definition}[Zero-Order estimate]\label{def:zero_order}
    For a given $\mu\geq 0$, a vector $\bfz \in \mathbb{S}^d$ and a loss function $f(\mathbf{w , {\bgamma}})$, the \emph{single-sample zero-order estimate} of the gradient $\nabla f(\mathbf{w , {\bgamma}})$ is defined  as
\begin{align*}
    \mathbf{g}(\bfw, \bfz, \bgamma, \mu) =
    \begin{cases}
        d\frac{f(\bfw + \mu\bfz ; {\bgamma}) - f(\bfw - \mu\bfz , {\bgamma})}{2\mu}\bfz, & \text{for } \mu > 0, \\
        d\left\langle \nabla f(\mathbf{w , {\bgamma}}), \bfz \right\rangle \bfz,                                            & \text{for } \mu = 0.
    \end{cases}\end{align*}
\end{definition}

As we consider a fixed $\mu$ throughout the paper, we write $\mathbf{g}(\bfw, \bfz,\bgamma)$ instead of $\mathbf{g}(\bfw, \bfz, \bgamma, \mu)$.

To simplify notation, for each client $i \in [m]$ and each $j \in [\inabs{\cD_i}]$, we define the client partial estimate as $\gij(\bfw, \bfz) \define \mathbf{g}(\bfw, \bfz, \bgamma^{i,j})$ and the \emph{client estimate} as 
    $\gi(\bfw, \bfz) \define \frac{1}{\inabs{\cD_i}}\sum_{j=1}^{\inabs{\cD_i}} \gij(\bfw, \bfz).$ 
We define the norms $\ginorm(\bfw, \bfz) \define \twonorm{\gi(\bfw, \bfz)}$ and $\ggammanorm(\bfw, \bfz, \bgamma, \mu) \define \mathbf{g}(\bfw, \bfz, \bgamma, \mu)$.

\begin{definition}[Trimmed Mean (adopted from \cite{yinByzantineRobustDistributedLearning2021})]
    Given $0\leq \beta <1/2$ and a multiset $X = \left\{x_1, x_2, \cdots, x_m\right\}$, the trimmed mean operation is defined as
    $ %
        \Trmean_\beta\left\{X\right\} \define \frac{1}{m - 2\lfloor \beta m \rfloor}\sum_{x\in X_\beta} x,
    $ %
    where $X_\beta$ is obtained by removing the largest and smallest $\lfloor \beta m \rfloor$ elements from $X$.
\end{definition}

\textbf{Computation Procedure.} At training step $t$, the federator shares the model $\bfw^t$ and $k$ vectors $\bfz_1^{t}, \dots, \bfz_k^{t}\sim \mathbb{S}^d$ with the clients. Client $i$ computes $\gi(\bfw^t,\bfz_r^t)$ and sends $\ginorm(\bfw^t,\bfz_r^t)$, $r\in [k]$, to the federator. Byzantine clients send arbitrary vectors denoted by *. The robust aggregation proceeds as follows. For each $r\in[k]$, the federator first computes $\hat{g}_{\beta}(\bfw^t, \bfz_r^t)$ as
$$\hat{g}_{\beta}(\bfw^t, \bfz_r^t) \define \Trmean_{\beta}\left\{\left\{\ginorm(\bfw^t,\bfz_r^t)%
:i \in [m] \setminus \mathcal{B} \right\} \cup \left\{*: i\in \mathcal{B}\right\}\right\}.$$ 
Next, the federator computes the gradient estimate $\ghat = \frac{1}{k}\sum_{r=1}^k \hat{g}_{\beta}(\bfw^t, \bfz_r^t) \bfz_r^t$ and updates the model as $\bfw^{t+1} = \Pi_\mathcal{W}(\bfw^t - \eta_t \ghat)$ for a given learning rate $\eta_t$. To reduce the communication cost, the federator can only broadcast $\hat{g}_{\beta}(\bfw^t, \bfz_r^t), \forall r \in [k]$ and each client reconstruct $\bfw^{t+1}$ individually.

\subsection{Properties of \algname}
Our proposed algorithm exhibits Byzantine resilience and communication and memory efficiency. Those are notable properties contributing to its effectiveness in federated learning.

\textbf{Byzantine Resilience.} A key component of our robust aggregation is that it aligns with the compression mechanism used by the clients. The transmitted values can be perceived as a linear $k$-dimensional compressed representation of the gradient. This format is inherently amenable to scalar robustness procedures, such as the trimmed mean.

\textbf{Communication Efficiency.} \algname compresses $d$-dimensional vector into $k$ real values, offering low communication costs. At first glance, \algname appears to rely on transmitting the $d$-dimensional vectors $\bfz_1^t,\dots,\bfz_k^t$ at each training step. However, this overhead is efficiently mitigated through a simple yet effective strategy: utilizing a shared common seed among clients and the federator. The federator disseminates this seed, which is then used by the clients to sample the perturbation directions $\bfz_1^t,\dots,\bfz_k^t$ in a coordinated manner. Local updates (such as those in \cite{fang2022communication}) can further reduce the communication cost. For clarity of exposition, we provide \algname with local updates in \cref{alg:practical-le} in the appendix, together with corresponding numerical experiments. %

\textbf{Memory Efficiency.} By using Zero-order approximation, which inherently does not necessitate backpropagation, \algname saves significantly on memory (by up to a factor of $12$) compared to traditional training methods relying on backpropagation, cf. \cite{malladiFineTuningLanguageModels2023}. %
Furthermore, \algname adopts in-place perturbations on model parameters, a technique also used in MeZO \cite{malladiFineTuningLanguageModels2023}, to further reduce memory usage. 

Under a different setting, $\mu = 0$, \algname's gradient estimates are computed by projecting the true gradient along different directions. This removes the memory efficiency property in exchange for better computational efficiency, since instead of calculating $2k$ function evaluations, only a single gradient and $k$ projection calculations are needed.

To the best of our knowledge, \algname is the first application of Byzantine resilience in zero-order compressed information scenarios. In contrast, for any case other than $k=1$, coordinate-wise methods \cite{wright2015coordinate} do not allow the same compression mechanism to occur in the federator-to-client communication.

Despite its advantages, our algorithm provides a weak privacy guarantee of the clients' data ensured by the fact the clients only transmit a harshly compressed version of the gradient estimate. Ensuring strong privacy guarantees are known to have a tension with ensuring robustness \cite{dong2021oblivious,lu2023robust,xia2024byzantineresilient}. Investigating privacy, robustness and compression is a very interesting future research direction. 
While fairness can be favored by robust aggregation and enhanced through regularizers to the loss function, pruning outliers may pose additional challenges. Thoroughly analyzing the fairness guarantee is out of the scope of this work.

\section{Experiments}
\label{sec:experiments}
We present a series of experiments showing the performance of \algname across various scenarios. 
First, we %
employ a Logistic Regression model on MNIST \cite{lecun1998gradient} to investigate
the parameters' influence on the convergence of \algname. This setting serves as a
foundational test with insights into the baseline performance and parameter sensitivities
of \algname in a controlled environment. For a more comprehensive understanding of \algname's performance reaching towards advanced applications in Natural Language Processing (NLP), we extend our examination to federated fine-tuning of large language models (LLMs). %

While Algorithm \ref{alg:theoretical} completely specifies the behavior of \algname, Appendix \ref{app:algorthm} presents an extended algorithm utilized for our experiments, highlighting the memory and communication optimizations in our implementation. The basic functionality of the algorithm remains unchanged.

\subsection{Experimental Setup}
The key conditions for our experiments are client data distribution and simulating Byzantine behaviors, explained next.  General simulation parameters and hyperparameters are shown in Appendix \ref{app:params}. Tables and figures displaying standard deviation measures (denoted by the $\pm$ sign) represent the average outcomes of three independent simulation runs, initialized by different random seeds.

\textbf{Data Distribution.} We investigate two distinct data distribution scenarios: independent and identically distributed (IID) data and non-IID data. In the IID scenario, data labels are uniformly distributed across all participating clients, ensuring an equal representation of each label in the local datasets. Conversely, in the non-IID scenario, we assign a unique label set to each client, thereby creating
a skewed label distribution and introducing additional complexity in
the learning process.

\textbf{Byzantine Behavior.} We consider a worst-case scenario wherein Byzantine clients
are fully aware of the communication protocol and the transmissions of other clients.
Byzantine clients can collude and act in a deliberately adversarial manner.
Our approach draws inspiration from the attack model described in \cite{fang2020local},
\emph{i.e.}, focusing on maximizing the local gradient deviation at each training step. This is
achieved by strategic manipulation of the information sent to the federator. %
In this attack, the Byzantine clients compute the true gradient estimate obtained from the honest clients. If that estimate is positive, they all send a value equal to the $\lfloor \beta m \rfloor$-th smallest honest gradient value. Otherwise, they all send a value equal to the $\lfloor \beta m \rfloor$-th largest honest gradient value. This attack is called Full Knowledge and described in Algorithm~\ref{alg:attack}.

For the MNIST experiments, we compare this choice to other Byzantine behaviors to display its effectiveness. In particular, we compare it to three model poisoning strategies: Always Small, Always Large, and Random Choice, in which Byzantine clients all send either the $\lfloor \beta m\rfloor$-th smallest, largest or randomly pick one of them, respectively, for each perturbation direction. And a data positioning strategy, Label Flipping, in which the Byzantine devices switch each MNIST label from $\ell$ to $9-\ell$.

\begin{figure*}[t]
    \centering
    \subfigure[Accuracy vs. Communication.]{\begin{minipage}{0.32\textwidth}\resizebox{\textwidth}{!}{\begin{tikzpicture}
                \begin{axis}[
                        xlabel={Communication Cost (in scalars)},
                        ylabel={Test Accuracy},
                        legend pos=south east,
                        grid=both,
                        xmin=1, xmax=3140000,
                        ymin=-19,
                        xmode=log,log basis x=10,
                         xmajorgrids=true, 
                        xminorgrids=false, 
                        scale only axis,
                        height=0.67\textwidth,
                        width=\textwidth,
                        legend style={nodes={scale=0.7, transform shape}}, 
                        xtick pos=left,
                    ]

\addplot [blue, thick] table[col sep=comma, x expr=7850*\thisrow{Epoch} + 1, y=Loss] {data/k_comp/acc_iidinf_BYZ0_LR0.01True_gradient__b=0__mode=MIN_ACCEPTED.csv};
                    \addlegendentry{FedAvg}

\addplot [purple, thick] table[col sep=comma, x expr=64*\thisrow{Epoch} + 1, y=Loss] {data/k_comp/acc_iidinf_BYZ0_LR0.01k=64__b=0__mode=MIN_ACCEPTED.csv};
                    \addlegendentry{$k=64$}

\addplot [green, thick] table[col sep=comma, x expr=16*\thisrow{Epoch} +1, y=Loss] {data/k_comp/acc_iidinf_BYZ0_LR0.01k=16__b=0__mode=MIN_ACCEPTED.csv};
                    \addlegendentry{$k=16$}

\addplot [red, thick] table[col sep=comma, x expr=4*\thisrow{Epoch}+1, y=Loss] {data/k_comp/acc_iidinf_BYZ0_LR0.01k=4__b=0__mode=MIN_ACCEPTED.csv};
                    \addlegendentry{$k=4$}

\addplot [orange, thick] table[col sep=comma, x expr=\thisrow{Epoch} + 1, y=Loss] {data/k_comp/acc_iidinf_BYZ0_LR0.01k=1__b=0__mode=MIN_ACCEPTED.csv};
                    \addlegendentry{$k=1$}

                    \addplot [name path=upper,draw=none] table[col sep=comma, x expr=7850*\thisrow{Epoch}+1, y expr=\thisrow{Loss}+\thisrow{StdDev}] {data/k_comp/acc_iidinf_BYZ0_LR0.01True_gradient__b=0__mode=MIN_ACCEPTED.csv};
                    \addplot [name path=lower,draw=none] table[col sep=comma, x expr=7850*\thisrow{Epoch}+1, y expr=\thisrow{Loss}-\thisrow{StdDev}] {data/k_comp/acc_iidinf_BYZ0_LR0.01True_gradient__b=0__mode=MIN_ACCEPTED.csv};
                    \addplot [fill=blue!10] fill between[of=upper and lower];

                    \addplot [name path=upper,draw=none] table[col sep=comma, x expr=1*\thisrow{Epoch}+1, y expr=\thisrow{Loss}+\thisrow{StdDev}] {data/k_comp/acc_iidinf_BYZ0_LR0.01k=1__b=0__mode=MIN_ACCEPTED.csv};
                    \addplot [name path=lower,draw=none] table[col sep=comma, x expr=1*\thisrow{Epoch}+1, y expr=\thisrow{Loss}-\thisrow{StdDev}] {data/k_comp/acc_iidinf_BYZ0_LR0.01k=1__b=0__mode=MIN_ACCEPTED.csv};
                    \addplot [fill=orange!10] fill between[of=upper and lower];

                    \addplot [name path=upper,draw=none] table[col sep=comma, x expr=4*\thisrow{Epoch}+1, y expr=\thisrow{Loss}+\thisrow{StdDev}] {data/k_comp/acc_iidinf_BYZ0_LR0.01k=4__b=0__mode=MIN_ACCEPTED.csv};
                    \addplot [name path=lower,draw=none] table[col sep=comma, x expr=4*\thisrow{Epoch}+1, y expr=\thisrow{Loss}-\thisrow{StdDev}] {data/k_comp/acc_iidinf_BYZ0_LR0.01k=4__b=0__mode=MIN_ACCEPTED.csv};
                    \addplot [fill=red!10] fill between[of=upper and lower];

                    \addplot [name path=upper,draw=none] table[col sep=comma, x expr=16*\thisrow{Epoch}+1, y expr=\thisrow{Loss}+\thisrow{StdDev}] {data/k_comp/acc_iidinf_BYZ0_LR0.01k=16__b=0__mode=MIN_ACCEPTED.csv};
                    \addplot [name path=lower,draw=none] table[col sep=comma, x expr=16*\thisrow{Epoch}+1, y expr=\thisrow{Loss}-\thisrow{StdDev}] {data/k_comp/acc_iidinf_BYZ0_LR0.01k=16__b=0__mode=MIN_ACCEPTED.csv};
                    \addplot [fill=green!10] fill between[of=upper and lower];

                    \addplot [name path=upper,draw=none] table[col sep=comma, x expr=64*\thisrow{Epoch}+1, y expr=\thisrow{Loss}+\thisrow{StdDev}] {data/k_comp/acc_iidinf_BYZ0_LR0.01k=64__b=0__mode=MIN_ACCEPTED.csv};
                    \addplot [name path=lower,draw=none] table[col sep=comma,  x expr=64*\thisrow{Epoch}+1, y expr=\thisrow{Loss}-\thisrow{StdDev}] {data/k_comp/acc_iidinf_BYZ0_LR0.01k=64__b=0__mode=MIN_ACCEPTED.csv};
                    \addplot [fill=purple!10] fill between[of=upper and lower];
                \end{axis}
            \end{tikzpicture}}\end{minipage}\label{fig:k-comparison-mnist-comm}} 
    \subfigure[Accuracy vs. Steps.]{\begin{minipage}{0.32\textwidth}\resizebox{\textwidth}{!}{\begin{tikzpicture}
                \begin{axis}[
                        xlabel={Steps},
                        ylabel={Test Accuracy},
                        legend pos=south east,
                        grid=both,
                        xmin=1, xmax=400,
                        legend style={nodes={scale=0.7, transform shape}}, 
                        scale only axis,
                        height=0.67\textwidth,
                        width=\textwidth,
                    ]

\addplot [blue, thick] table[col sep=comma, x expr=\thisrow{Epoch} + 1, y=Loss] {data/k_comp/acc_iidinf_BYZ0_LR0.01True_gradient__b=0__mode=MIN_ACCEPTED.csv};
                    \addlegendentry{FedAvg}

\addplot [purple, thick] table[col sep=comma, x expr=\thisrow{Epoch} + 1, y=Loss] {data/k_comp/acc_iidinf_BYZ0_LR0.01k=64__b=0__mode=MIN_ACCEPTED.csv};
                    \addlegendentry{$k=64$}

\addplot [green, thick] table[col sep=comma, x expr=\thisrow{Epoch} +1, y=Loss] {data/k_comp/acc_iidinf_BYZ0_LR0.01k=16__b=0__mode=MIN_ACCEPTED.csv};
                    \addlegendentry{$k=16$}

\addplot [red, thick] table[col sep=comma, x expr=\thisrow{Epoch}+1, y=Loss] {data/k_comp/acc_iidinf_BYZ0_LR0.01k=4__b=0__mode=MIN_ACCEPTED.csv};
                    \addlegendentry{$k=4$}

\addplot [orange, thick] table[col sep=comma, x expr=\thisrow{Epoch} + 1, y=Loss] {data/k_comp/acc_iidinf_BYZ0_LR0.01k=1__b=0__mode=MIN_ACCEPTED.csv};
                    \addlegendentry{$k=1$}

                    \addplot [name path=upper,draw=none] table[col sep=comma, x expr=\thisrow{Epoch}+1, y expr=\thisrow{Loss}+\thisrow{StdDev}] {data/k_comp/acc_iidinf_BYZ0_LR0.01True_gradient__b=0__mode=MIN_ACCEPTED.csv};
                    \addplot [name path=lower,draw=none] table[col sep=comma, x expr=\thisrow{Epoch}+1, y expr=\thisrow{Loss}-\thisrow{StdDev}] {data/k_comp/acc_iidinf_BYZ0_LR0.01True_gradient__b=0__mode=MIN_ACCEPTED.csv};
                    \addplot [fill=blue!10] fill between[of=upper and lower];

                    \addplot [name path=upper,draw=none] table[col sep=comma, x expr=\thisrow{Epoch}+1, y expr=\thisrow{Loss}+\thisrow{StdDev}] {data/k_comp/acc_iidinf_BYZ0_LR0.01k=1__b=0__mode=MIN_ACCEPTED.csv};
                    \addplot [name path=lower,draw=none] table[col sep=comma, x expr=\thisrow{Epoch}+1, y expr=\thisrow{Loss}-\thisrow{StdDev}] {data/k_comp/acc_iidinf_BYZ0_LR0.01k=1__b=0__mode=MIN_ACCEPTED.csv};
                    \addplot [fill=orange!10] fill between[of=upper and lower];

                    \addplot [name path=upper,draw=none] table[col sep=comma, x expr=\thisrow{Epoch}+1, y expr=\thisrow{Loss}+\thisrow{StdDev}] {data/k_comp/acc_iidinf_BYZ0_LR0.01k=4__b=0__mode=MIN_ACCEPTED.csv};
                    \addplot [name path=lower,draw=none] table[col sep=comma, x expr=\thisrow{Epoch}+1, y expr=\thisrow{Loss}-\thisrow{StdDev}] {data/k_comp/acc_iidinf_BYZ0_LR0.01k=4__b=0__mode=MIN_ACCEPTED.csv};
                    \addplot [fill=red!10] fill between[of=upper and lower];

                    \addplot [name path=upper,draw=none] table[col sep=comma, x expr=\thisrow{Epoch}+1, y expr=\thisrow{Loss}+\thisrow{StdDev}] {data/k_comp/acc_iidinf_BYZ0_LR0.01k=16__b=0__mode=MIN_ACCEPTED.csv};
                    \addplot [name path=lower,draw=none] table[col sep=comma, x expr=\thisrow{Epoch}+1, y expr=\thisrow{Loss}-\thisrow{StdDev}] {data/k_comp/acc_iidinf_BYZ0_LR0.01k=16__b=0__mode=MIN_ACCEPTED.csv};
                    \addplot [fill=green!10] fill between[of=upper and lower];

                    \addplot [name path=upper,draw=none] table[col sep=comma, x expr=\thisrow{Epoch}+1, y expr=\thisrow{Loss}+\thisrow{StdDev}] {data/k_comp/acc_iidinf_BYZ0_LR0.01k=64__b=0__mode=MIN_ACCEPTED.csv};
                    \addplot [name path=lower,draw=none] table[col sep=comma,  x expr=\thisrow{Epoch}+1, y expr=\thisrow{Loss}-\thisrow{StdDev}] {data/k_comp/acc_iidinf_BYZ0_LR0.01k=64__b=0__mode=MIN_ACCEPTED.csv};
                    \addplot [fill=purple!10] fill between[of=upper and lower];
                \end{axis}
\end{tikzpicture}}\end{minipage}\label{fig:k-comparison-mnist-steps}} 
    \subfigure[Accuracy vs. Attack model.]{\begin{minipage}{0.32\textwidth}\resizebox{\textwidth}{!}{\begin{tikzpicture}
                \begin{axis}[
                    xlabel={Steps},
                    ylabel={Validation Accuracy},
                    legend style={nodes={scale=0.7, transform shape}}, 
                    legend pos=south east,
                    grid=both,
                    xmin=0, xmax=400,
                    ymin=60, ymax=95,
                    scale only axis,
                    height=0.67\textwidth,
                    width=\textwidth,
                    ]

                    \addplot [blue, thick] table[col sep=comma, x=Epoch, y=Acc] {data/byz_comparison2/EPOCHS400-TOTAL_WORKERS100-BATCHES_PER_WORKER1-BYZANTINE_WORKERS25-MU0_001-MODEOURS-K64-BYZ_MODEMAX_ACCEPTED-DATA_DISTRIBUTIONnon-ii.csv};
                    \addlegendentry{Always Large}

                    \addplot [brown, thick] table[col sep=comma, x=Epoch, y=Acc] {data/byz_comparison2/EPOCHS400-TOTAL_WORKERS100-BATCHES_PER_WORKER1-BYZANTINE_WORKERS25-MU0_001-MODEOURS-K64-BYZ_MODEMIN_ACCEPTED-DATA_DISTRIBUTIONnon-ii.csv};
                    \addlegendentry{Always Small}

                    \addplot [orange, thick] table[col sep=comma, x=Epoch, y=Acc] {data/byz_comparison2/EPOCHS400-TOTAL_WORKERS100-BATCHES_PER_WORKER1-BYZANTINE_WORKERS25-MU0_001-MODEOURS-K64-BYZ_MODERAND_ACCEPTED-DATA_DISTRIBUTIONnon-ii.csv};
                    \addlegendentry{Random Choice}

                    \addplot [green, thick] table[col sep=comma, x=Epoch, y=Acc] {data/byz_comparison2/EPOCHS400-TOTAL_WORKERS100-BATCHES_PER_WORKER1-BYZANTINE_WORKERS25-MU0_001-MODEOURS-K64-BYZ_MODELABEL_FLIPPING-DATA_DISTRIBUTIONnon-ii.csv};
                    \addlegendentry{Label Flipping}

                    \addplot [red, thick] table[col sep=comma, x=Epoch, y=Acc] {data/byz_comparison2/EPOCHS400-TOTAL_WORKERS100-BATCHES_PER_WORKER1-BYZANTINE_WORKERS25-MU0_001-MODEOURS-K64-BYZ_MODEFK-DATA_DISTRIBUTIONnon-ii.csv};
                    \addlegendentry{Full-Knowledge}

                    \addplot [name path=upper,draw=none] table[col sep=comma, x expr=\thisrow{Epoch}, y expr=\thisrow{Acc}+\thisrow{StdDev}] {data/byz_comparison2/EPOCHS400-TOTAL_WORKERS100-BATCHES_PER_WORKER1-BYZANTINE_WORKERS25-MU0_001-MODEOURS-K64-BYZ_MODEFK-DATA_DISTRIBUTIONnon-ii.csv};
                    \addplot [name path=lower,draw=none] table[col sep=comma, x expr=\thisrow{Epoch}, y expr=\thisrow{Acc}-\thisrow{StdDev}] {data/byz_comparison2/EPOCHS400-TOTAL_WORKERS100-BATCHES_PER_WORKER1-BYZANTINE_WORKERS25-MU0_001-MODEOURS-K64-BYZ_MODEFK-DATA_DISTRIBUTIONnon-ii.csv};
                    \addplot [fill=red!10] fill between[of=upper and lower];

                    \addplot [name path=upper,draw=none] table[col sep=comma, x expr=\thisrow{Epoch}, y expr=\thisrow{Acc}+\thisrow{StdDev}] {data/byz_comparison2/EPOCHS400-TOTAL_WORKERS100-BATCHES_PER_WORKER1-BYZANTINE_WORKERS25-MU0_001-MODEOURS-K64-BYZ_MODEMAX_ACCEPTED-DATA_DISTRIBUTIONnon-ii.csv};
                    \addplot [name path=lower,draw=none] table[col sep=comma, x expr=\thisrow{Epoch}, y expr=\thisrow{Acc}-\thisrow{StdDev}] {data/byz_comparison2/EPOCHS400-TOTAL_WORKERS100-BATCHES_PER_WORKER1-BYZANTINE_WORKERS25-MU0_001-MODEOURS-K64-BYZ_MODEMAX_ACCEPTED-DATA_DISTRIBUTIONnon-ii.csv};
                    \addplot [fill=blue!10] fill between[of=upper and lower];

                    \addplot [name path=upper,draw=none] table[col sep=comma, x expr=\thisrow{Epoch}, y expr=\thisrow{Acc}+\thisrow{StdDev}] {data/byz_comparison2/EPOCHS400-TOTAL_WORKERS100-BATCHES_PER_WORKER1-BYZANTINE_WORKERS25-MU0_001-MODEOURS-K64-BYZ_MODEMIN_ACCEPTED-DATA_DISTRIBUTIONnon-ii.csv};
                    \addplot [name path=lower,draw=none] table[col sep=comma, x expr=\thisrow{Epoch}, y expr=\thisrow{Acc}-\thisrow{StdDev}] {data/byz_comparison2/EPOCHS400-TOTAL_WORKERS100-BATCHES_PER_WORKER1-BYZANTINE_WORKERS25-MU0_001-MODEOURS-K64-BYZ_MODEMIN_ACCEPTED-DATA_DISTRIBUTIONnon-ii.csv};
                    \addplot [fill=brown!10] fill between[of=upper and lower];

                    \addplot [name path=upper,draw=none] table[col sep=comma, x expr=\thisrow{Epoch}, y expr=\thisrow{Acc}+\thisrow{StdDev}] {data/byz_comparison2/EPOCHS400-TOTAL_WORKERS100-BATCHES_PER_WORKER1-BYZANTINE_WORKERS25-MU0_001-MODEOURS-K64-BYZ_MODERAND_ACCEPTED-DATA_DISTRIBUTIONnon-ii.csv};
                    \addplot [name path=lower,draw=none] table[col sep=comma, x expr=\thisrow{Epoch}, y expr=\thisrow{Acc}-\thisrow{StdDev}] {data/byz_comparison2/EPOCHS400-TOTAL_WORKERS100-BATCHES_PER_WORKER1-BYZANTINE_WORKERS25-MU0_001-MODEOURS-K64-BYZ_MODERAND_ACCEPTED-DATA_DISTRIBUTIONnon-ii.csv};
                    \addplot [fill=orange!10] fill between[of=upper and lower];

                                      \addplot [name path=upper,draw=none] table[col sep=comma,  x=Epoch, y expr=\thisrow{Acc}+\thisrow{StdDev}] {data/byz_comparison2/EPOCHS400-TOTAL_WORKERS100-BATCHES_PER_WORKER1-BYZANTINE_WORKERS25-MU0_001-MODEOURS-K64-BYZ_MODELABEL_FLIPPING-DATA_DISTRIBUTIONnon-ii.csv};
                    \addplot [name path=lower,draw=none] table[col sep=comma,  x=Epoch, y expr=\thisrow{Acc}-\thisrow{StdDev}] {data/byz_comparison2/EPOCHS400-TOTAL_WORKERS100-BATCHES_PER_WORKER1-BYZANTINE_WORKERS25-MU0_001-MODEOURS-K64-BYZ_MODELABEL_FLIPPING-DATA_DISTRIBUTIONnon-ii.csv};
                    \addplot [fill=green!10] fill between[of=upper and lower];
                \end{axis}
            \end{tikzpicture}}\end{minipage}\label{fig:byz-comparison-mnist}}
    \vspace{-.3cm}
    \caption{\algname for logistic regression on MNIST under non-IID data distribution. Figures~(a) and~(b) show the convergence for varying $k$ in the absence of Byzantine clients compared to federated averaging (FedAvg). Figure~(c) shows different attacks for $k=64$ and $\alpha=\beta=0.25$.\vspace{-.2cm}}
\end{figure*}
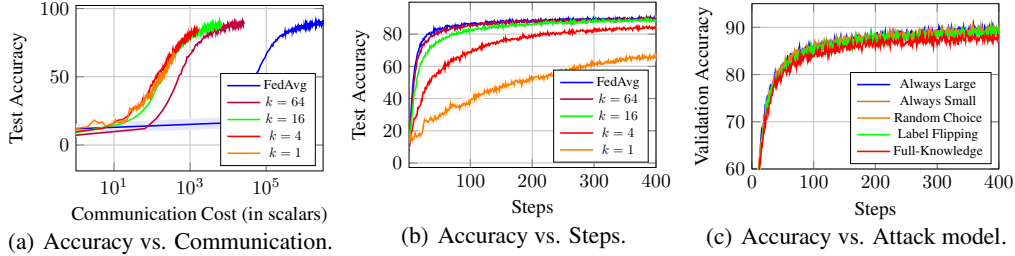

\begin{table}[b!]
    \caption{Comparison with state-of-the-art: We compare the test accuracies of \algname ($k=64$) with the trimmed mean and Krum algorithms. We use $\beta = \alpha$ for all experiments, the Full-Knowledge attack for \algname, and the model poisoning attacks described in \cite{fang2020local} for trimmed mean and Krum. }
    \vspace{-.3cm}
    \label{tab:m-comparison-state-of-the-art}
    \begin{center}
        \begin{small}
            \begin{sc}
                \begin{tabular}{lcccr}
                    \toprule
                    Algorithm &$\alpha=0.125$ & $\alpha=0.25$ &  $\alpha=0.375$\\
                    \midrule
                   Krum   & 69.2 $\pm$ 1.6      & 10.2$\pm$1.2 & 6.9$\pm$2.5   \\
                   Trimmed Mean& 86.6 $\pm$ 0.3    & 74.8 $\pm$ 1.1 &  36.0 $\pm$ 4.1  \\
                   \algname  & 87.1$\pm$ 0.8    & 80.8 $\pm$ 1.0 & 60.3 $\pm$ 2.9   \\

                    \bottomrule
                \end{tabular}
            \end{sc}
        \end{small}
    \end{center}
    \vskip -0.1in
\end{table}

\subsection{\algname with Logistic Regression on MNIST}
\label{sec:mnist-experiments}
We explore the effect of the sample size $k$ and the clients' behavior on the convergence of \algname. We then compare \algname with coordinate-wise trimmed mean \cite{yinByzantineRobustDistributedLearning2021}, which comes closest in spirit.

\textbf{Impact of Sample Size $k$.} 
The training loss trajectories for diverse settings of the sample size parameter $k$ are illustrated in Figure~\ref{fig:k-comparison-mnist-comm} over the cost of communication, and in Figure~\ref{fig:k-comparison-mnist-steps} over the number of steps. We include the conventional federated averaging (FedAvg) \cite{mcmahan2017communication} as a benchmark.

We observe a notable trend: as the value of $k$ increases, the convergence rate progressively
aligns with that of standard SGD. This aligns with theoretical expectations in Section \ref{sec:theory}, as
a larger sample size $k$ yields a sample mean that more closely approximates the true
gradient of the loss function.

\textbf{Effect of Byzantine Client Behavior.} In Figure \ref{fig:byz-comparison-mnist}, we present a comparative analysis of the training loss dynamics under different Byzantine
client behaviors. Our results show that the Full-Knowledge strategy presents the highest damage to the 
training process by causing the most substantial delay in convergence. This phenomenon underscores 
the potency of informed adversarial behaviors in disrupting the learning process.

\textbf{Comparison with State-of-the-Art.}
We compare the performance of \algname with the trimmed 
mean \cite{yinByzantineRobustDistributedLearning2021} and 
Krum \cite{blanchardMachineLearningAdversaries2017}. For this 
comparison, we apply the Full-Knowledge attack on \algname, 
while applying the model poisoning attacks from \cite{fang2020local} 
for the state-of-the-art approaches, as those are the most 
effective attacks against the respective algorithms. The results are 
shown in Table \ref{tab:m-comparison-state-of-the-art}. 

\algname provides better Byzantine resilience
for all $\alpha$ values while achieving a roughly 100-fold communication reduction. It can be assumed that the projection over the 
random directions leaves fewer degrees if freedom for the Byzantine clients to change 
the aggregated gradient, hence providing good Byzantine resilience and allowing for communication efficiency.

\subsection{Fine-Tuning Language Models with \algname}
\label{sec:experiments-ft}
Following a methodology similar to \cite{malladiFineTuningLanguageModels2023}, we utilize the RoBERTa-large model
\cite{liu2019roberta} for three distinct NLP tasks: sentiment analysis, natural language inference (NLI) and topic classification. For sentiment
analysis, we employ the SST-2 dataset \cite{socher2013recursive}. For NLI, we employ the SNLI dataset \cite{bowman2015large}. For topic classification, we use the
TREC dataset \cite{voorhees2000building}.  We adopt a prompt-based fine-tuning approach in a few-shot learning framework, as outlined by \cite{brown2020language}. Fine-tuning LLMs is well-established in the literature. The details are omitted here for brevity, interested readers are referred to \cite{brown2020language} for details. 

We use a set of $512$ data points distributed among the clients according to the specified data
distribution pattern. These experiments intend to show the applicability of \algname
in more complex and real-world scenarios, particularly in the increasingly relevant field of NLP.

\begin{table*}[t!]
    \caption{\algname in adversarial (\algname, Byzantine) and non-adversarial settings.} %
    \label{tab:main-comparison}
    \vspace{-.1cm}
    \begin{center}
        \begin{small}
            \begin{sc}
                \begin{tabular}{@{\extracolsep{8pt}}lccccr}
                    \toprule
                          & \multicolumn{1}{c}{Non-Distributed}        & \multicolumn{4}{c}{\algname}                                                                                                    \\
                    \cline{3-6}
                          & \cite{malladiFineTuningLanguageModels2023} & \multicolumn{2}{c}{Non-Byzantine} & \multicolumn{2}{c}{Byzantine}                                                                   \\
                    \cline{3-4} \cline{5-6}
                          &                                            & IID                          & Non-IID                         & IID                         & Non-IID                         \\
                    \midrule
                    SST-2 & 93.3$\pm$ 0.7                              & 93.1$\pm$ 0.3               & 93.1$\pm$ 0.2                  & 92.9$\pm$ 0.4               & 92.7 $\pm$0.4                   \\
                    TREC  & 94.3$\pm$ 1.3                              & 95.4$\pm$ 0.3    & 95.8$\pm$ 0.4 & 92.1$\pm$ 1.5 & 78.2$\pm$ 0.7 \\
                                        SNLI  & 83.0 $\pm$ 1.0                              & 84.8 $\pm$ 0.3 & 84.6 $\pm$ 0.7 & 80.0 $\pm$ 0.4 & 60.1 $\pm$ 4.9 \\
                    \bottomrule
                \end{tabular}
            \end{sc}
        \end{small}
    \end{center}
    \vskip -0.1in
\end{table*}

\begin{wrapfigure}[13]{r}{5.5cm}
    \vskip -0.3cm
    \begin{center}
    \resizebox{5cm}{!}{
        \centerline{
            \begin{tikzpicture}
                \begin{axis}[
                        xlabel={Steps},
                        ylabel={Training Loss},
                        legend pos=north east,
                        grid=both,
                        xmin=0, xmax=20000,
                        legend style={nodes={scale=0.7, transform shape}}, 
                        scale only axis,
                        height=0.21\textwidth,
                        width=0.35\textwidth,
                    ]

                    Plot for the first dataset
                    Adjust 'col sep' if your CSV uses a different separator

                    \addplot [red, thick] table[col sep=comma, x expr=10*\thisrow{Epoch}, y=Loss] {data/2_non_iid_trimmed_mean_tm_full_knowledge_0.25.csv};
                    \addlegendentry{Byzantine - Non-IID}

                    \addplot [blue, thick] table[col sep=comma, x expr=10*\thisrow{Epoch}, y=Loss] {data/2_iid_trimmed_mean_tm_full_knowledge_0.25.csv};
                    \addlegendentry{Byzantine - IID}

                    \addplot [violet, thick] table[col sep=comma, x expr=10*\thisrow{Epoch}, y=Loss] {data/0_non_iid_naive_tm_full_knowledge_0.25.csv};
                    \addlegendentry{Non-Byzantine - Non-IID}

                    \addplot [teal, thick] table[col sep=comma, x expr=10*\thisrow{Epoch}, y=Loss] {data/0_iid_naive_tm_full_knowledge_0.25.csv};
                    \addlegendentry{Non-Byzantine - IID}

                    \addplot [name path=upper,draw=none] table[col sep=comma, x expr=10*\thisrow{Epoch}, y expr=\thisrow{Loss}+\thisrow{StdDev}] {data/0_iid_naive_tm_full_knowledge_0.25.csv};
                    \addplot [name path=lower,draw=none] table[col sep=comma, x expr=10*\thisrow{Epoch}, y expr=\thisrow{Loss}-\thisrow{StdDev}] {data/0_iid_naive_tm_full_knowledge_0.25.csv};
                    \addplot [fill=teal!10] fill between[of=upper and lower];

                    \addplot [name path=upper,draw=none] table[col sep=comma, x expr=10*\thisrow{Epoch}, y expr=\thisrow{Loss}+\thisrow{StdDev}] {data/2_iid_trimmed_mean_tm_full_knowledge_0.25.csv};
                    \addplot [name path=lower,draw=none] table[col sep=comma, x expr=10*\thisrow{Epoch}, y expr=\thisrow{Loss}-\thisrow{StdDev}] {data/2_iid_trimmed_mean_tm_full_knowledge_0.25.csv};
                    \addplot [fill=blue!10] fill between[of=upper and lower];

                    \addplot [name path=upper,draw=none] table[col sep=comma, x expr=10*\thisrow{Epoch}, y expr=\thisrow{Loss}+\thisrow{StdDev}] {data/0_non_iid_naive_tm_full_knowledge_0.25.csv};
                    \addplot [name path=lower,draw=none] table[col sep=comma, x expr=10*\thisrow{Epoch}, y expr=\thisrow{Loss}-\thisrow{StdDev}] {data/0_non_iid_naive_tm_full_knowledge_0.25.csv};
                    \addplot [fill=violet!10] fill between[of=upper and lower];

                    \addplot [name path=upper,draw=none] table[col sep=comma, x expr=10*\thisrow{Epoch}, y expr=\thisrow{Loss}+\thisrow{StdDev}] {data/2_non_iid_trimmed_mean_tm_full_knowledge_0.25.csv};
                    \addplot [name path=lower,draw=none] table[col sep=comma, x expr=10*\thisrow{Epoch}, y expr=\thisrow{Loss}-\thisrow{StdDev}] {data/2_non_iid_trimmed_mean_tm_full_knowledge_0.25.csv};
                    \addplot [fill=red!10] fill between[of=upper and lower];

                \end{axis}
            \end{tikzpicture}}}
            \vspace{-.3cm}
        \caption{Effect of Byzantine clients on the convergence speed of \algname.}%
        \label{fig:sst-train-loss}
    \end{center}
\end{wrapfigure}
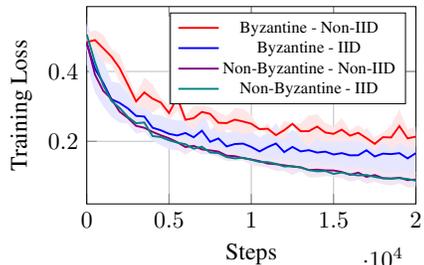
In the classical theory of zero-order optimization \cite{nesterovRandomGradientFreeMinimization2017}, fine-tuning LLMs is deemed to be of prohibitively slow convergence due to the role exercised by the model dimension $d$. Nevertheless, as evidenced by the findings of \cite{salimansEvolutionStrategiesScalable2017}  and \cite{malladiFineTuningLanguageModels2023}, certain problem domains exhibit an inherently low \textit{intrinsic} dimensionality.
The fast convergence of \algname, cf. first and second columns of \cref{tab:main-comparison}, aligns with the findings of \cite{salimansEvolutionStrategiesScalable2017} and \cite{malladiFineTuningLanguageModels2023}.

\textbf{Robustness of \algname.}
The ability of \algname to mitigate the effect of Byzantine clients, using the strong Full-Knowledge attack, can be seen in the right-most column of \cref{tab:main-comparison}. In IID settings, \algname exhibits a small drop in accuracy. However, for non-IID settings, while still converging, \algname exhibits a drop in accuracy in the presence of Byzantine clients. This behavior aligns with the literature on non-IID robust FL. The main reason is that the non-IID data distribution is reflected in the clients' message updates. Making the distinction between malicious gradients and outliers more challenging. %
In addition to affecting the final accuracy, Byzantine clients also decrease the convergence speed of the algorithm. We illustrate this effect on the SST-2 experiment in Figure \ref{fig:sst-train-loss}, which complements the data presented in Table \ref{tab:main-comparison}. %
Similar figures for the SNLI and TREC experiments are given in Appendix~\ref{app:roberta}. %

We further demonstrate communication optimizations and scalability aspects of \algname using RoBERTa-large on the SST-2 dataset. We vary the number of samples per training step and the total number of clients, cf. \cref{tab:k-comparison-sst2}. We provide in Appendix \ref{sec:local-epochs} an extra set of experiments regarding local epochs \cite{mcmahan2017communication}.

\textbf{Number of Samples.}
As observed in Table \ref{tab:k-comparison-sst2}, and consistent with
findings from Section \ref{sec:mnist-experiments}, for fixed values of $m=8$ and $\beta=0.25$, an increase in the number
of samples $k$ correlates with accelerated convergence in terms of steps. \algname presents here a tradeoff between computational workload and communication efficiency. A larger value of $k$ necessitates more forward passes per training step. However, the transmission
of these passes in batches potentially
enhances communication efficiency by reducing
the need for frequent synchronization rounds.

\begin{table}
    \caption{Varying the number of samples ($k$) and the number of clients ($m$) in SST-2 using RoBERTa-large in the presence of Byzantine clients with non-IID data distribution.} \vspace{-.3cm}
    \label{tab:k-comparison-sst2}
    \begin{center}
        \begin{small}
            \begin{sc}
                \begin{tabular}{lcc||lcc}
                    \toprule
                    $k$ & Total Steps & Test Accuracy & $m$  & Byzantine Clients & Test Accuracy \\
                    \midrule
                    1   & 20,000      & 92.7 $\pm$ $0.4$ & 8 & 2 & 92.7 $\pm$ $0.4$        \\
                    2   & 10,000      & 92.5 $\pm$ $0.3$     & 16 & 4 & 92.5 $\pm$ $0.4$  \\
                    4   & 5,000       & 92.8 $\pm$ $0.5$      & 32 & 8 & 92.3 $\pm$ $0.6$ \\
                    8   & 2,500       & 92.7 $\pm$ $0.5$      & & &\\
                    \bottomrule
                \end{tabular}
            \end{sc}
        \end{small}
    \end{center}
    \vspace{-.1cm}

\vskip -0.1in
\end{table}

\textbf{Number of Clients.}
We explore the impact of scaling the number of clients
while maintaining the same ratio of Byzantine to non-Byzantine clients. As observed in Table~\ref{tab:k-comparison-sst2}, for fixed values of $k=1$ and $\beta=0.25$, increasing the number of clients does not significantly affect the final test accuracy. This
outcome aligns with the expectation that similar data distribution among non-Byzantine clients would result
in consistent learning patterns, regardless of the network size.

\section{Theoretical Analysis}
\label{sec:theory}
To complement our extensive numerical results, we provide a theoretical convergence guarantee for \algname for convex loss functions and IID data distribution. The results under those assumptions quantify the interplay between convergence guarantee and the choice of $\mu$, $k$, and $d$ in well-behaved settings and pave the way to an extended analysis for non-convex losses and non-IID data distribution. %

Adopting an approach akin to \cite{yinByzantineRobustDistributedLearning2021}, we establish a
probabilistic bound on the distance between the robust gradient estimation yielded by our method and the ideal
expected gradient obtained in a Byzantine-free context. With such a bound established, we carry a convergence analysis for an
SGD algorithm that operates under bounded-error conditions, using the zero-order
gradient estimate.

\subsection{Preliminaries}
To establish theoretical results, we make an assumption on the data distribution and define the population loss and the zero-order population estimate.

\begin{assumption}[IID Data Distribution]
    Each client $i$ has a set of $n$ data samples $\inset{\bgamma^{i,1}, \dots, \bgamma^{i,n}}$ sampled from a common data distribution $\rvD$.
\end{assumption}

\begin{definition}[Population Loss]
     The population
loss $F(\bfw)$ is expressed as the expected value of the loss over $\cD$ , \emph{i.e.}, \vspace{-.2cm}
$%
F(\bfw) = \mathbb{E}_{{\bgamma} \sim \rvD}[f(\bfw;{\bgamma})]. 
$%
\end{definition}

Associated with the population loss, we have the following optimization problem \vspace{-.1cm}
\begin{align}
    \label{eq:main-opt}
    \bfw^\star = \argmin_{\bfw \in \mathcal{W}} F(\bfw).
\end{align}

\begin{assumption}[Local Minimum]
    \label{ass:local-minimum} The model $\bfw^*$ is a local minimum of $F$.
\end{assumption}
\begin{definition}[Zero-Order Population Estimate]
    \label{def:zo-pop-estimate}
    Let $F$ be the population loss for the optimization problem in \eqref{eq:main-opt}, then we define the zero-order population  estimate by:
\begin{align*}
    \gpop(\bfw, \bfz\muextraarg) & = \mathbb{E}_{\bgamma \sim \rvD}[\ggamma(\bfw, \bfz,\bgamma\muextraarg)]                                                       %
                                            = \begin{cases}
                                                    d\frac{F(\bfw + \mu\bfz) - F(\bfw - \mu\bfz))}{2\mu}\bfz, & \text{for } \mu > 0, \\
                                                    d\left\langle \nabla F(\bfw), \bfz \right\rangle \bfz,                           & \text{for } \mu = 0,
                                                \end{cases}
                                                \end{align*}
    and its norm by $\gpopnorm(\bfw, \bfz\muextraarg) = \|\gpop(\bfw, \bfz\muextraarg)\|_2$.
\end{definition}

\subsection{Robustness Error Bound}
We proceed by deriving the aforementioned bound.
We start by adding three assumptions on the functions $\ggammanorm$, $f$ and $F$ and the parameter space $\mathcal{W}$.
\begin{assumption}[Smoothness]
    \label{assump:smooth}
    Consider any $\mu\ge 0$, $\bgamma \in \Gamma$, $\bfw\in\mathbb{R}^d$, and $\bfz \in \mathbb{S}^d$. We assume that
    $\ggammanorm(\cdot, \bfz,\bgamma\muextraarg)$ exhibits $L_{w,\mu}$-Lipschitz continuity, that $\ggammanorm(w, \cdot,\bgamma\muextraarg)$ exhibits $L_{z,\mu}$-Lipschitz continuity
    and that $f(\cdot, \bgamma)$ is $L$-smooth. Additionally, it is assumed that $F(\cdot)$ demonstrates $L_F$-smoothness. For simplicity of notation, we denote $\hat{L}_\mu = L_{w,\mu} + L_{z,\mu}$.
\end{assumption}
\begin{assumption}[Sub-Exponentiality]
    \label{assump:subexp}
    For all $\mu \ge 0$, $\bfz \in \mathbb{S}^d$, and $\bfw \in \mathcal{W}$,  $\ggammanorm(\bfw, \bfz, \bgamma\muextraarg)$ is distributed as a $v$-sub-exponential random variable, conditioned upon $\bgamma$ being sampled from $\mathcal{D}$.
\end{assumption}

\begin{assumption}[Restriction on $\cW$]
     $\mathcal{W}$ is both convex and compact with a predefined diameter $D$. 
\end{assumption}
With these assumptions in place, we are now positioned to establish a bound on the discrepancy between the robust estimate and the expected estimate among benign clients.
\begin{theorem}[Robustness Error Bound]
    \label{thm:delta}
    Let $\mu \ge 0$, $\bfz \in \mathbb{S}^d, \epsilon>0$. 
    Then for any $\bfz_r \in \mathbb{S}^d$ for $r \in [k]$, under Assumptions \ref{assump:smooth}, \ref{assump:subexp},
    $\alpha \le \beta < \frac{1}{2} - \epsilon$, and with probability at least $1-\frac{4}{(1+2nm\hat{L}_\mu)^d(1+nm\hat{L}_\mu D)^d}$ \vspace{-.2cm}
    \begin{align*}
        \Big\| \frac{1}{k}\sum_{r=1}^k \hat{g}_{\beta}(\bfw, \bfz_r\muextraarg)\bfz_r  - \frac{1}{k}\sum_{r=1}^k \gpop(\bfw, \bfz_r\muextraarg)\Big\|_2 \le \Delta,
    \end{align*} \vspace{-.3cm}
    for
    \begin{align}
        \Delta \define & \sqrt{\log{(1\! +\! nm\hat{L}_\mu D)} \! + \! \log{(1\!+\!2nm\hat{L}_\mu)} \!+\! \frac{1}{d}\log{m}}  \nonumber    
                 \times \frac{v\sqrt{d}}{\epsilon} \Bigg( \frac{3\beta}{\sqrt{n}} \!+\! \frac{\sqrt{2}}{\sqrt{nm}} \Bigg) + \tilde{O}\Bigg(\frac{\beta}{n} \!+\! \frac{1}{mn}\Bigg). \label{def:delta}
    \end{align}
\end{theorem}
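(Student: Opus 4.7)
The approach mirrors the template of \cite{yinByzantineRobustDistributedLearning2021}, adapted to the zero-order setting in which each honest client transmits only a scalar projection along each sampled direction $\bfz_r$. The argument proceeds in four stages: (i) pointwise concentration of a single honest client's scalar estimate, (ii) a trimmed-mean robustness bound at a fixed pair $(\bfw,\bfz)$, (iii) uniformization over $(\bfw,\bfz) \in \mathcal{W}\times\mathbb{S}^d$ through a covering argument, and (iv) aggregation across the $k$ directions via the triangle inequality.

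First, fix $\bfw$, $\bfz$ and an honest client $i$. The scalar $\ginorm(\bfw,\bfz)$ is the empirical mean of $n$ i.i.d.\ $v$-sub-exponential random variables (by \cref{assump:subexp}) with expectation $\gpopnorm(\bfw,\bfz)$, so Bernstein's inequality yields $|\ginorm-\gpopnorm| \lesssim v\sqrt{\log(1/\delta)/n} + \tilde O(\log(1/\delta)/n)$ with probability $1-\delta$. Since $\alpha\le \beta < 1/2-\epsilon$, the standard trimmed-mean robustness lemma of Yin et al.\ upgrades this (after an inner union bound over the $m$ clients, contributing the $\log m$ term) to
\begin{align*}
|\hat g_\beta(\bfw,\bfz) - \gpopnorm(\bfw,\bfz)| \;\lesssim\; \frac{v}{\epsilon}\Bigl(\frac{\beta}{\sqrt n}+\frac{1}{\sqrt{nm}}\Bigr)\sqrt{\log(1/\delta)+\log m} \;+\; \tilde O\Bigl(\frac{\beta}{n}+\frac{1}{nm}\Bigr),
\end{align*}
where the $\beta/\sqrt n$ term reflects adversarial manipulation of the extremes that survive trimming and $1/\sqrt{nm}$ is the averaging gain over the roughly $(1-2\beta)m$ retained honest clients.

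To remove the dependence on the specific point, I build $\rho$-nets $\cN_\cW\subseteq\cW$ and $\cN_\mathbb{S}\subseteq\mathbb{S}^d$ with $\rho = 1/(nm\hat L_\mu)$. Standard volumetric counting gives $|\cN_\cW|\le (1+nm\hat L_\mu D)^d$ and $|\cN_\mathbb{S}|\le (1+2nm\hat L_\mu)^d$, so a union bound of the previous step over $\cN_\cW\times\cN_\mathbb{S}$ with per-point failure probability $\delta = 4/(|\cN_\cW||\cN_\mathbb{S}|)$ delivers exactly the confidence level in the theorem and produces $\log(1/\delta) = d\bigl[\log(1+nm\hat L_\mu D) + \log(1+2nm\hat L_\mu)\bigr]$; this is the source of the $\sqrt d$ prefactor and of the two covering logarithms under the square root in $\Delta$, with the additional $\log m$ absorbed as $(\log m)/d$ inside the square root after the $\sqrt d$ is factored out. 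For arbitrary $(\bfw,\bfz)$, I approximate by the nearest net point $(\bfw_0,\bfz_0)$; by the $\hat L_\mu$-Lipschitz property of \cref{assump:smooth}, each honest $\ginorm$ (and therefore the trimmed mean $\hat g_\beta$) differs from its value at $(\bfw_0,\bfz_0)$ by at most $\hat L_\mu\rho = 1/(nm)$, and $L_F$-smoothness bounds the corresponding shift in $\gpopnorm$, so both discretization errors are absorbed into the $\tilde O(\beta/n+1/(nm))$ remainder of $\Delta$.

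Finally, because each $\bfz_r$ has unit norm and $\gpop(\bfw,\bfz_r) = \gpopnorm(\bfw,\bfz_r)\bfz_r$ (the common scalar coefficient in front of $\bfz_r$), the triangle inequality gives $\twonorm{\frac{1}{k}\sum_r (\hat g_\beta(\bfw,\bfz_r)-\gpopnorm(\bfw,\bfz_r))\bfz_r} \le \frac{1}{k}\sum_r |\hat g_\beta(\bfw,\bfz_r)-\gpopnorm(\bfw,\bfz_r)|$, and the uniform bound from the previous step controls every term by $\Delta$, finishing the proof. The most delicate point, and the one requiring the greatest care, is the coupling between the trimmed-mean concentration and the covering step: the net resolution $\rho = 1/(nm\hat L_\mu)$ must be chosen small enough that Lipschitz discretization errors do not contaminate the leading $v\sqrt d/\epsilon$ rate, yet large enough that $\log(|\cN_\cW|\cdot|\cN_\mathbb{S}|)$ remains of order $d\log(nm\hat L_\mu)$ so that the statistical and geometric factors combine precisely into the stated $\Delta$.
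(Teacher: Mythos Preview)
Your proposal is correct and follows essentially the same route as the paper: invoke the one-dimensional trimmed-mean lemma of \cite{yinByzantineRobustDistributedLearning2021} at a fixed $(\bfw,\bfz)$, uniformize via $\rho$-nets of $\mathcal{W}$ and $\mathbb{S}^d$ with $\rho=1/(nm\hat L_\mu)$ (yielding the stated net sizes and probability), transfer to arbitrary points by the Lipschitz assumption, and finish with the triangle inequality over the $k$ directions. The only cosmetic discrepancy is that the paper controls the shift in $\gpopnorm$ via the same $\hat L_\mu$-Lipschitz property of $\ggammanorm$ (inherited by its expectation) rather than $L_F$-smoothness, but this does not affect the argument.
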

The proof of Theorem \ref{thm:delta} is found in Appendix \ref{proof:thm-delta}

\subsection{Convergence analysis}
To study the convergence properties of the proposed algorithm, we explore how the algorithm behaves in
scenarios with different values of the parameter $\mu$ under the assumption of strong convexity.

\begin{assumption}[Strong-Convexity]
    \label{assump:strong-convex}
    $F(\bfw)$ is $\lambda$-strongly convex.
\end{assumption}

Under this premise, we proceed to analyze the convergence characteristics of Algorithm \ref{alg:theoretical} for $\mu = 0$.

\begin{theorem}[Convergence, $\mu = 0$]
    \label{thm:convergence-mu-zero}
    Assume $\mu = 0, \epsilon>0$, set $\tau = \frac{d+k-1}{k} and $ $\eta = \frac{1}{\tau L_F}$ in Algorithm \ref{alg:theoretical},let $\mathcal{Z}^{t} = \left\{\bfz_1^t, \bfz_2^t, \dots, \bfz_k^t\right\}$ be the $k$ uniformly and independently sampled vectors from $\mathbb{S}^d$ and let $\mathcal{Z}^t = \bigcup_{i=0}^t Z^i$.
    Under Assumptions \ref{assump:smooth}, \ref{assump:subexp}, and \ref{assump:strong-convex}, with the constraint $\alpha \le \beta < \frac{1}{2} - \epsilon$, and a
    probability of at least $1-\frac{4}{(1+2nm\hat{L}_\mu)^d(1+nm\hat{L}_\mu D)^d}$, we have:
    \begin{align*} \mathbb{E}_{\mathcal{Z}^T} [ \|\bfw^T - \bfw^*\|_2] \le  \Big(1 - \frac{\lambda}{\tau(L_F + \lambda)}\Big)^T \|\bfw^0 - \bfw^*\|_2 + \frac{2}{\lambda}\Delta
    \end{align*}
    where $\bfw^T$ is the parameter vector at the $T$-th step %
    and $\Delta$ is the same as defined in \cref{thm:delta}.
\end{theorem}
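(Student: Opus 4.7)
The plan is to combine the robustness bound of Theorem~\ref{thm:delta} with a classical contraction analysis for projected stochastic gradient descent on strongly convex, smooth functions, adapted to the random-direction zero-order estimator. Introduce the ``idealized'' Byzantine-free population aggregate
\[
\bar{\bfg}^t \define \frac{1}{k}\sum_{r=1}^k \gpop(\bfw^t,\bfz_r^t) = \frac{d}{k}\sum_{r=1}^k \langle \nabla F(\bfw^t),\bfz_r^t\rangle\bfz_r^t,
\]
i.e.\ the quantity that $\ghat$ would equal if trimming were unnecessary and each client estimate were replaced by its population counterpart. Using $\bfw^*\in\cW$, non-expansiveness of $\Pi_\cW$, the triangle inequality, and Theorem~\ref{thm:delta} on the high-probability event (which has probability at least the value stated in the theorem), one obtains the per-step decomposition $\twonorm{\bfw^{t+1}-\bfw^*} \le \twonorm{\bfw^t-\eta\bar{\bfg}^t-\bfw^*} + \eta\Delta$.

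Next I would analyze the idealized step in conditional expectation over $Z^t$. The identity $\mathbb{E}[\bfz\bfz^\intercal]=\tfrac{1}{d}I$ for $\bfz\sim\mathbb{S}^d$ gives $\mathbb{E}[\bar{\bfg}^t\mid\bfw^t] = \nabla F(\bfw^t)$, and separating diagonal from off-diagonal terms of $\twonorm{\bar{\bfg}^t}^2$ yields
\[
\mathbb{E}\bigl[\twonorm{\bar{\bfg}^t}^2\,\big|\,\bfw^t\bigr] = \frac{d+k-1}{k}\twonorm{\nabla F(\bfw^t)}^2 = \tau\twonorm{\nabla F(\bfw^t)}^2,
\]
which is precisely where the constant $\tau$ enters naturally. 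Expanding the square of the ideal step and invoking the cocoercivity inequality for $\lambda$-strongly convex and $L_F$-smooth $F$, $\langle \nabla F(\bfw^t),\bfw^t-\bfw^*\rangle \ge \tfrac{\lambda L_F}{\lambda+L_F}\twonorm{\bfw^t-\bfw^*}^2 + \tfrac{1}{\lambda+L_F}\twonorm{\nabla F(\bfw^t)}^2$, one checks that the choice $\eta=1/(\tau L_F)$ makes the coefficient of $\twonorm{\nabla F(\bfw^t)}^2$ nonpositive (this reduces to $\lambda\le L_F$, which always holds), so
\[
\mathbb{E}\bigl[\twonorm{\bfw^t-\eta\bar{\bfg}^t-\bfw^*}^2\,\big|\,\bfw^t\bigr] \le \Bigl(1-\tfrac{2\lambda}{\tau(\lambda+L_F)}\Bigr)\twonorm{\bfw^t-\bfw^*}^2.
\]
Passing to first moments via Jensen together with $\sqrt{1-x}\le 1-x/2$ delivers the linear contraction with rate $\rho \define 1-\lambda/(\tau(\lambda+L_F))$. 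Combining with the decomposition above, applying the tower property over the filtration generated by $\cZ^{t-1}$, and unrolling $\mathbb{E}\twonorm{\bfw^{t+1}-\bfw^*}\le \rho\,\mathbb{E}\twonorm{\bfw^t-\bfw^*}+\eta\Delta$ for $t=0,\dots,T-1$ produces $\rho^T\twonorm{\bfw^0-\bfw^*}+\eta\Delta/(1-\rho)$. Substituting the definitions of $\eta$ and $\rho$ collapses the residual term to $(\lambda+L_F)/(\lambda L_F)\cdot\Delta$, which is at most $2\Delta/\lambda$ via $\lambda\le L_F$; this matches the stated bound.

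The main obstacle is a clean accounting of the two sources of randomness: the probability in the theorem is over the data draw (fixed throughout the run), whereas the expectation is over the perturbation directions $\cZ^T$. For the decomposition to remain valid along the random trajectory $\{\bfw^t\}$, Theorem~\ref{thm:delta} must deliver $\twonorm{\ghat-\bar{\bfg}^t}\le\Delta$ uniformly in $\bfw\in\cW$ and $\bfz_r\in\mathbb{S}^d$ on the good data event, so that $\Delta$ is deterministic inside all subsequent $Z^t$-expectations and along the iterates. The $(1+nm\hat L_\mu D)^d(1+2nm\hat L_\mu)^d$ factor in its success probability is precisely the covering-net term that provides this uniformity, and verifying that this suffices to propagate the bound through the recursion is the most delicate conceptual step. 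The variance identity for $\bar{\bfg}^t$ and the step-size algebra are then routine once $\tau=(d+k-1)/k$ has been pinned down.
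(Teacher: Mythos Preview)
Your proposal is correct and follows essentially the same approach as the paper's proof: the same decomposition via projection non-expansiveness and Theorem~\ref{thm:delta}, the same unbiasedness and second-moment identities for $\bar{\bfg}^t$ (yielding the factor $\tau=(d+k-1)/k$), the same cocoercivity step with $\eta=1/(\tau L_F)$, the same passage from second to first moments via Jensen and $\sqrt{1-x}\le 1-x/2$, and the same recursion unrolling. Your explicit discussion of the uniformity of the $\Delta$-bound over $\cW\times\mathbb{S}^d$ (needed to carry the bound along the random trajectory) is a welcome clarification that the paper leaves implicit.
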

The proof of this theorem is presented in Appendix \ref{proof:thm-convergence-mu-zero}.

To achieve an order-optimal error rate of $\tilde{O}(\frac{\beta}{\sqrt{n}} + \frac{1}{\sqrt{nm}})$, as suggested
by \cite{yinByzantineRobustDistributedLearning2021}, where the $\beta$ term represents the introduced error by the Byzantine behavior, the number of training steps $T$ should be at
least $\frac{\tau(L_F + \lambda)}{\lambda}\log{(\frac{\lambda}{2\Delta}\|\bfw^T - \bfw^*\|_2)}$.

Consistent with zero-order theory \cite{nesterovRandomGradientFreeMinimization2017,gao2018information}, our analysis reveals a linear
dependence of the convergence rate on the model dimension $d$, not observed in first-order methods. This dependence is encapsulated in the parameter $\tau$. As the number of sampled perturbation directions $k$ increases, our algorithm approximates the standard first-order rate of convergence.

Next, we examine the case where $\mu > 0$.

\begin{definition}[Smoothed Version of $F$]
    \label{def:smoothed-version}
    Let $F$ be a population loss, for any $\mu > 0$ and $\bfw \in \mathbb{R}^d$. The \emph{smoothed version} of $F$,  $F_\mu : \mathbb{R}^d \rightarrow \mathbb{R}$ is defined as 
    $ %
        F_\mu (\bfw) = \mathbb{E}_{\bfz \sim \mathbb{S}^d}[F(\bfw +\mu \bfz)].
    $ %
\end{definition}
\begin{assumption}[Local Minimum of $F_\mu$]
    \label{ass:local-minimum-smoothed} $\mathbf{w_\mu^*} = \arg\min_{\bfw \in \mathcal{W}} F_\mu(\bfw)$ is local minimum of $F_\mu$.
\end{assumption}

It can be shown (Lemma 4.1(b) of \cite{gao2018information}) 
that for any $\bfw$, $F(\bfw)$ and $F_\mu(\bfw)$ cannot differ more 
than $\frac{L\mu^2}{2}$, implying that the solutions $F(\bfw^*)$ and $F(\mathbf{w_\mu^*})$ can be made arbitrarily close by the choice of $\mu$.

\begin{theorem}[Convergence, $\mu> 0$]
    \label{thm:convergence-mu-positive}
    Assume $\mu > 0$,  set $\tau = \frac{2d + (k-1)\big(1+\sqrt{d}\big)}{k}$ and $\eta = \frac{1}{2 \tau L_F}$ in Algorithm \ref{alg:theoretical} and let $\mathcal{Z}^t$ be
    as in Theorem \ref{thm:convergence-mu-zero}. Under Assumptions \ref{assump:smooth}, \ref{assump:subexp}, and \ref{assump:strong-convex}, with $\alpha \le \beta < \frac{1}{2} - \epsilon$, and a probability of at least $1-\frac{4}{(1+2nm\hat{L}_\mu)^d(1+nm\hat{L}_\mu D)^d}$, we have:
    \begin{align*}
        \mathbb{E}_{\mathcal{Z}^T}  [ \|\bfw^T - \bfw_\mu^*\|_2] \le  \Big(1F - \frac{\lambda}{2\tau(L_F + \lambda)}\Big)^T  \|\bfw^0 - \bfw_\mu^*\|_2 \frac{2}{\lambda}\Delta + \frac{8\sqrt{2}\mu d L_F}{\lambda}\sqrt{1 + \frac{\tau}{4}},
    \end{align*}

    where $\bfw^T$ is the parameter vector at the $T$-th step of Algorithm \ref{alg:theoretical} and $\Delta$ is as defined in \cref{thm:delta}.
\end{theorem}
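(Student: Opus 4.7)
The plan is to follow the same skeleton as the $\mu=0$ proof of Theorem~\ref{thm:convergence-mu-zero}, replacing the optimization target by $\bfw_\mu^*$ and isolating the extra $O(\mu)$ bias that arises from the finite-difference nature of $\gpop$ when $\mu>0$. First I would verify that $F_\mu$ inherits $\lambda$-strong convexity and $L_F$-smoothness from $F$ by differentiating under the expectation in \cref{def:smoothed-version}; this re-enables the contraction analysis but now for $F_\mu$. Second, I would invoke the symmetric-difference version of the Flaxman-style spherical smoothing identity, $\mathbb{E}_{\bfz \sim \mathbb{S}^d}[\gpop(\bfw,\bfz\muextraarg)] = \nabla F_\mu(\bfw)$, which identifies $\nabla F_\mu$ as the correct ``target gradient'' around which to unfold the zero-order step.

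With these two facts in hand, the one-step recursion proceeds as in the $\mu=0$ case. Non-expansiveness of $\Pi_\cW$ (and $\bfw_\mu^* \in \cW$ by \cref{ass:local-minimum-smoothed}) gives
\begin{align*}
    \|\bfw^{t+1} - \bfw_\mu^*\|_2 \le \|\bfw^t - \eta \ghat - \bfw_\mu^*\|_2,
\end{align*}
and I would decompose
\begin{align*}
    \ghat = \nabla F_\mu(\bfw^t) + \underbrace{\Bigl(\tfrac{1}{k}\textstyle\sum_r \gpop(\bfw^t,\bfz_r^t\muextraarg) - \nabla F_\mu(\bfw^t)\Bigr)}_{\text{sampling error } \bfe_s^t} + \underbrace{\Bigl(\ghat - \tfrac{1}{k}\textstyle\sum_r \gpop(\bfw^t,\bfz_r^t\muextraarg)\Bigr)}_{\text{Byzantine error } \bfe_b^t},
\end{align*}
where $\|\bfe_b^t\|_2\le\Delta$ with the stated probability by \cref{thm:delta} and $\mathbb{E}[\bfe_s^t]=\bfzero$. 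Combining the deterministic part with the standard strongly-convex/smooth contraction inequality and the choice $\eta = \frac{1}{2\tau L_F}$ produces the rate $1 - \frac{\lambda}{2\tau(L_F+\lambda)}$. The specific $\tau = \frac{2d+(k-1)(1+\sqrt d)}{k}$ is exactly what is needed so that the descent lemma on $F_\mu$ absorbs the second moment of $\bfe_s^t$.

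The delicate step, which is the main obstacle, is bounding $\mathbb{E}_{\cZ^t}\|\bfe_s^t\|_2^2$ when $\mu>0$. A Taylor expansion of $F(\bfw^t\pm\mu\bfz)$ has cancelling quadratic terms by symmetry, leaving a directional-derivative contribution $d\langle \nabla F(\bfw^t),\bfz\rangle\bfz$ plus an $O(\mu^2 L_F)$ remainder; after scaling by $d$, this yields a variance proportional to $\tau\|\nabla F(\bfw^t)\|_2^2$ plus an $O(\mu^2 d^2 L_F^2)$ finite-difference bias. Relative to the $\mu=0$ analysis, the $(1+\sqrt d)$ term in $\tau$ is the extra price paid by the finite-difference estimator: the $\sqrt d$ arises precisely from bounding the cross term between the zero-order variance and the $O(\mu d L_F)$ bias via Cauchy-Schwarz.

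Finally, I would iterate the one-step inequality, take expectation over $\cZ^T$, and use $\|\nabla F_\mu(\bfw^t)\|_2\le L_F\|\bfw^t-\bfw_\mu^*\|_2$ to fold the variance back into the contraction. Summing the resulting geometric series of per-step errors produces a stationary floor with two pieces: $\frac{2}{\lambda}\Delta$ from the Byzantine residual (identical in form to the $\mu=0$ case) and $\frac{8\sqrt 2 \mu d L_F}{\lambda}\sqrt{1+\tau/4}$ from the finite-difference bias, where the factor $\sqrt{1+\tau/4}$ emerges from coupling the $\tau$-scaled zero-order variance with the $O(\mu d L_F)$ bias in the descent step. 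The structural argument (contraction on $F_\mu$ + error decomposition + \cref{thm:delta}) is a clean extension of Theorem~\ref{thm:convergence-mu-zero}; the hard part is merely tight constant tracking through the Taylor remainders and through the definition of $\tau$.
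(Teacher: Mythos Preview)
Your skeleton is the paper's: peel off the Byzantine residual via \cref{thm:delta}, expand the squared distance to $\bfw_\mu^*$, replace the linear term by $\nabla F_\mu$ using $\mathbb E_{\bfz}[\gpop]=\nabla F_\mu$, bound the second moment of $\tfrac1k\sum_r\gpop$, and close with co-coercivity on $F_\mu$. Two places need correction.

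First, your account of where the $\sqrt d$ in $\tau$ comes from is not how the paper obtains it. The paper does not bound a cross term between the directional-derivative part and an $O(\mu dL_F)$ remainder by Cauchy--Schwarz. It splits $\mathbb E\bigl\|\tfrac1k\sum_r\gpop(\bfw,\bfz_r)\bigr\|_2^2$ into $k$ diagonal terms, each $\le 2d\|\nabla F\|_2^2+\tfrac12 L_F^2\mu^2d^2$, and $k(k-1)$ off-diagonal terms $\mathbb E\langle\gpop(\bfw,\bfz_r),\gpop(\bfw,\bfz_s)\rangle$ for independent $\bfz_r,\bfz_s$. In the off-diagonal piece the product of two finite differences is controlled by the elementary bound $|ab-cd|\le\tfrac{c^2+d^2}{2}+2\delta^2$ (valid whenever $|a-c|,|b-d|\le\delta$), applied with $\nabla F^\intercal\bfz_r$, $\nabla F^\intercal\bfz_s$ and $\delta=L_F\mu$; what remains is $d^2\,\mathbb E\bigl[|\bfz_r^\intercal\bfz_s|\,(\nabla F^\intercal\bfz_r)^2\bigr]$, and a rotation-on-the-sphere argument evaluates this as $\|\nabla F\|_2^2/d^{3/2}$, giving the $\sqrt d\,\|\nabla F\|_2^2$ after the $d^2$ prefactor. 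A Taylor-plus-Cauchy--Schwarz treatment of the $k$-sample sum does not by itself isolate this term or produce this specific $\tau$.

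Second, you offer two mechanisms for the $\|\nabla F\|_2^2$ piece of the second moment, and only one is compatible with the stated $\eta$ and rate. The correct one is your earlier remark that the descent lemma ``absorbs the second moment'': co-coercivity on $F_\mu$ contributes a negative $\tfrac{2\eta}{L_F+\lambda}\|\nabla F_\mu\|_2^2$, and after the conversion $\|\nabla F\|_2^2\le 2\|\nabla F_\mu\|_2^2+\tfrac12 L_F^2\mu^2d^2$ (omitted from your sketch but essential, since the second-moment bound is in $\|\nabla F\|$, not $\|\nabla F_\mu\|$) the resulting $2\eta^2\tau\|\nabla F_\mu\|_2^2$ is dominated exactly when $\lambda\le L_F$; the residual $\mu^2d^2$ pieces then assemble into the $\sqrt{1+\tau/4}$ factor. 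Your later suggestion to instead use $\|\nabla F_\mu(\bfw^t)\|_2\le L_F\|\bfw^t-\bfw_\mu^*\|_2$ and push the variance into the contraction factor adds $+\tfrac{1}{2\tau}$ to the squared contraction, whereas with $\eta=\tfrac{1}{2\tau L_F}$ the gain from co-coercivity is only $\tfrac{\lambda}{\tau(L_F+\lambda)}\le\tfrac{1}{2\tau}$; that route never contracts, so drop it and keep the cancellation argument.
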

The proof of Theorem \ref{thm:convergence-mu-positive} is found in Appendix \ref{proof:thm-convergence-mu-positive}.

Similar to the results for $\mu = 0$, with $\mu > 0$, we observe the
same order-optimal error rate under appropriate conditions for $T$. 
Notably, for $k=1$, the
convergence rate scales linearly with the dimension $d$. However, unlike the case with $\mu = 0$, increasing $k$ does not entirely mitigate this linear dependency, leaving a residual term proportional to $\sqrt{d}$.

\section{Related Work}
\label{sec:Related Work}

\textbf{Zero-order Optimization.} In recent years, zero-order optimization has
significantly evolved, broadening its applicability
across various domains. This technique has been particularly instrumental in areas such
as black-box optimization \cite{ilyas2018black, andriushchenko2020square} and reinforcement learning \cite{salimansEvolutionStrategiesScalable2017, abdullah2019wasserstein} as gradient computations are not required.

A novel and notable application has emerged with the fine-tuning of LLMs
\cite{malladiFineTuningLanguageModels2023}, showcasing the versatility of zero-order optimization as a memory-efficient technique to allow the fine-tuning of billion-parameter models, while only using a fraction of the memory required by the first-order counterparts. The federated counterpart was studied in \cite{fang2022communication}.

\textbf{Communication Efficiency.} Communication efficiency through gradient
compression, can be mainly divided into two categories: quantization-based methods \cite{seide20141, bernstein2018signsgd,karimireddy2019error},
and sparsification-based methods \cite{vogels2019powersgd, m2021efficient}. None of those, however, are tailored for zero-order estimates, in the sense that they act on the $d$-dimensional gradients. 

Our work relies on compression on the gradient estimate by transmitting only the difference in perturbation losses, while taking advantage of an agreed randomness. We map it back to the work of \cite{salimansEvolutionStrategiesScalable2017}, passing through \cite{malladiFineTuningLanguageModels2023} and \cite{zelikmanJustOneByte2023}.

\textbf{Byzantine Resilience.}
To mitigate adversarial conditions in the learning process, recent research has seen a surge in the development of Byzantine resilient algorithms
\cite{blanchardMachineLearningAdversaries2017,yinByzantineRobustDistributedLearning2021,elkordy2022basil,xuByzantinerobustFederatedLearning2023}. 
Our work is closely related to \cite{yinByzantineRobustDistributedLearning2021}, which applies one-dimensional coordinate-wise statistical robustness techniques to the gradient information transmitted by the clients. In contrast, our countermeasure is applied for each perturbation, posing new theoretical challenges and bridging zero-order estimation and statistical robustness.

\textbf{Poisoning attacks.} From the wide range of attacks applied to Byzantine-resilient algorithms, we specially mention data poisoning \cite{fung2018mitigating,tolpegin2020data} and model manipulation \cite{fang2020local}. Both attacks enable Byzantine clients to effectively reduce the speed of convergence, increase the error rate, or completely disrupt the optimization result. 
Our attack model is related to \cite{fang2020local}, in which the Byzantine clients use the full information about the other clients' responses to maximize the deviation from the benign estimate in each optimization step.

\section{Conclusion}
In this paper, we have introduced \algname, a novel federated zero-order optimization scheme designed to withstand Byzantine behaviors. We demonstrated its effectiveness even under challenging conditions, which include a coordinated full-knowledge attack on non-IID data distributions. Our theoretical analysis underlines the robustness of \algname, illustrating the limited ability of Byzantine clients to significantly influence the learning process. In future work, \algname can be further enhanced by exploring advanced compression techniques, such as quantization, to optimize the transmission of values within \algname. 

The importance of integrating privacy-enhancing measures further opens up many potential directions including the incorporation of differential privacy, homomorphic encryption, and secure multi-party computation. The inherent efficiency of \algname yields promising compatibility with privacy-preserving techniques, potentially opening up new frontiers in secure, private and efficient federated learning.

\printbibliography

\appendix

\section{Experiments}
\subsection{Experimental Algorithm}

\label{app:algorthm}
In this appendix, we elaborate on the algorithm utilized for the experiments presented in Section \ref{sec:experiments}. Algorithm \ref{alg:practical} illustrates the optimization steps that enable \algname to attain both communication and memory efficiency. Notably, the key differences from Algorithm \ref{alg:theoretical} include:

\textbf{Perturbation Direction Sampling.} Contrary to the original method of sampling the perturbation directions $\bfz$ from the
unit sphere $\mathbb{S}^d$ and scaling the estimate by a factor of $d$, our practical approach, similar to \cite{salimansEvolutionStrategiesScalable2017, malladiFineTuningLanguageModels2023} involves sampling each coordinate
of $\bfz$ independently from a standard Gaussian distribution. This modification, while seemingly minor, has significant practical
implications. On one hand, this modification retains our theoretical guarantees --  the expected norm squared is identical to that of sampling over the unit sphere, and the normalized variance of the norm also decreases with increasing $d$. At the same time, from a practical point of view, it facilitates iterative sampling for each gradient coordinate, thereby considerably reducing the memory requirement in contrast to allocating the entire vector.

\textbf{Shared Randomness.} Since zero-order optimization requires clients and the federator to agree upon the perturbation directions, one approach would be that, each client receives the transmission
of perturbation directions from the federator. This would entail a significant downlink communication overhead. Instead, our optimized method relies on pseudorandom generation. 
A single seed value is transmitted from the federator to the clients at the start of the training loop. This enables every client to independently reconstruct all perturbation directions, dismissing the need for direct transmission
from the server. The principal advantage of this strategy lies in its significant reduction of communication
costs per training step—from communicating a $d$-dimensional vector to merely a single value per sample.

\textbf{In-Place Operations.} The standard procedure involves allocating memory for intermediate values
such as perturbation vectors and perturbed models. In our approach, again similar to \cite{malladiFineTuningLanguageModels2023}, these operations are executed in place.
This method not only conserves memory but also streamlines the computational process. By performing and subsequently
reversing these operations in place, we manage to maintain a low memory footprint throughout the training phase.
\newline

\begin{breakablealgorithm}
    \caption{\algname - Experimental Setup}
    \label{alg:practical}
    \begin{algorithmic}
        \STATE {\bfseries Input: } $\bfw^0$ is the initial model parameter vector, $\beta$ is the trimmed mean factor, $\eta$ is the learning rate, $\mu$ is the perturbation step, $k$ is the number of samples per estimate and $T$ is the total number of learning steps.
        \STATE $\blacktriangleright$ \textbf{Federator}
        \STATE $\;\;\;$ Distributes a random seed $s$ to each client
        \STATE $\;\;\;$ Distributes $\bfw_i^0 = \bfw^0$ to each client
        \FOR{$t=0$ \textbf{to} $T$}
        \FOR{$i = 1$ \textbf{to} $m$ \textbf{in parallel}}
        \STATE $\triangleright$ \textbf{client} $i$
        \STATE $\;\;\;$ \textbf{for} $r=1$ \textbf{to} $k$ \textbf{do}
        \STATE $\;\;\;\;\;\;$ $s' \leftarrow (s, t, r)$
        \STATE $\;\;\;\;\;\;$ $g_r^i \leftarrow $ $\begin{cases*} \text{\textsc{ZOGrad}(} F, \bfw_i^t, B, \mu, s' \text{)} & \text{if} $i\not\in\mathcal{B}$ \\ * & else\end{cases*}$
        \STATE $\;\;\;$ \textbf{end for}
        \STATE $\;\;\;$ Sends $\left\{g_r^i\right\}_{r=1}^k$ to federator
        \ENDFOR
        \STATE
        \STATE \textcolor{gray}{// Federator robust aggregation}
        \STATE $\blacktriangleright$ \textbf{Federator}
        \STATE $\;\;\;$ \textbf{for} $r=1$ \textbf{to} $k$ \textbf{do}
        \STATE $\;\;\;\;\;\;$ $\hat{g}_r \leftarrow trmean_\beta\left\{ \left\{g_r^i\right\}_{i=1}^m \right\}$
        \STATE $\;\;\;\;\;\;$ $\bfw^t \leftarrow$ \textsc{perturbParams}($\bfw^t, -\frac{\eta\hat{g}_r}{k} , s'$)
        \STATE $\;\;\;$ \textbf{end for}
        \STATE $\;\;\;$ Sends $\left\{\left\{\hat{g}_r\right\}_{r=1}^k\right\}$ to each client
        \STATE
        \STATE \textcolor{gray}{// clients synchronization}
        \FOR{$i = 1$ \textbf{to} $m$ \textbf{in parallel}}
        \STATE $\triangleright$ \textbf{client} $i$
        \STATE $\;\;\;$ \textbf{for} $r=1$ \textbf{to} $k$ \textbf{do}
        \STATE $\;\;\;\;\;\;$ $s' \leftarrow (s, t, r)$
        \STATE $\;\;\;\;\;\;$ $\bfw_i^t \leftarrow$ \textsc{perturbParams}($\bfw_i^t, -\frac{\eta\hat{g}_r}{k} , s'$)
        \STATE $\;\;\;$ \textbf{end for}
        \ENDFOR
        \ENDFOR
        \STATE
        \FUNCTION{\textsc{ZOGrad}($F, \bfw, B, \mu, s'$)}
        \STATE $\bfw \leftarrow$ \textsc{perturbParams}($\bfw, \mu, s'$)
        \STATE $l_+ \leftarrow$ $F(\bfw, B)$
        \STATE $\bfw \leftarrow$ \textsc{perturbParams}($\bfw, -2\mu, s'$)
        \STATE $l_- \leftarrow$ $F(\bfw, B)$
        \STATE $\bfw \leftarrow$ \textsc{perturbParams}($\bfw, \mu, s'$)$\;$ \textcolor{gray}{// Reset model state}
        \STATE Return $\frac{l_+ -L_-}{2\mu} $
        \ENDFUNCTION
        \STATE
        \FUNCTION{\textsc{perturbParams}($\bfw, \mu, s'$)}
        \STATE Set RNG seed as $s'$
        \FOR{\textbf{each} $\bfw^{(i)}$ \textbf{in} $\bfw$}
        \STATE $z \sim \mathcal{N}(0, 1)$
        \STATE $\bfw^{(i)} \gets \bfw^{(i)} + \mu z$$\;$ \textcolor{gray}{// Memory efficient}
        \ENDFOR
        \ENDFUNCTION
    \end{algorithmic}
\end{breakablealgorithm}

We provide here the Algorithm summarizing the Full-Knowledge attack.
\begin{algorithm}[b!]
    \caption{Full-Knowledge Byzantine Behavior}
    \label{alg:attack}
    \begin{algorithmic}
        \STATE {\bfseries Input: }$\bfw^t$, $\beta$,
        $\mu$, $\bfz_r^t$, $\ginorm(\bfw^t, \bfz_r^t)$ $\forall i\in[m]$.
        \FOR{ device $b \in \mathcal{B}$}
        \STATE $\hat{g}_{true} \leftarrow \frac{1}{m}\sum_{i=1}^{m} \ginorm(\bfw^t, \bfz_r^t)$
        \IF{$\hat{g}_{true} \ge 0$}
        \STATE Send the $\lfloor \beta m\rfloor$-th smallest value of $\left\{\ginorm(\bfw^t, \bfz_r^t): i \in [m]\setminus \mathcal{B}\right\}$
        \ELSE
        \STATE Send the $\lfloor \beta m\rfloor$-th largest value of $\left\{\ginorm(\bfw^t, \bfz_r^t): i \in [m]\setminus \mathcal{B}\right\}$

        \ENDIF

        \ENDFOR
    \end{algorithmic}
\end{algorithm}

\newpage

\subsection{Simulation Parameters and Hyperparameters}
\label{app:params}
\subsubsection{Logistic Regression on MNIST}
The simulation parameters and hyperparameters for all \algname experiments in Section \ref{sec:mnist-experiments} are found in Table \ref{tab:params-mnist}. Exceptionally, in the experiments of Table \ref{tab:m-comparison-state-of-the-art}, we use $m=40$, to reproduce the same settings as in \cite{yinByzantineRobustDistributedLearning2021}. All results in Section \ref{sec:mnist-experiments} are averaged across three random seeds.

\begin{table}[h]
\caption{Simulation Parameters and Hyperparameters for Section \ref{sec:mnist-experiments}}
We run all of our simulations in a single GPU setting using an Nvidia RTX 4090 GPU.

\label{tab:params-mnist}
\vskip 0.15in
\begin{center}
\begin{small}
\begin{sc}
\begin{tabular}{lcr}
\toprule
\multicolumn{2}{c}{MNIST Data Set} \\
\midrule
Global Train Samples & 60,000 \\
Number of Clients & 12  \\
Number of Byzantine Clients & 3   \\
$\beta$ & 0.25\\
Learning Rate & $10^{-2}$ \\
Client Batch Size & 64
{} \\
Learning Steps & 400 \\
\bottomrule
\end{tabular}
\end{sc}
\end{small}
\end{center}
\vskip -0.1in
\end{table}

\subsubsection{Prompt-Based Fine-Tuning}
The simulation parameters and hyperparameters for all \algname experiments in Section \ref{sec:experiments-ft} are found in Table \ref{tab:params-ft}. For Table \ref{tab:main-comparison} and \ref{fig:sst-train-loss}, results are averaged across three different random seeds.
\begin{table}[h]
\caption{Simulation Parameters and Hyperparameters for Section \ref{sec:experiments-ft}}
\label{tab:params-ft}
\vskip 0.15in
\begin{center}
\begin{small}
\begin{sc}
\begin{tabular}{lcccr}
\toprule
& SST-2 & SNLI & TREC \\
\midrule
Global Train Samples & \multicolumn{3}{c}{\xrfill[3pt]{0.5pt} $\;$512 \xrfill[3pt]{0.5pt}} \\
Number of Clients & 8 & 12 & 12   \\
Number of Byzantine Clients & 2 & 3 & 3   \\
$\beta$ & \multicolumn{3}{c}{\xrfill[3pt]{0.5pt} $\;$0.25 \xrfill[3pt]{0.5pt}} \\
Learning Rate & \multicolumn{3}{c}{\xrfill[3pt]{0.5pt} $\;10^{-6}$ \xrfill[3pt]{0.5pt}} \\
Client Batch Size & \multicolumn{3}{c}{
 \xrfill[3pt]{0.5pt} $\;$64 \xrfill[3pt]{0.5pt}
} \\
Learning Steps & 20,000 & 20,000   & 40,000 \\
\bottomrule
\end{tabular}
\end{sc}
\end{small}
\end{center}
\vskip -0.1in
\end{table}

\subsection{Further Loss Curves for TREC and SNLI}
\label{app:roberta}
In this section, we present two additional loss curves for the TREC and SNLI experiments. Both curves are shown in Figure \ref{fig:trec-snli-train-loss}. For both datasets, we observe a convergence reduction for the Non-Byzantine setting. For SNLI, the reduction appears greater, which aligns with the results in Table \ref{tab:main-comparison}

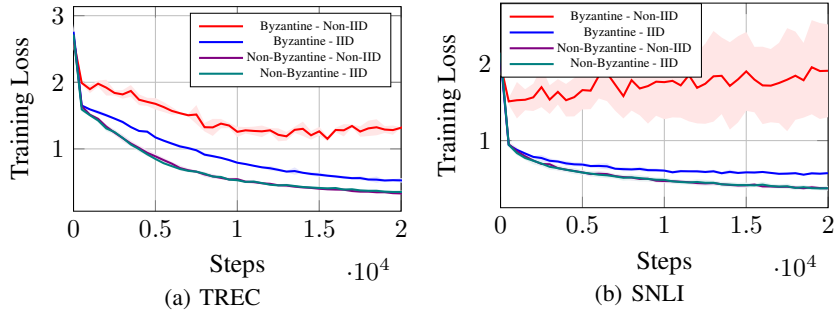
\begin{figure}[h]
    \centering

    \subfigure[TREC]{\begin{minipage}{0.4\textwidth}\resizebox{\textwidth}{!}{\begin{tikzpicture}
                \begin{axis}[
                        xlabel={Steps},
                        ylabel={Training Loss},
                        legend pos=north east,
                        grid=both,
                        xmin=0, xmax=20000,
                        scale only axis,
                        height=0.50\textwidth,
                        width=0.8\textwidth,
                        legend style={nodes={scale=0.5, transform shape}}, 
                    ]

                    \addplot [red, thick] table[col sep=comma, x expr=10*\thisrow{Epoch}, y=Loss] {data/trec_comp/TREC_3_non_iid_trimmed_mean_tm_full_knowledge_0.25_12_1.csv};
                    \addlegendentry{Byzantine - Non-IID}

                    \addplot [blue, thick] table[col sep=comma, x expr=10*\thisrow{Epoch}, y=Loss] {data/trec_comp/TREC_3_iid_trimmed_mean_tm_full_knowledge_0.25_12_1.csv};
                    \addlegendentry{Byzantine - IID}

                    \addplot [violet, thick] table[col sep=comma, x expr=10*\thisrow{Epoch}, y=Loss] {data/trec_comp/TREC_0_non_iid_naive_tm_full_knowledge_0.25_12_1.csv};
                    \addlegendentry{Non-Byzantine - Non-IID}

                    \addplot [teal, thick] table[col sep=comma, x expr=10*\thisrow{Epoch}, y=Loss] {data/trec_comp/TREC_0_iid_naive_tm_full_knowledge_0.25_12_1.csv};
                    \addlegendentry{Non-Byzantine - IID}

                    \addplot [name path=upper,draw=none] table[col sep=comma, x expr=10*\thisrow{Epoch}, y expr=\thisrow{Loss}+\thisrow{StdDev}] {data/trec_comp/TREC_0_iid_naive_tm_full_knowledge_0.25_12_1.csv};
                    \addplot [name path=lower,draw=none] table[col sep=comma, x expr=10*\thisrow{Epoch}, y expr=\thisrow{Loss}-\thisrow{StdDev}] {data/trec_comp/TREC_0_iid_naive_tm_full_knowledge_0.25_12_1.csv};
                    \addplot [fill=teal!10] fill between[of=upper and lower];

                    \addplot [name path=upper,draw=none] table[col sep=comma, x expr=10*\thisrow{Epoch}, y expr=\thisrow{Loss}+\thisrow{StdDev}] {data/trec_comp/TREC_3_iid_trimmed_mean_tm_full_knowledge_0.25_12_1.csv};
                    \addplot [name path=lower,draw=none] table[col sep=comma, x expr=10*\thisrow{Epoch}, y expr=\thisrow{Loss}-\thisrow{StdDev}] {data/trec_comp/TREC_3_iid_trimmed_mean_tm_full_knowledge_0.25_12_1.csv};
                    \addplot [fill=blue!10] fill between[of=upper and lower];

                    \addplot [name path=upper,draw=none] table[col sep=comma, x expr=10*\thisrow{Epoch}, y expr=\thisrow{Loss}+\thisrow{StdDev}] {data/trec_comp/TREC_0_non_iid_naive_tm_full_knowledge_0.25_12_1.csv};
                    \addplot [name path=lower,draw=none] table[col sep=comma, x expr=10*\thisrow{Epoch}, y expr=\thisrow{Loss}-\thisrow{StdDev}] {data/trec_comp/TREC_0_non_iid_naive_tm_full_knowledge_0.25_12_1.csv};
                    \addplot [fill=violet!10] fill between[of=upper and lower];

                    \addplot [name path=upper,draw=none] table[col sep=comma, x expr=10*\thisrow{Epoch}, y expr=\thisrow{Loss}+\thisrow{StdDev}] {data/trec_comp/TREC_3_non_iid_trimmed_mean_tm_full_knowledge_0.25_12_1.csv};
                    \addplot [name path=lower,draw=none] table[col sep=comma, x expr=10*\thisrow{Epoch}, y expr=\thisrow{Loss}-\thisrow{StdDev}] {data/trec_comp/TREC_3_non_iid_trimmed_mean_tm_full_knowledge_0.25_12_1.csv};
                    \addplot [fill=red!10] fill between[of=upper and lower];
                \end{axis}
            \end{tikzpicture}}\end{minipage}} 
    \subfigure[SNLI]{\begin{minipage}{0.4\textwidth}\resizebox{\textwidth}{!}{ \begin{tikzpicture}
                \begin{axis}[
                        xlabel={Steps},
                        ylabel={Training Loss},
                        grid=both,
                        xmin=0, xmax=20000,
                        scale only axis,
                        height=0.50\textwidth,
                        width=0.8\textwidth,
                        legend style={nodes={scale=0.5, transform shape}, at = {(0,1)}, anchor = {north west}, fill opacity = 0.5, text opacity = 1}, 
                    ]

                    \addplot [red, thick] table[col sep=comma, x expr=10*\thisrow{Epoch}, y=Loss] {data/snli_comp/SNLI_3_non_iid_trimmed_mean_tm_full_knowledge_0.25_12_1.csv};
                    \addlegendentry{Byzantine - Non-IID}

                    \addplot [blue, thick] table[col sep=comma, x expr=10*\thisrow{Epoch}, y=Loss] {data/snli_comp/SNLI_3_iid_trimmed_mean_tm_full_knowledge_0.25_12_1.csv};
                    \addlegendentry{Byzantine - IID}

                    \addplot [violet, thick] table[col sep=comma, x expr=10*\thisrow{Epoch}, y=Loss] {data/snli_comp/SNLI_0_non_iid_naive_tm_full_knowledge_0.25_12_1.csv};
                    \addlegendentry{Non-Byzantine - Non-IID}

                    \addplot [teal, thick] table[col sep=comma, x expr=10*\thisrow{Epoch}, y=Loss] {data/snli_comp/SNLI_0_iid_naive_tm_full_knowledge_0.25_12_1.csv};
                    \addlegendentry{Non-Byzantine - IID}

                    \addplot [name path=upper,draw=none] table[col sep=comma, x expr=10*\thisrow{Epoch}, y expr=\thisrow{Loss}+\thisrow{StdDev}] {data/snli_comp/SNLI_3_non_iid_trimmed_mean_tm_full_knowledge_0.25_12_1.csv};
                    \addplot [name path=lower,draw=none] table[col sep=comma, x expr=10*\thisrow{Epoch}, y expr=\thisrow{Loss}-\thisrow{StdDev}] {data/snli_comp/SNLI_3_non_iid_trimmed_mean_tm_full_knowledge_0.25_12_1.csv};
                    \addplot [fill=red!10] fill between[of=upper and lower];

                    \addplot [name path=upper,draw=none] table[col sep=comma, x expr=10*\thisrow{Epoch}, y expr=\thisrow{Loss}+\thisrow{StdDev}] {data/snli_comp/SNLI_3_iid_trimmed_mean_tm_full_knowledge_0.25_12_1.csv};
                    \addplot [name path=lower,draw=none] table[col sep=comma, x expr=10*\thisrow{Epoch}, y expr=\thisrow{Loss}-\thisrow{StdDev}] {data/snli_comp/SNLI_3_iid_trimmed_mean_tm_full_knowledge_0.25_12_1.csv};
                    \addplot [fill=blue!10] fill between[of=upper and lower];

                    \addplot [name path=upper,draw=none] table[col sep=comma, x expr=10*\thisrow{Epoch}, y expr=\thisrow{Loss}+\thisrow{StdDev}] {data/snli_comp/SNLI_0_non_iid_naive_tm_full_knowledge_0.25_12_1.csv};
                    \addplot [name path=lower,draw=none] table[col sep=comma, x expr=10*\thisrow{Epoch}, y expr=\thisrow{Loss}-\thisrow{StdDev}] {data/snli_comp/SNLI_0_non_iid_naive_tm_full_knowledge_0.25_12_1.csv};
                    \addplot [fill=violet!10] fill between[of=upper and lower];

                    \addplot [name path=upper,draw=none] table[col sep=comma, x expr=10*\thisrow{Epoch}, y expr=\thisrow{Loss}+\thisrow{StdDev}] {data/snli_comp/SNLI_0_iid_naive_tm_full_knowledge_0.25_12_1.csv};
                    \addplot [name path=lower,draw=none] table[col sep=comma, x expr=10*\thisrow{Epoch}, y expr=\thisrow{Loss}-\thisrow{StdDev}] {data/snli_comp/SNLI_0_iid_naive_tm_full_knowledge_0.25_12_1.csv};
                    \addplot [fill=teal!10] fill between[of=upper and lower];

                \end{axis}
            \end{tikzpicture}}\end{minipage}}

\caption{Contrasting Byzantine and Non-Byzantine Scenarios Across Diverse Data Distributions with RoBERTa-large on TREC and SNLI: This figure compares the performance of \algname using a RoBERTa-large model on both TREC and SNLI datasets. Non-Byzantine behavior stands for \algname with no Byzantine clients nor robust aggregation.}  
\label{fig:trec-snli-train-loss}
\end{figure}
\subsection{Local Epochs}

\label{sec:local-epochs}
In our experimental framework, we extended our investigation to evaluate the performance of \algname under the context of local epochs. This involved adapting our practical algorithm (Algorithm \ref{alg:practical}) to include local epochs, incorporating memory-efficient operations, as detailed in Algorithm \ref{alg:practical-le}.

Our findings, as summarized in Table \ref{tab:local-epochs-sst2}, reveal an intriguing parallel to the impact of the variable $k$. Specifically, modifying the number of local epochs appears to exert a comparable influence. A notable insight from this experiment is the relative stability of the final test accuracy, despite variations in batching these training epochs. However, it's important to acknowledge that the efficacy of such a technique can be highly dependent on the specific problem at hand, suggesting a need for cautious interpretation and application in different contexts. %

\begin{table}[h]
    \caption{Assessing the Impact of Varying Local Epochs in SST-2 Using RoBERTa-large: This table provides a comparative analysis of how different settings for local epochs affect performance on the SST-2 dataset using the RoBERTa-large model on a Byzantine and non-IID setting.}
    \label{tab:local-epochs-sst2}
    \vskip 0.15in
    \begin{center}
        \begin{small}
            \begin{sc}
                \begin{tabular}{lccr}
                    \toprule
                    \multirow{2}{*}{Local Epochs} & Communication & \multirow{2}{*}{Test Accuracy} \\
                                                  & Rounds        &                                \\
                    \midrule
                    1                             & 20,000        & 92.7   $\pm$ $0.4$                      \\
                    5                             & 4,000         & 92.6   $\pm$ $0.3$                      \\
                    50                            & 400           & 92.8   $\pm$ $0.5$                    \\
                    100                           & 200           & 92.4     $\pm$ $0.2$                 \\

                    \bottomrule
                \end{tabular}
            \end{sc}
        \end{small}
    \end{center}
    \vskip -0.1in
\end{table}
\newpage
\begin{breakablealgorithm}
    \caption{\algname - Experimental Setup with Local Epochs}
    \label{alg:practical-le}
    \begin{algorithmic}
        \STATE {\bfseries Input: } $\bfw^0$ is the initial model parameter vector, $\beta$ is the trimmed mean factor, $\eta$ is the learning rate, $\mu$ is the perturbation step, $k$ is the number of samples per estimate, $E$ is the number of local epochs and $T$ is the total number of learning steps.
        \STATE $\blacktriangleright$ \textbf{Federator}
        \STATE $\;\;\;$ Distributes a random seed $s$ to each client
        \STATE $\;\;\;$ Distributes $\bfw_i = \bfw^0$ to each client
                \STATE $\;\;\;$ Sets $\bfw \leftarrow \bfw^0$
        \FOR{$t=1$ \textbf{to} $T$}
        \FOR{$i = 1$ \textbf{to} $m$ \textbf{in parallel}}
        \STATE $\triangleright$ \textbf{client} $i$
        \STATE $\;\;\;$ \textbf{for} $e=1$ \textbf{to} $E$ \textbf{do}
                \STATE $\;\;\;\;\;\;$ \textcolor{gray}{// Sample $k$ grad estimates}

        \STATE $\;\;\;\;\;\;$ \textbf{for} $r=1$ \textbf{to} $k$ \textbf{do}
        \STATE $\;\;\;\;\;\;\;\;\;$ $s' \leftarrow (s, t, r, e)$
        \STATE $\;\;\;\;\;\;\;\;\;$ $g_{r, e}^{i,t} \leftarrow $ $\begin{cases*} \text{\textsc{ZOGrad}(} F, \bfw_i, B, \mu, s' \text{)} & \text{if} $i\not\in\mathcal{B}$ \\ * & else\end{cases*}$
        \STATE $\;\;\;\;\;\;$ \textbf{end for}
        \STATE $\;\;\;\;\;\;$ \textcolor{gray}{// Apply local epoch learning step}
        \STATE $\;\;\;\;\;\;$ \textbf{for} $r=1$ \textbf{to} $k$ \textbf{do}
               \STATE $\;\;\;\;\;\;\;\;\;$ $s' \leftarrow (s, t, r, e)$
        \STATE $\;\;\;\;\;\;\;\;\;$ $\bfw_i \leftarrow$ \textsc{perturbParams}($\bfw_i, -\frac{\eta g_{r, e}^{i,t} }{k} , s'$)
        \STATE $\;\;\;\;\;\;$ \textbf{end for}
        \STATE $\;\;\;$ \textbf{end for}

        \STATE $\;\;\;$ \textcolor{gray}{// Reset model to start of local epochs (alternatively, can store initial model, with extra memory cost)}
                \STATE $\;\;\;$ \textbf{for} $e=1$ \textbf{to} $E$ \textbf{do}
        \STATE $\;\;\;\;\;\;$ \textbf{for} $r=1$ \textbf{to} $k$ \textbf{do}
               \STATE $\;\;\;\;\;\;\;\;\;$ $s' \leftarrow (s, t, r, e)$
        \STATE $\;\;\;\;\;\;\;\;\;$ $\bfw_i \leftarrow$ \textsc{perturbParams}($\bfw_i, \frac{\eta g_{r, e}^{i,t} }{k} , s'$)
        \STATE $\;\;\;\;\;\;$ \textbf{end for}
        \STATE $\;\;\;$ \textbf{end for}
        
        \STATE $\;\;\;$ Sends $\left\{\left\{g_{r,e}^{i,t}\right\}_{r=1}^k\right\}_{e=1}^E$ to federator
        \ENDFOR
        \STATE
        \STATE \textcolor{gray}{// Federator robust aggregation}
        \STATE $\blacktriangleright$ \textbf{Federator}
        \STATE $\;\;\;$ \textbf{for} $e=1$ \textbf{to} $E$ \textbf{do}
        \STATE $\;\;\;\;\;\;$ \textbf{for} $r=1$ \textbf{to} $k$ \textbf{do}
        
        \STATE $\;\;\;\;\;\;\;\;\;$ $\hat{g}_{r,e}^t \leftarrow \Trmean_\beta\left\{ \left\{g_{r,e}^i\right\}_{i=1}^m \right\}$
        \STATE $\;\;\;\;\;\;\;\;\;$ $s' \leftarrow (s, t, r, e)$
        \STATE $\;\;\;\;\;\;\;\;\;$ $\bfw \leftarrow$ \textsc{perturbParams}($\bfw, -\frac{\eta\hat{g}_{r,e}^t}{k} , s'$)
        \STATE $\;\;\;\;\;\;$ \textbf{end for}
        \STATE $\;\;\;$ \textbf{end for}
        \STATE $\;\;\;$ Sends $\left\{\left\{\hat{g}_{r,e}^t\right\}_{r=1}^k\right\}_{e=1}^E$ to each client
        \STATE
        \STATE \textcolor{gray}{// clients synchronization}
        \FOR{$i = 1$ \textbf{to} $m$ \textbf{in parallel}}
        \STATE $\triangleright$ \textbf{client} $i$
         \STATE $\;\;\;$ \textbf{for} $e=1$ \textbf{to} $E$ \textbf{do}
        \STATE $\;\;\;\;\;\;$ \textbf{for} $r=1$ \textbf{to} $k$ \textbf{do}
        \STATE $\;\;\;\;\;\;\;\;\;$ $s' \leftarrow (s, t, r, e)$
        \STATE $\;\;\;\;\;\;\;\;\;$ $\bfw_i \leftarrow$ \textsc{perturbParams}($\bfw_i, -\frac{\eta\hat{g}_{r,e}^t}{k} , s'$)
        \STATE $\;\;\;\;\;\;$ \textbf{end for}
        \STATE $\;\;\;$ \textbf{end for}
        \ENDFOR
        \ENDFOR
        \STATE
        \FUNCTION{\textsc{ZOGrad}($F, \bfw, B, \mu, s'$)}
        \STATE $\bfw \leftarrow$ \textsc{perturbParams}($\bfw, \mu, s'$)
        \STATE $l_+ \leftarrow$ $F(\bfw, B)$
        \STATE $\bfw \leftarrow$ \textsc{perturbParams}($\bfw, -2\mu, s'$)
        \STATE $l_- \leftarrow$ $F(\bfw, B)$
        \STATE $\bfw \leftarrow$ \textsc{perturbParams}($\bfw, \mu, s'$)$\;$ \textcolor{gray}{// Reset model state}
        \STATE Return $\frac{l_+ -L_-}{2\mu} $
        \ENDFUNCTION
        \STATE
        \FUNCTION{\textsc{perturbParams}($\bfw, \mu, s'$)}
        \STATE Set RNG seed as $s'$
        \FOR{\textbf{each} $\bfw^{(i)}$ \textbf{in} $\bfw$}
        \STATE $z \sim \mathcal{N}(0, 1)$
        \STATE $\bfw^{(i)} \gets \bfw^{(i)} + \mu z$$\;$ \textcolor{gray}{// Memory efficient}
        \ENDFOR
        \ENDFUNCTION
    \end{algorithmic}
\end{breakablealgorithm}

\newpage

\section{Proofs}
\label{app:proofs}
In this appendix we derive the proofs for the Theorems in Section \ref{sec:theory}.

\subsection{Proof of Theorem \ref{thm:delta}}
\label{proof:thm-delta}
In proving Theorem \ref{thm:delta}, we begin by invoking a lemma from \cite{yinByzantineRobustDistributedLearning2021}. This lemma provides a probabilistic upper bound for the maximum deviation in a one-dimensional robust mean estimation problem of a random variable, within the context of our client setup described in Section \ref{sec:prob-setting}.

\begin{lemma}
    \label{lemma:rob-est}
    (Lemma 3, \cite{yinByzantineRobustDistributedLearning2021}) Let $x$ be a v-sub-exponential random variable with mean $\mu_x$. For all $i \in [m]$ and $j\in [n]$ Let $x^{i,j}$ is the $j$-th sample of $x$ in client $i$ if the client is benign or arbitrary adversarial data otherwise, and let $\bar{x}^i = \frac{1}{n}\sum_{j=1}^n x^{i,j}$ . Then, for any $t>0$, Byzantine fraction $0le\alpha <\frac{1}{2}$ and trimmed mean factor $\beta$.

    \begin{align*}
        P\left\{|\frac{1}{(1-\alpha)m}\sum_{i\in [m] \setminus \mathcal{B}} \bar{x}^i - \mu_x| \ge t\right\} \le 2\exp{\left\{-(1-\alpha)mn\min{\left\{\frac{t}{2v}, \frac{t^2}{2v^2}\right\}}\right\}},
    \end{align*}
    for any $s > 0$
    \begin{align*}
        P\left\{\max_{i\in [m] \setminus \mathcal{B}} \left\{|\bar{x}^i - \mu_x|\right\} \ge s\right\} \le 2(1-\alpha)m\exp{\left\{-n\min{\left\{\frac{s}{2v}, \frac{s^2}{2v^2}\right\}}\right\}},
    \end{align*}
    and when $\beta \ge \alpha$, $|\frac{1}{(1-\alpha)m}\sum_{i\in [m] \setminus \mathcal{B}} \bar{x}^i - \mu_x| \le t$, and $\max_{i\in [m] \setminus \mathcal{B}} \left\{|\bar{x}^i - \mu_x|\right\} \le s$
    \begin{align*}
        |\Trmean_\beta\left\{\bar{x}^i: i \in [m]\right\} - \mu_x| \le \frac{t+3\beta s}{1-2\beta}.
    \end{align*}

\end{lemma}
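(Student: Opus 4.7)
The plan is to reduce the displayed vector inequality to a scalar robust-mean-estimation problem at each direction $\bfz_r$, and then upgrade the pointwise bound from \cref{lemma:rob-est} to one that holds simultaneously for every $(\bfw,\bfz)\in\cW\times\mathbb{S}^d$ via an $\epsilon$-net argument driven by the Lipschitz \cref{assump:smooth}. Since $\gpop(\bfw,\bfz_r)=\gpopnorm(\bfw,\bfz_r)\bfz_r$ by \cref{def:zo-pop-estimate} and $\|\bfz_r\|_2=1$, the triangle inequality gives
\[
\Big\|\tfrac{1}{k}\sum_{r=1}^k[\hat{g}_\beta(\bfw,\bfz_r)-\gpopnorm(\bfw,\bfz_r)]\bfz_r\Big\|_2 \;\le\; \max_{r\in[k]}\big|\hat{g}_\beta(\bfw,\bfz_r)-\gpopnorm(\bfw,\bfz_r)\big|,
\]
so it suffices to prove the scalar bound $|\hat{g}_\beta(\bfw,\bfz)-\gpopnorm(\bfw,\bfz)|\le\Delta$ uniformly in $(\bfw,\bfz)$.

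For a fixed pair, each benign client's average $\ginorm(\bfw,\bfz)$ is the empirical mean of $n$ sub-exponential samples with mean $\gpopnorm(\bfw,\bfz)$ by \cref{assump:subexp}, so \cref{lemma:rob-est} produces the scalar bound $(t+3\beta s)/(1-2\beta)\le(t+3\beta s)/(2\epsilon)$ with failure probability controlled by two exponential terms (using $\beta<\tfrac{1}{2}-\epsilon$). I would then build $\delta$-nets $N_w\subset\cW$ and $N_z\subset\mathbb{S}^d$ of radius $\delta=1/(nm\hat L_\mu)$, whose covering numbers $(1+nm\hat L_\mu D)^d$ and $(1+2nm\hat L_\mu)^d$ are exactly the denominators of the target failure probability. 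Union-bounding the pointwise lemma over $N_w\times N_z$ and setting
\[
t\asymp v\sqrt{\tfrac{d[\log(1+nm\hat L_\mu D)+\log(1+2nm\hat L_\mu)]}{nm}},\quad s\asymp v\sqrt{\tfrac{d[\log(1+nm\hat L_\mu D)+\log(1+2nm\hat L_\mu)+\tfrac{1}{d}\log m]}{n}}
\]
so that each exponential term is crushed below $2/[|N_w||N_z|]^2$ leaves the total failure probability at $4/[(1+2nm\hat L_\mu)^d(1+nm\hat L_\mu D)^d]$, and substituting into $(t+3\beta s)/(2\epsilon)$ reproduces the leading $\tfrac{v\sqrt d}{\epsilon}(\tfrac{3\beta}{\sqrt n}+\tfrac{\sqrt 2}{\sqrt{nm}})$ term of $\Delta$, up to the shared log factor.

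To transport the net-point bound to arbitrary $(\bfw,\bfz)$, fix the nearest neighbour $(\bfw',\bfz')\in N_w\times N_z$. By \cref{assump:smooth}, every benign value $\ginorm$ and the mean $\gpopnorm$ change by at most $\hat L_\mu(\|\bfw-\bfw'\|_2+\|\bfz-\bfz'\|_2)\le 2\hat L_\mu\delta=2/(nm)$ between the two pairs. Crucially, the adversary's insertions at $(\bfw,\bfz)$ constitute an \emph{admissible} insertion at $(\bfw',\bfz')$, so I may apply the net-point bound with those same Byzantine values held fixed; then for fixed Byzantine inputs, $\Trmean_\beta$ is $1$-Lipschitz in each benign input, so $|\hat g_\beta(\bfw,\bfz)-\hat g_\beta(\bfw',\bfz')|$ is of order $O(1/(nm))+O(\beta/n)$ and is absorbed into the $\tilde O(\beta/n+1/(mn))$ residue of \eqref{def:delta}. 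Combining with the scalar-to-vector reduction yields the theorem. I expect the main technical hurdle to be exactly this Lipschitz transfer through $\Trmean_\beta$: near the $\beta$-truncation boundary, a small simultaneous shift of benign inputs can reshuffle the kept set against the frozen Byzantine values, so one must show that any such swap alters the trimmed mean by no more than $O(\hat L_\mu\delta)$ per reshuffled index rather than by the full gap between a benign and a Byzantine value, a bound that relies on the trimmed mean being the average of kept ordered statistics whose individual shifts are all controlled by $\hat L_\mu\delta$.
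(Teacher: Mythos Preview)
Your proposal does not prove \cref{lemma:rob-est} at all; it \emph{uses} \cref{lemma:rob-est} as a black box and instead sketches a proof of \cref{thm:delta}. The lemma itself is a cited result from Yin et al.\ that the paper simply invokes and never re-derives: its three clauses (a Bernstein-type concentration for the benign grand mean, a union-bound concentration for the maximum over benign client means, and a deterministic trimmed-mean deviation bound conditional on those two events) appear nowhere in your write-up. So as a proof of the stated lemma there is a genuine gap --- the target is wrong.

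Read instead as an argument for \cref{thm:delta}, your plan follows essentially the paper's route (fix a pair, apply \cref{lemma:rob-est}, build $\delta$-nets on $\cW$ and $\mathbb{S}^d$ with the same covering numbers, union-bound, transport via Lipschitz, and make the same choices of $\delta,t,s$). The one substantive deviation is the ``technical hurdle'' you flag at the end: you attempt to transport $\hat g_\beta$ itself from a net point to a general point by arguing Lipschitz continuity of $\Trmean_\beta$ against reshuffling at the truncation boundary with the Byzantine values held fixed. The paper sidesteps this entirely. It uses the Lipschitz bound on each benign $\ginorm$ and on $\gpopnorm$ to show that the \emph{preconditions} of the deterministic trimmed-mean clause --- the $t$-event and the $s$-event --- hold at the general $(\bfw,\bfz)$ with loosened thresholds $t+2\hat L_\mu\delta$ and $s+2\hat L_\mu\delta$, and then re-invokes the third clause of \cref{lemma:rob-est} directly at $(\bfw,\bfz)$. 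This never touches swaps at the boundary and is both cleaner and immune to the concern you raise.
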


With Lemma \ref{lemma:rob-est} as our foundation, we proceed to prove Theorem \ref{thm:delta} by applying it to the norm of the client estimate $\ginorm(\bfw, \bfz\muextraarg)$. For $s,t,\mu \ge 0$ and $\bfz \in \mathbb{S}^d$, we can state that with no smaller probability than

\begin{align*}
    1 - 2\exp{\left\{-(1-\alpha)mn\min{\left\{\frac{t}{2v}, \frac{t^2}{2v^2}\right\}}\right\}} - 2(1-\alpha)m\exp{\left\{-n\min{\left\{\frac{s}{2v}, \frac{s^2}{2v^2}\right\}}\right\}},
\end{align*}
the trimmed mean of client estimates is bounded as
\begin{align*}
    |\hat{g}_{\beta}(\bfw, \bfz\muextraarg) - \gpopnorm(\bfw, \bfz\muextraarg)| = |\Trmean_\beta\left\{\ginorm(\bfw, \bfz\muextraarg): i \in [m]\right\} - \gpopnorm(\bfw, \bfz\muextraarg)| \le \frac{t+3\beta s}{1-2\beta}.
\end{align*}

To extend this result to all $\bfw \in \mathcal{W}$ and any $k$ samples of vectors $\bfz_r$, we follow the methodology
outlined in \cite{yinByzantineRobustDistributedLearning2021}, utilizing a covering set argument on both $\mathcal{W}$ and $\mathbb{S}^d$. We define
$W_\delta = \left\{ \bfw^1, \bfw^2, \dots, \bfw^{N_\delta}\right\}$ and $Z_\delta = \left\{ \bfz^1, \bfz^2, \dots, \bfz^{M_\delta}\right\}$, where $N_\delta, M_\delta \in \mathbb{N}$,
such that for all $\bfw \in \mathcal{W}$ and $\bfz\in\mathbb{S}^d$, there exists an
$l \in [N_\delta]$ and $q \in [M_\delta]$ such that $\|\bfw^l - \bfw\|_2 \le \delta$ and $\|\bfz^q - \bfz\|_2 \le \delta$.
According to \cite{vershynin2010introduction}, it is always possible to find such sets satisfying $N_\delta \le (1+\frac{D}{\delta})^d$ and $M_\delta \le (1+\frac{2}{\delta})^d$.

Applying the union bound, with probabilities at least $1 - 2M_\delta N_\delta\exp{\left\{-(1-\alpha)mn\min{\left\{\frac{t}{2v}, \frac{t^2}{2v^2}\right\}}\right\}}$ and $1 - 2M_\delta N_\delta(1-\alpha)m\exp{\left\{-n\min{\left\{\frac{s}{2v}, \frac{s^2}{2v^2}\right\}}\right\}}$, we can ensure
that $|\frac{1}{(1-\alpha)m}\sum_{i\in [m] \setminus \mathcal{B}} \ginorm(\bfw^l, \bfz^q\muextraarg) - \gpopnorm(\bfw^l, \bfz^q\muextraarg)| \le t$ and
$\max_{i\in [m] \setminus \mathcal{B}} \left\{|\ginorm(\bfw^l, \bfz^q\muextraarg) - \gpopnorm(\bfw^l, \bfz^q\muextraarg)|\right\} \le s$, respectively,
for all $\bfw^l \in W_\delta$ and $\bfz^q \in Z_\delta$.

In the event of these joint conditions being met, we have

\begin{align*}
    | \hat{g}_{\beta}(\bfw^l, \bfz^q\muextraarg)  - \bar{g}(\bfw^l, \bfz^q\muextraarg)| \le \frac{t+3\beta s}{1-2\beta}
\end{align*}
for all $\bfw^l \in W_\delta$ and $\bfz^q \in Z_\delta$. 

Now, using both the $L_{w,\mu}$-Lipschitz property of $\ggammanorm(\cdot,\bfz, \bgamma\muextraarg)$ and the $L_{z,\mu}$-Lipschitz property of $\ggammanorm(\bfw, \cdot,\bgamma\muextraarg)$ for any data
sample $\bgamma$ from Assumption \ref{assump:smooth}, and the fact that for all $\bfw \in \mathcal{W}$ and $\bfz\in\mathbb{S}^d$, there exists $\bfw^l \in W_\delta$ and $\bfz^q \in Z_\delta$ such
that $\|\bfw^l - \bfw\|_2 \le \delta$ and $\|\bfz^q - \bfz\|_2 \le \delta$,
we have that for any benign client $i$, $|\ginorm(\bfw^l, \bfz^q \muextraarg) - \ginorm(\bfw, \bfz\muextraarg)| \le (L_{w,\mu}+L_{z,\mu})\delta$ and
$|\gpopnorm(\bfw^l, \bfz^q \muextraarg) - \gpopnorm(\bfw, \bfz\muextraarg)| \le (L_{w,\mu}+L_{z,\mu})\delta$.

So first, in the event of $|\frac{1}{(1-\alpha)m}\sum_{i\in [m] \setminus \mathcal{B}} \ginorm(\bfw^l, \bfz^q\muextraarg) - \gpopnorm(\bfw^l, \bfz^q\muextraarg)| \le t$ for all $\bfw^l \in W_\delta$ and $\bfz \in Z_\delta$  and by making use of the triangle inequality, we have for all $\bfw \in \mathcal{W}$ and $\bfz \in \mathbb{S}^d$

\begin{align*}
    \left|\frac{1}{(1-\alpha)m}\sum_{i\in [m] \setminus \mathcal{B}} \ginorm(\bfw, \bfz\muextraarg) - \gpopnorm(\bfw, \bfz\muextraarg)\right|  &\le \left|\frac{1}{(1-\alpha)m}\sum_{i\in [m] \setminus \mathcal{B}} \ginorm(\bfw^l, \bfz^q\muextraarg) - \gpopnorm(\bfw^l, \bfz^q\muextraarg)\right| \\ & \quad + \frac{1}{(1-\alpha)m}\sum_{i\in [m] \setminus \mathcal{B}} \left| \ginorm(\bfw^l, \bfz^q \muextraarg) - \ginorm(\bfw, \bfz\muextraarg) \right| + \left|\gpopnorm(\bfw^l, \bfz^q \muextraarg) - \gpopnorm(\bfw, \bfz\muextraarg) \right|\\ & \le t + 2 (L_{w,\mu}+L_{z,\mu})\delta,
\end{align*}

and similarly, having
$\max_{i\in [m] \setminus \mathcal{B}} \left\{|\ginorm(\bfw^l, \bfz^q\muextraarg) - \gpopnorm(\bfw^l, \bfz^q\muextraarg)|\right\} \le s$ for all $\bfw^l \in W_\delta$ and $\bfz \in Z_\delta$ implies that 
\begin{align*}
    \max_{i\in [m] \setminus \mathcal{B}} \left\{|\ginorm(\bfw, \bfz\muextraarg) - \gpopnorm(\bfw, \bfz\muextraarg)|\right\} \le s +2(L_{w,\mu}+L_{z,\mu})\delta
\end{align*}
further implying
\begin{align*}
    \|\hat{g}_{\beta}(\bfw, \bfz\muextraarg)\bfz - \gpop(\bfw, \bfz\muextraarg)\|_2 = |\Trmean_\beta\left\{\ginorm(\bfw, \bfz\muextraarg): i \in [m]\right\} - \gpopnorm(\bfw, \bfz\muextraarg)| \le \frac{t+3\beta s}{1-2\beta} + \frac{2(1+3\beta)}{1-2\beta}\delta(L_{w,\mu}+L_{z,\mu})
\end{align*}
which leads us, by applying the traingle inequality, to the final bound
\begin{align*}
    \Big\| \frac{1}{k}\sum_{r=1}^k \hat{g}_{\beta}(\bfw, \bfz_r\muextraarg)\bfz_r  - \frac{1}{k}\sum_{r=1}^k \gpop(\bfw, \bfz_r\muextraarg)\Big\|_2 \le \frac{t+3\beta s}{1-2\beta} + \frac{2(1+3\beta)}{1-2\beta}\delta \hat{L}_\mu
\end{align*}
for $\hat{L}_\mu = L_{w,\mu}+L_{z,\mu}$.

Choosing $\delta = \frac{1}{nm\hat{L}_\mu}$,
\begin{align*}
    t = v\max \left\{ \frac{8d}{nm}\log{[(1 + nm\hat{L}_\mu D)(1 + 2nm\hat{L}_\mu)]}, \sqrt{\frac{8d}{nm}\log{[(1 + nm\hat{L}_\mu D)(1 + 2nm\hat{L}_\mu)]} } \right\},
\end{align*}
and
\begin{align*}
    s = v\max \left\{ \frac{4}{n}(d \log{[(1 + nm\hat{L}_\mu D)(1 + 2nm\hat{L}_\mu)]} + \log{m}), \sqrt{\frac{4}{n}(d \log{[(1 + nm\hat{L}_\mu D)(1 + 2nm\hat{L}_\mu)]} + \log{m})} \right\}
\end{align*}
completes the proof. $ \square $

\subsection{Proof of Theorem \ref{thm:convergence-mu-zero}}
\label{proof:thm-convergence-mu-zero}
In order to prove Theorem \ref{thm:convergence-mu-zero} we instantiate several lemmas on the probabilistic behavior of the function $\gpop(\bfw, \bfz\muextraarg)$ for the case $\mu = 0$.

\begin{lemma}
    \label{lemma:exp-matrix}
    (Lemma 7.3 (b) \cite{gao2018information}) $\mathbb{E}_{\bfz \sim \mathbb{S}^d}[\bfz\bfz^\intercal] = \frac{1}{d}I_d$, where $I_d$ is the $d$-dimensional identity matrix.
\end{lemma}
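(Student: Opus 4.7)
\textbf{Proof plan for Lemma \ref{lemma:exp-matrix}.} Let $M \define \mathbb{E}_{\bfz \sim \mathbb{S}^d}[\bfz\bfz^\intercal] \in \mathbb{R}^{d\times d}$. The plan is to exploit the rotational invariance of the uniform distribution on $\mathbb{S}^d$ to show that $M$ is a scalar multiple of the identity, and then to fix the scalar by a trace computation.

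First I would observe that, by definition of the uniform distribution on the unit sphere, for every orthogonal matrix $Q\in\mathbb{R}^{d\times d}$ the random vector $Q\bfz$ has the same distribution as $\bfz$ when $\bfz\sim\mathbb{S}^d$. Applying the push-forward, one obtains
\begin{equation*}
QMQ^\intercal \;=\; \mathbb{E}\bigl[(Q\bfz)(Q\bfz)^\intercal\bigr] \;=\; \mathbb{E}\bigl[\bfz\bfz^\intercal\bigr] \;=\; M,
\end{equation*}
so $QM = MQ$ for every orthogonal $Q$. Since the only matrices that commute with the full orthogonal group are scalar multiples of the identity (a standard application of Schur's lemma, or equivalently of the spectral theorem combined with the transitivity of the orthogonal action on orthonormal bases), I can conclude that $M = c\, I_d$ for some constant $c \in \mathbb{R}$.

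To determine $c$, I would take traces:
\begin{equation*}
cd \;=\; \mathrm{tr}(cI_d) \;=\; \mathrm{tr}(M) \;=\; \mathbb{E}\bigl[\mathrm{tr}(\bfz\bfz^\intercal)\bigr] \;=\; \mathbb{E}\bigl[\twonorm{\bfz}^2\bigr] \;=\; 1,
\end{equation*}
where the last equality uses $\bfz\in\mathbb{S}^d$. Hence $c=1/d$ and $M=\tfrac{1}{d}I_d$, as claimed.

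The argument is elementary; the only step requiring mild care is the passage from ``$M$ commutes with every orthogonal matrix'' to ``$M$ is a scalar multiple of the identity''. If I wanted to avoid invoking Schur's lemma, an alternative route is to argue coordinate-wise: permutation matrices act on $\mathbb{S}^d$ by permuting the entries of $\bfz$ and preserve its law, so all diagonal entries $\mathbb{E}[\bfz^{(i)\,2}]$ are equal and sum to $\mathbb{E}[\twonorm{\bfz}^2]=1$, giving $1/d$; and coordinate reflections $\bfz^{(i)}\mapsto -\bfz^{(i)}$ also preserve the uniform law on $\mathbb{S}^d$, forcing $\mathbb{E}[\bfz^{(i)}\bfz^{(j)}] = -\mathbb{E}[\bfz^{(i)}\bfz^{(j)}] = 0$ for $i\neq j$. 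Either route is short, so I do not anticipate a genuine obstacle here.
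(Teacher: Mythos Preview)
Your proof is correct. The paper does not supply its own proof of this lemma; it merely cites the result from \cite{gao2018information}, so there is nothing to compare against. Both routes you outline---rotational invariance plus Schur's lemma, or the direct coordinate-wise argument via permutations and sign flips---are valid and standard.
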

\begin{lemma}
The following holds.    \label{lemma:cross-exp}
    \begin{align*}
        \mathbb{E}_{\bfz_1,\bfz_2  \sim \mathbb{S}^d}[\bfz_1\bfz_1^\intercal \bfz_2\bfz_2^\intercal] = \begin{cases}
                                                                                                                              \frac{1}{d}I_d,   & \text{if } \bfz_1 = \bfz_2,                                    \\
                                                                                                                              \frac{1}{d^2}I_d, & \text{if } \bfz_1 \text{ and } \bfz_2 \text{ are independent}.
                                                                                                                          \end{cases}
    \end{align*}
\end{lemma}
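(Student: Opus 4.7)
The plan is to handle the two cases separately, both reducing quickly to Lemma~\ref{lemma:exp-matrix}. In the equal case with $\bfz_1 = \bfz_2 = \bfz \sim \mathbb{S}^d$, I would exploit the unit-norm identity $\bfz^\intercal\bfz = \twonorm{\bfz}^2 = 1$ to collapse the fourth-order product,
\begin{align*}
\bfz\bfz^\intercal \bfz\bfz^\intercal = \bfz(\bfz^\intercal\bfz)\bfz^\intercal = \bfz\bfz^\intercal,
\end{align*}
and then take expectations and invoke Lemma~\ref{lemma:exp-matrix} directly to obtain $\frac{1}{d}I_d$.

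In the independent case, I would argue entrywise: writing
\begin{align*}
(\bfz_1\bfz_1^\intercal \bfz_2\bfz_2^\intercal)_{ij} = \sum_k (\bfz_1\bfz_1^\intercal)_{ik}\,(\bfz_2\bfz_2^\intercal)_{kj},
\end{align*}
the independence of $\bfz_1$ and $\bfz_2$ lets each summand factor into a product of expectations, so that
\begin{align*}
\mathbb{E}\!\left[\bfz_1\bfz_1^\intercal \bfz_2\bfz_2^\intercal\right] = \mathbb{E}[\bfz_1\bfz_1^\intercal]\,\mathbb{E}[\bfz_2\bfz_2^\intercal] = \frac{1}{d}I_d \cdot \frac{1}{d}I_d = \frac{1}{d^2}I_d,
\end{align*}
by a second application of Lemma~\ref{lemma:exp-matrix}.

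Since both cases reduce immediately to the preceding lemma, there is no substantive obstacle; the only observation worth highlighting is the norm-collapse identity in the equal case, which is precisely what accounts for the factor of $d$ separating the two answers rather than them being identical.
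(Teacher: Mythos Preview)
Your proposal is correct and matches the paper's proof essentially line for line: the paper also collapses $\bfz\bfz^\intercal\bfz\bfz^\intercal$ to $\bfz\bfz^\intercal$ via $\bfz^\intercal\bfz=1$ in the equal case, and factors the expectation into $\mathbb{E}[\bfz_1\bfz_1^\intercal]\,\mathbb{E}[\bfz_2\bfz_2^\intercal]$ in the independent case, invoking Lemma~\ref{lemma:exp-matrix} both times. Your entrywise justification for the factorization is slightly more explicit than the paper's, but the argument is the same.
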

\begin{proof}
    In the case $\bfz_1 = \bfz_2$,
    \begin{align*}
        \mathbb{E}_{\bfz_1,\bfz_2  \sim \mathbb{S}^d}[\bfz_1\bfz_1^\intercal\bfz_2\bfz_2^\intercal] & = \mathbb{E}_{\bfz_1 \sim \mathbb{S}^d}[\bfz_1\bfz_1^\intercal\bfz_1\bfz_1^\intercal] \\
                                                                                                                        & = \mathbb{E}_{\bfz_1 \sim \mathbb{S}^d}[\bfz_1\bfz_1^\intercal]
    \end{align*}
    and the result is given by Lemma \ref{lemma:exp-matrix}.

    In the case that $\bfz_1 $ and $ \bfz_2$ are independent, 
    \begin{align*}
        \mathbb{E}_{\bfz_1,\bfz_2  \sim \mathbb{S}^d}[\bfz_1\bfz_1^\intercal\bfz_2\bfz_2^\intercal] & = \mathbb{E}_{\bfz_1 \sim \mathbb{S}^d}[\bfz_1\bfz_1^\intercal] \mathbb{E}_{\bfz_2 \sim \mathbb{S}^d}[\bfz_2\bfz_2^\intercal]
    \end{align*}
    And again, by Lemma \ref{lemma:exp-matrix}, the result is yielded.
\end{proof}

\begin{lemma}
    \label{lemma:expected-mu-zero}
    Let $\bfw \in \mathbb{R}^d$, %
    \begin{align*}
        \mathbb{E}_{\bfz \sim \mathbb{S}^d}[\gpop(\bfw, \bfz\zeroextraarg)] = \nabla F(\bfw).
    \end{align*}
\end{lemma}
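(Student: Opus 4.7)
The proof should be a short, direct calculation leveraging Lemma \ref{lemma:exp-matrix} that has just been stated. The plan is as follows.

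First, I would expand the definition of $\gpop(\bfw, \bfz, 0)$ from Definition \ref{def:zo-pop-estimate} in the $\mu = 0$ case, which gives $\gpop(\bfw, \bfz, 0) = d \langle \nabla F(\bfw), \bfz \rangle \bfz$. The key observation is that the scalar-vector product $\langle \nabla F(\bfw), \bfz \rangle \bfz$ can be rewritten as the matrix-vector product $\bfz \bfz^\intercal \nabla F(\bfw)$, because $\langle \nabla F(\bfw), \bfz \rangle = \bfz^\intercal \nabla F(\bfw)$ is a scalar and can be commuted through.

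Second, since $\nabla F(\bfw)$ is deterministic with respect to the randomness in $\bfz$, I would pull it outside the expectation:
\begin{align*}
\mathbb{E}_{\bfz \sim \mathbb{S}^d}[\gpop(\bfw, \bfz, 0)] = d \, \mathbb{E}_{\bfz \sim \mathbb{S}^d}[\bfz \bfz^\intercal] \, \nabla F(\bfw).
\end{align*}
Then I would invoke Lemma \ref{lemma:exp-matrix} to substitute $\mathbb{E}_{\bfz \sim \mathbb{S}^d}[\bfz \bfz^\intercal] = \frac{1}{d} I_d$, yielding $d \cdot \frac{1}{d} I_d \nabla F(\bfw) = \nabla F(\bfw)$, which is the claim.

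There is no real obstacle here — the whole lemma is a one-line consequence of Lemma \ref{lemma:exp-matrix} once the definition is unpacked. Its purpose in the paper is to isolate the unbiasedness of the zero-order estimator for $\mu = 0$ so that it can be cited cleanly inside the convergence proof of Theorem \ref{thm:convergence-mu-zero}, where the expectation $\mathbb{E}_{\bfz_r^t}[\ghat]$ needs to be related to the true gradient.
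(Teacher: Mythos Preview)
Your proposal is correct and matches the paper's own proof essentially line for line: expand the $\mu=0$ definition, rewrite $\langle \nabla F(\bfw), \bfz\rangle \bfz$ as $\bfz\bfz^\intercal \nabla F(\bfw)$, pull $\nabla F(\bfw)$ out of the expectation, and apply Lemma~\ref{lemma:exp-matrix}.
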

\begin{proof}
    \begin{align*}
        \mathbb{E}_{\bfz \sim \mathbb{S}^d}[\gpop(\bfw, \bfz\zeroextraarg)] & = \mathbb{E}_{\bfz \sim \mathbb{S}^d}[d\left\langle \nabla F(\bfw), \bfz \right\rangle \bfz] \\
                                                                                         & = d\mathbb{E}_{\bfz \sim \mathbb{S}^d}[\bfz\bfz^\intercal] \nabla F(\bfw)                 \\
                                                                                         & = \nabla F(\bfw)
    \end{align*}
    where the last equality comes from Lemma \ref{lemma:exp-matrix}
\end{proof}
\begin{lemma}
    \label{lemma:norm-mu-zero}
    Let $\bfw \in \mathbb{R}^d$ and $\bfz_r$ independently sampled from $\mathbb{S}^d$, $r\in[k]$, %
    \begin{align*}
        \mathbb{E}_{\bfz_r \sim \mathbb{S}^d}\left[\left\|\frac{1}{k} \sum_{r=1}^k \gpop(\bfw, \bfz_r\zeroextraarg)\right\|_2^2\right] = \frac{d+k-1}{k} \left\|\nabla F(\bfw) \right\|_2^2.
    \end{align*}
\end{lemma}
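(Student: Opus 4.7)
The strategy is a direct second-moment calculation, expanding the squared norm as a double sum over the indices $r, s \in [k]$ and then invoking \cref{lemma:exp-matrix,lemma:cross-exp} to evaluate each term. Writing $g \define \nabla F(\bfw)$ and using $\gpop(\bfw, \bfz_r, 0) = d \langle g, \bfz_r\rangle \bfz_r = d\, \bfz_r \bfz_r^\intercal g$, I would first expand
\begin{align*}
    \Big\|\tfrac{1}{k}\sum_{r=1}^k \gpop(\bfw, \bfz_r, 0)\Big\|_2^2
    = \frac{d^2}{k^2} \sum_{r=1}^k \sum_{s=1}^k \langle g, \bfz_r\rangle \langle g, \bfz_s\rangle \langle \bfz_r, \bfz_s\rangle
    = \frac{d^2}{k^2} \sum_{r,s} g^\intercal \bfz_r \bfz_r^\intercal \bfz_s \bfz_s^\intercal g.
\end{align*}

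Next I would split the sum into the diagonal ($r = s$) and off-diagonal ($r \neq s$) contributions and apply \cref{lemma:cross-exp}. For the $k$ diagonal terms, since $\bfz_r \in \mathbb{S}^d$ we have $\bfz_r^\intercal \bfz_r = 1$, so $\mathbb{E}[\bfz_r \bfz_r^\intercal \bfz_r \bfz_r^\intercal] = \mathbb{E}[\bfz_r \bfz_r^\intercal] = \tfrac{1}{d} I_d$ by \cref{lemma:exp-matrix}, giving contribution $\tfrac{d^2}{k^2} \cdot k \cdot \tfrac{1}{d}\|g\|_2^2 = \tfrac{d}{k}\|g\|_2^2$. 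For the $k(k-1)$ off-diagonal terms, independence of $\bfz_r$ and $\bfz_s$ yields $\mathbb{E}[\bfz_r \bfz_r^\intercal \bfz_s \bfz_s^\intercal] = \tfrac{1}{d^2} I_d$, contributing $\tfrac{d^2}{k^2} \cdot k(k-1) \cdot \tfrac{1}{d^2}\|g\|_2^2 = \tfrac{k-1}{k}\|g\|_2^2$.

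Summing the two contributions gives $\left(\tfrac{d}{k} + \tfrac{k-1}{k}\right)\|g\|_2^2 = \tfrac{d+k-1}{k}\|\nabla F(\bfw)\|_2^2$, as claimed. There is no real obstacle here; the proof is essentially bookkeeping, the only care being to recognize that $\mathbb{E}[\bfz_r\bfz_r^\intercal \bfz_s \bfz_s^\intercal]$ must be evaluated \emph{before} taking its action on $g$ (since the intermediate factor $\bfz_r^\intercal \bfz_s$ is a scalar that entangles the two unit vectors) and to correctly account for the $d^2$ scaling that appears from the definition of $\gpop$ at $\mu = 0$.
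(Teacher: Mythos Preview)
Your proposal is correct and essentially identical to the paper's proof: both expand the squared norm into the double sum $\frac{d^2}{k^2}\sum_{r,s} \nabla F(\bfw)^\intercal \mathbb{E}[\bfz_r\bfz_r^\intercal \bfz_s \bfz_s^\intercal]\nabla F(\bfw)$, apply \cref{lemma:cross-exp} to the diagonal and off-diagonal terms, and collect the resulting $\frac{k}{d}+\frac{k(k-1)}{d^2}$ factor.
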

\begin{proof}
    \begin{align*}
        \mathbb{E}_{\bfz_r \sim \mathbb{S}^d}\left[\left\|\frac{1}{k} \sum_{r=1}^k \gpop(\bfw, \bfz_r\zeroextraarg)\right\|_2^2\right] & =\mathbb{E}_{\bfz_r \sim \mathbb{S}^d}\left[\left\|\frac{d}{k} \sum_{r=1}^k \nabla F(\bfw)^\intercal \bfz_r \bfz_r \right\|_2^2\right]                                                      \\
                                                                                                                                                & = \frac{d^2}{k^2} \sum_{r=1}^k \sum_{s=1}^k\nabla F(\bfw)^\intercal  \mathbb{E}_{\bfz_r,\bfz_s  \sim \mathbb{S}^d}\left[\bfz_r\bfz_r^\intercal\bfz_s\bfz_s^\intercal\right] \nabla F(\bfw) \\
                                                                                                                                                & = \frac{d^2}{k^2}\Big(\frac{k}{d} + \frac{k(k-1)}{d^2}\Big)\|\nabla F(\bfw) \|_2^2
    \end{align*}
    where the last equality comes from \ref{lemma:cross-exp}. %
\end{proof}
\begin{lemma}
    \label{lemma:co-coercivity}
    (Co-coercivity of strongly convex functions, Lemma 3.11 \cite{bubeck2015convex}) Let F be $L_F$-smooth and $\lambda$-strongly convex. For all $\bfw_1,\bfw_2 \in \mathbb{R}^d$
    \begin{align*}
        (\nabla F(\bfw_1) - \nabla F(\bfw_2))^\intercal(\bfw_1 - \bfw_2) \ge \frac{\lambda L_F}{\lambda + L_F}\|\bfw_1 - \bfw_2\|_2^2 + \frac{1}{\lambda + L_F}\|\nabla F(\bfw_1) - \nabla F(\bfw_2)\|_2^2.
    \end{align*}
\end{lemma}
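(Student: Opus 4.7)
The plan is to reduce the strongly convex case to the standard co-coercivity inequality for smooth convex functions, via the well-known trick of subtracting a quadratic. Specifically, define
\begin{equation*}
h(\bfw) \define F(\bfw) - \tfrac{\lambda}{2}\|\bfw\|_2^2.
\end{equation*}
First I would verify that $h$ inherits the right regularity: $\lambda$-strong convexity of $F$ translates to plain convexity of $h$ (its Hessian, in a smooth sense, is shifted down by $\lambda I$), and $L_F$-smoothness of $F$ together with the $\lambda I$ subtraction gives that $h$ is $(L_F-\lambda)$-smooth. The case $L_F = \lambda$ is degenerate (then $h$ is affine and both sides of the target inequality become equalities after a direct check), so assume $L_F > \lambda$.

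Next I would invoke the standard co-coercivity inequality for an $L$-smooth convex function $h$ with $L = L_F - \lambda$:
\begin{equation*}
(\nabla h(\bfw_1) - \nabla h(\bfw_2))^\intercal (\bfw_1 - \bfw_2) \ge \frac{1}{L_F-\lambda}\|\nabla h(\bfw_1) - \nabla h(\bfw_2)\|_2^2.
\end{equation*}
This is the key lemma I would lean on; its standard proof goes by applying the descent lemma to the function $\bfw \mapsto h(\bfw) - \nabla h(\bfw_2)^\intercal \bfw$ at the minimizer displacement $-\frac{1}{L}(\nabla h(\bfw_1) - \nabla h(\bfw_2))$, then symmetrizing in $\bfw_1,\bfw_2$. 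I would cite this as the only nontrivial analytic input.

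Finally, substitute $\nabla h(\bfw) = \nabla F(\bfw) - \lambda \bfw$ and let $\bfu \define \nabla F(\bfw_1)-\nabla F(\bfw_2)$, $\bfv \define \bfw_1-\bfw_2$. The inequality becomes
\begin{equation*}
(\bfu - \lambda \bfv)^\intercal \bfv \ge \frac{1}{L_F-\lambda}\|\bfu - \lambda \bfv\|_2^2,
\end{equation*}
and expanding the right-hand side, multiplying through by $L_F-\lambda$, and collecting the $\bfu^\intercal \bfv$ terms on one side yields
\begin{equation*}
(L_F+\lambda)\, \bfu^\intercal \bfv \ge \|\bfu\|_2^2 + \lambda L_F \|\bfv\|_2^2,
\end{equation*}
which is exactly the claim after dividing by $L_F+\lambda$. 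The main obstacle is the classical co-coercivity of smooth convex $h$; all other steps are routine algebra, so in practice I would either cite Bubeck directly (as done in the paper) or include the one-line descent-lemma argument for completeness.
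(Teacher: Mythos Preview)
Your argument is correct and is precisely the standard proof of this inequality: the paper does not give its own proof but simply cites Lemma~3.11 of Bubeck, and the proof there proceeds exactly as you describe (subtract $\tfrac{\lambda}{2}\|\cdot\|_2^2$, apply co-coercivity of the resulting $(L_F-\lambda)$-smooth convex function, and rearrange). There is nothing to add.
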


With these lemmas in hand, we can proceed to prove Theorem \ref{thm:convergence-mu-zero}. Let's consider the $t$-th step of
Algorithm \ref{alg:theoretical}, then for $\Pi_{\mathcal{W}}$ denoting the euclidean projection over $\cW$, we have %
\begin{align*}
    \left\|\bfw^{t+1} - \bfw^*\right\|_2 & = \left\|\Pi_{\mathcal{W}}\Big(\bfw^{t} - \eta \frac{1}{k} \sum_{r=1}^k \hat{g}_{\beta}(\bfw, \bfz_r^t\zeroextraarg)\bfz_r^t \Big)- \bfw^* \right\|_2                                                                                                                                 \\
                                          & \numrel{\le}  \left\|\bfw^{t} - \eta \frac{1}{k} \sum_{r=1}^k \hat{g}_{\beta}(\bfw, \bfz_r^t\zeroextraarg)\bfz_r^t - \bfw^* \right\|_2                                                                                                                                                  \\
                                          & \le \left\|\bfw^{t} - \eta \frac{1}{k} \sum_{r=1}^k \gpop(\bfw, \bfz_r^t\zeroextraarg) - \bfw^* \right\|_2 + \eta \left\| \frac{1}{k}\sum_{r=1}^k \hat{g}_{\beta}(\bfw^l, \bfz_r\zeroextraarg)\bfz_r  - \frac{1}{k}\sum_{r=1}^k \gpop(\bfw^l, \bfz_r\zeroextraarg)\right\|_2 \\
                                          & \numrel{\le} \left\|\bfw^{t} - \eta \frac{1}{k} \sum_{r=1}^k \gpop(\bfw, \bfz_r^t\zeroextraarg) - \bfw^* \right\|_2 + \eta\Delta
\end{align*}
\restartnumrel
where $(i)$ comes from the properties of Euclidean projection and $(ii)$ comes from conditioning on the event from Theorem \ref{thm:delta}.

Now, by taking the expectation over $Z^t$,
\begin{align*}
    \mathbb{E}_{Z^t}[\|\bfw^{t+1} - \bfw^*\|_2] \le
    \underbrace{\mathbb{E}_{Z^t}\left[\left\|\bfw^{t} - \eta \frac{1}{k} \sum_{r=1}^k \gpop(\bfw, \bfz_r^t\zeroextraarg) - \bfw^*\right\|_2\right]}_{T_1} + \eta \Delta
\end{align*}

We need to further bound the term $T_1$. We do that by considering
\begin{align*}
    &\mathbb{E}_{Z^t}\Bigg[\Bigg\|\bfw^{t} - \eta \frac{1}{k} \sum_{r=1}^k \gpop(\bfw, \bfz_r^t\zeroextraarg) - \bfw^*\Bigg\|_2^2\Bigg]                                                                                                                                                                                                \\
                                                & =  \mathbb{E}_{Z^t}\left[ \| \bfw^{t} - \bfw^*\|_2^2 - 2 \eta\left\langle \bfw^{t} - \bfw^* , \frac{1}{k} \sum_{r=1}^k \gpop(\bfw, \bfz_r^t\zeroextraarg)\right\rangle + \eta^2 \left\|\frac{1}{k} \sum_{r=1}^k \gpop(\bfw, \bfz_r^t\zeroextraarg) \right\|_2^2 \right] \\
                                                & \numrel{=} \| \bfw^{t} - \bfw^*\|_2^2 - 2 \eta\left\langle \bfw^{t} - \bfw^* , \nabla F (\bfw^t) \right\rangle + \eta^2 \frac{d+k-1}{k} \|\nabla F (\bfw^t) \|_2^2                                                                                                          \\
                                                & \numrel{\le} \left(1-\frac{2\lambda k}{(d+k-1)(L_F + \lambda)}\right)\| \bfw^{t} - \bfw^*\|_2^2 -\left(\frac{2}{L_F(\lambda + L_F)}-\frac{1}{L_F^2}\right)\frac{k}{d+k-1} \|\nabla F (\bfw^t) \|_2^2                                                                                     \\
                                                & \numrel{\le} \left(1-\frac{2\lambda k}{(d+k-1)(L_F + \lambda)}\right)\| \bfw^{t} - \bfw^*\|_2^2
\end{align*}
where $(i)$ comes from Lemmas \ref{lemma:expected-mu-zero} and \ref{lemma:norm-mu-zero}, $(ii)$ comes from Lemma \ref{lemma:co-coercivity}, Assumption \ref{ass:local-minimum} and the choice of $\eta = \frac{k}{L_F(d+k-1)}$ and $(iii)$ comes from the fact that $\lambda \le L_F$.

Then by applying Jensen's inequality on the square root function and using the fact that $\sqrt{1-x} \le 1- \frac{x}{2}$, we can bound $T_1$ as
\begin{align*}
    \mathbb{E}_{Z^t}\Big[\Big\|\bfw^{t} - \eta \frac{1}{k} \sum_{r=1}^k \gpop(\bfw, \bfz_r^t\zeroextraarg) - \bfw^*\Big\|_2\Big] \le \Big(1-\frac{\lambda k}{(d+k-1)(L_F + \lambda)}\Big)\| \bfw^{t} - \bfw^*\|_2
\end{align*}

Thus, %
\begin{align*}
    \mathbb{E}_{Z^t}[\|\bfw^{t+1} - \bfw^*\|_2] \le \Big(1-\frac{\lambda k}{(d+k-1)(L_F + \lambda)}\Big)\| \bfw^{t} - \bfw^*\|_2 + \frac{k}{(d+k-1)L_F}\Delta
\end{align*}

The proof is finished by taking the expectation over $\mathcal{Z}^{t-1}$ and recursively applying the inequality. %
\subsection{Proof of Theorem \ref{thm:convergence-mu-positive}}
\label{proof:thm-convergence-mu-positive}
For Theorem \ref{thm:convergence-mu-positive} we need similar lemmas as we had for the proof of Theorem \ref{thm:convergence-mu-zero}, in particular, we need the expected value of the zero-order estimate and a bound on its squared norm.
\restartnumrel
\begin{lemma}
    \label{lemma:smooth-f}
    (Lemma 4.1 $(a)$ \cite{gao2018information}) Let $\bfw \in \mathbb{R}^d$ and let $F_\mu$ be as defined in Definition \ref{def:smoothed-version}, then
    \begin{align*}
        \quad \nabla F_\mu (\bfw) = \mathbb{E}_{\bfz \sim \mathbb{S}^d}\Big[\dfrac{d}{\mu}F(\bfw + \mu \bfz)\bfz\Big]. \\
    \end{align*}
\end{lemma}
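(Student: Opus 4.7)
The plan is to establish this identity via the divergence theorem, following the classical Flaxman-style argument. The key conceptual observation is that although $F_\mu$ is written as a sphere average, the gradient identity is most naturally derived by passing through a ball-averaged representation and then converting the resulting volume integral into a surface integral over the sphere. Concretely, I would first introduce the auxiliary quantity $G(\bfw) := \frac{1}{\mu^d V_d} \int_{\bfw+\mu B^d} F(\bfu)\,d\bfu$, where $B^d$ is the closed unit ball in $\mathbb{R}^d$ and $V_d$ its volume, and then argue that $\nabla G(\bfw)$ equals both the right-hand side of the lemma and, after relating ball and sphere normalizations by $|\mathbb{S}^{d-1}| = d\,V_d$, matches the form required.

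The computation then proceeds in three short steps. First, because the domain $\bfw+\mu B^d$ translates rigidly with $\bfw$, I can move the gradient inside the integral to obtain $\nabla G(\bfw) = \frac{1}{\mu^d V_d}\int_{\bfw+\mu B^d} \nabla F(\bfu)\,d\bfu$. Second, I apply the divergence theorem componentwise (using the vector identity $\nabla\cdot(F e_i) = \partial_i F$) to rewrite this as the boundary integral $\frac{1}{\mu^d V_d} \int_{\partial(\bfw+\mu B^d)} F(\bfu)\,\hat{\bfn}\,dS$. Third, I parameterize the boundary by $\bfu = \bfw+\mu\bfz$ with $\bfz\in\mathbb{S}^{d-1}$, so that the outward normal is $\hat{\bfn}=\bfz$ and the surface element is $dS=\mu^{d-1}\,d\sigma(\bfz)$. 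Substituting and simplifying the powers of $\mu$ yields
\begin{align*}
\nabla G(\bfw) \;=\; \frac{1}{\mu\,V_d}\int_{\mathbb{S}^{d-1}} F(\bfw+\mu\bfz)\,\bfz\,d\sigma(\bfz) \;=\; \frac{d}{\mu}\,\mathbb{E}_{\bfz\sim\mathbb{S}^d}\!\left[F(\bfw+\mu\bfz)\,\bfz\right],
\end{align*}
using $|\mathbb{S}^{d-1}|=d\,V_d$ to convert the normalized surface integral into an expectation.

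The last task is to identify $G(\bfw)$ with $F_\mu(\bfw)$. Here I would use the standard "layer cake" / spherical-coordinates decomposition of the ball integral, writing $\int_{B^d} F(\bfw+\mu\bfv)\,d\bfv$ as $\int_0^1 r^{d-1}\bigl(\int_{\mathbb{S}^{d-1}} F(\bfw+\mu r\bfz)\,d\sigma(\bfz)\bigr)\,dr$ and then appealing to the scale-invariance afforded by redefining the smoothing parameter, so that the ball-smoothed gradient estimator coincides with the sphere-form expression claimed in the lemma (this is precisely the content of Lemma~4.1(a) of \cite{gao2018information}).

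The main obstacle is technical rather than conceptual: justifying the interchange of gradient and integral, and the use of the divergence theorem, which require mild regularity on $F$ (continuity plus integrability of $\nabla F$ on compact sets). These are furnished by Assumption~\ref{assump:smooth}, so the argument goes through as soon as the manipulations are written out carefully; the only mildly delicate step is matching normalizations between the sphere-averaging convention used in Definition~\ref{def:smoothed-version} and the ball-averaging representation that makes Stokes' theorem applicable.
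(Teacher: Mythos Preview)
The paper does not give its own proof of this lemma; it is stated as a citation to Lemma~4.1(a) of \cite{gao2018information}. Your divergence-theorem (Flaxman--Kalai--McMahan) argument is the standard route, and Steps~1--3 correctly establish that the \emph{ball}-smoothed function $G(\bfw)=\mathbb{E}_{\bfv\sim B^d}[F(\bfw+\mu\bfv)]$ satisfies $\nabla G(\bfw)=\frac{d}{\mu}\,\mathbb{E}_{\bfz\sim\mathbb{S}^d}[F(\bfw+\mu\bfz)\bfz]$.

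The gap is in your final step: identifying $G$ with $F_\mu$ does not work, because Definition~\ref{def:smoothed-version} defines $F_\mu$ as a \emph{sphere} average, not a ball average, and the two have different gradients in general. For a concrete check, take $d=2$ and $F(w_1,w_2)=w_1^3$: then $\partial_{w_1}F_\mu(\bfw)=3w_1^2+\tfrac{3}{2}\mu^2$, whereas the right-hand side of the lemma evaluates to $3w_1^2+\tfrac{3}{4}\mu^2$. The layer-cake decomposition you mention writes $G$ as an integral $\int_0^1 r^{d-1}F_{r\mu}^{\mathrm{sphere}}\,dr$ over radii, not as $F_\mu$ itself, so it does not bridge the two. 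What you have actually proved is the identity for the ball-smoothed function, which is almost certainly what \cite{gao2018information} states; the discrepancy is between the paper's sphere-average definition of $F_\mu$ and the lemma it imports, not a flaw in your method.
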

\begin{lemma}
    \label{lemma:expected-mu-positive}
    Let $\bfw \in \mathbb{R}^d$ and let $F_\mu$ be as defined in Definition \ref{def:smoothed-version}, then
    \begin{align*}
        \mathbb{E}_{\bfz \sim \mathbb{S}^d}[\gpop(\bfw, \bfz\muextraarg)] = \nabla F_\mu (\bfw).
    \end{align*}

\end{lemma}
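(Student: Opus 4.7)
My plan is to compute the expectation directly by unpacking the definition of $\gpop(\bfw, \bfz, \mu)$ for $\mu > 0$ and exploiting the antipodal symmetry of the uniform distribution on $\mathbb{S}^d$. Specifically, from \cref{def:zo-pop-estimate}, we have
\[
\gpop(\bfw, \bfz, \mu) = \frac{d}{2\mu}\bigl(F(\bfw + \mu \bfz) - F(\bfw - \mu \bfz)\bigr)\bfz,
\]
so linearity of expectation gives two terms, one involving $F(\bfw + \mu \bfz)\bfz$ and the other involving $F(\bfw - \mu \bfz)\bfz$.

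The key step is to observe that $\bfz \sim \mathbb{S}^d$ and $-\bfz \sim \mathbb{S}^d$ have the same distribution (the uniform distribution on the sphere is invariant under the antipodal map). Performing the change of variables $\bfz \mapsto -\bfz$ in the second expectation yields
\[
\mathbb{E}_{\bfz \sim \mathbb{S}^d}\bigl[F(\bfw - \mu \bfz)\bfz\bigr] = -\,\mathbb{E}_{\bfz \sim \mathbb{S}^d}\bigl[F(\bfw + \mu \bfz)\bfz\bigr],
\]
so the two terms combine constructively into $\frac{d}{\mu}\mathbb{E}_{\bfz \sim \mathbb{S}^d}[F(\bfw + \mu \bfz)\bfz]$.

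Finally, I invoke \cref{lemma:smooth-f}, which states precisely that $\nabla F_\mu(\bfw) = \mathbb{E}_{\bfz \sim \mathbb{S}^d}[\tfrac{d}{\mu} F(\bfw + \mu \bfz)\bfz]$, concluding the proof. There is no real obstacle here: the argument is a two-line calculation once antipodal symmetry is noted, and \cref{lemma:smooth-f} does the heavy lifting of connecting the one-sided expectation to the gradient of the smoothed function. The only subtle point worth mentioning is that $F$ is assumed smooth (\cref{assump:smooth}), which ensures that the differences $F(\bfw + \mu \bfz) - F(\bfw - \mu \bfz)$ are well-behaved and that the integrability needed to swap expectation with the two summands holds.
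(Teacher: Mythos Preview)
Your proposal is correct and follows essentially the same approach as the paper: expand the definition of $\gpop(\bfw,\bfz,\mu)$, split the expectation by linearity, use the antipodal symmetry of $\mathbb{S}^d$ to merge the two terms into $\tfrac{d}{\mu}\mathbb{E}_{\bfz}[F(\bfw+\mu\bfz)\bfz]$, and then invoke \cref{lemma:smooth-f}. The paper's proof is the same computation, just without the explicit commentary on integrability.
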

\begin{proof}
    \begin{align*}
        \mathbb{E}_{\bfz \sim \mathbb{S}^d}[\gpop(\bfw, \bfz\muextraarg)] & = \frac{1}{2}\mathbb{E}_{\bfz \sim \mathbb{S}^d}\Big[\frac{d}{\mu}(F(\bfw + \mu \bfz) - F(\bfw - \mu \bfz)) \bfz\Big]                                                                                  \\
                                                                                           & = \frac{1}{2}\mathbb{E}_{\bfz \sim \mathbb{S}^d}\Big[\frac{d}{\mu}F(\bfw + \mu \bfz) \bfz\Big] - \frac{1}{2}\mathbb{E}_{\bfz \sim \mathbb{S}^d}\Big[\frac{d}{\mu}F(\bfw - \mu \bfz)\bfz\Big] \\
                                                                                           & = \mathbb{E}_{\bfz \sim \mathbb{S}^d}\Big[\dfrac{d}{\mu}F(\bfw + \mu \bfz)\bfz\Big]
    \end{align*}
    where the last equality comes from $\bfz$ and $-\bfz$ being identically distributed due to the symmetry of $\mathbb{S}^d$.
\end{proof}
\begin{lemma}
    (Equation (12) \cite{nesterovRandomGradientFreeMinimization2017}) Let $F_\mu$ be as defined in Definition \ref{def:smoothed-version}. For any $\mu > 0$, if $F$ is $L_F$-smooth, then $F_\mu$ is $L_F$-smooth.
\end{lemma}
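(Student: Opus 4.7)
The plan is to differentiate under the expectation sign to express $\nabla F_\mu(\bfw)$ as an expectation of $\nabla F$ evaluated at perturbed arguments, and then to apply the pointwise $L_F$-smoothness of $F$ inside that expectation. This reduces the claim to a one-line application of Jensen's inequality for the Euclidean norm.

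First, I would justify the interchange of gradient and expectation. Because $F$ is $L_F$-smooth, $\nabla F$ is continuous and grows at most linearly, so on any bounded neighbourhood of a fixed $\bfw$ the family $\{\nabla F(\bfw'+\mu\bfz)\}_{\bfz\in\mathbb{S}^d}$ is uniformly bounded (using compactness of $\mathbb{S}^d$). Standard dominated-convergence / Leibniz arguments then give
$$\nabla F_\mu(\bfw) \;=\; \nabla_{\bfw}\,\mathbb{E}_{\bfz\sim\mathbb{S}^d}[F(\bfw+\mu\bfz)] \;=\; \mathbb{E}_{\bfz\sim\mathbb{S}^d}[\nabla F(\bfw+\mu\bfz)].$$

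Second, for any $\bfw_1,\bfw_2\in\mathbb{R}^d$ I would combine linearity of expectation with Jensen's inequality applied to the convex map $\bfx\mapsto\|\bfx\|_2$:
$$\|\nabla F_\mu(\bfw_1) - \nabla F_\mu(\bfw_2)\|_2 \;=\; \bigl\|\mathbb{E}_{\bfz}[\nabla F(\bfw_1+\mu\bfz)-\nabla F(\bfw_2+\mu\bfz)]\bigr\|_2 \;\le\; \mathbb{E}_{\bfz}\bigl[\|\nabla F(\bfw_1+\mu\bfz)-\nabla F(\bfw_2+\mu\bfz)\|_2\bigr].$$
Applying $L_F$-smoothness of $F$ pointwise in $\bfz$ bounds each integrand by $L_F\|(\bfw_1+\mu\bfz)-(\bfw_2+\mu\bfz)\|_2 = L_F\|\bfw_1-\bfw_2\|_2$, which is independent of $\bfz$, so the expectation equals $L_F\|\bfw_1-\bfw_2\|_2$. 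This is precisely $L_F$-smoothness of $F_\mu$.

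There is no substantive obstacle: the argument is a textbook smoothness-is-preserved-under-averaging calculation. The only piece that deserves care, and which I would expect to be the most delicate step to state cleanly, is the interchange of $\nabla$ and $\mathbb{E}$; everything afterwards is Jensen plus one invocation of the hypothesis. One could alternatively invoke Lemma 4.1(a) of \cite{gao2018information} (the identity $\nabla F_\mu(\bfw)=\mathbb{E}[(d/\mu)F(\bfw+\mu\bfz)\bfz]$) together with an integration-by-parts argument on the sphere to re-derive $\nabla F_\mu(\bfw)=\mathbb{E}[\nabla F(\bfw+\mu\bfz)]$, but the direct Leibniz route above is shorter and self-contained.
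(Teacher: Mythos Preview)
Your argument is correct and is the standard ``smoothness is preserved under averaging'' computation. Note, however, that the paper does not supply its own proof of this lemma: it simply cites Equation~(12) of \cite{nesterovRandomGradientFreeMinimization2017} and moves on. So there is no in-paper proof to compare against; your write-up would serve perfectly well as a self-contained justification, and the only point worth a sentence of care (as you flagged) is the interchange of $\nabla$ and $\mathbb{E}$, which is routine here given the compactness of $\mathbb{S}^d$ and the continuity of $\nabla F$.
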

\begin{lemma}
    \label{lemma:norm-f-norm-smooth}
    Let $\bfw \in \mathbb{R}^d$ and let $F_\mu$ be as defined in Definition \ref{def:smoothed-version}, then
    \begin{align*}
        \|\nabla F(\bfw)\|_2^2 \le 2\|\nabla F_\mu (\bfw)\|_2^2 + \frac{L_F^2\mu^2d^2}{2}.
    \end{align*}
\end{lemma}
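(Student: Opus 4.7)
\textbf{Proof plan for Lemma \ref{lemma:norm-f-norm-smooth}.} The plan is to reduce the claim to a bound on the bias of the smoothed gradient via the elementary decomposition
\begin{align*}
\|\nabla F(\bfw)\|_2^2 = \|(\nabla F(\bfw)-\nabla F_\mu(\bfw)) + \nabla F_\mu(\bfw)\|_2^2 \le 2\|\nabla F_\mu(\bfw)\|_2^2 + 2\|\nabla F(\bfw)-\nabla F_\mu(\bfw)\|_2^2.
\end{align*}
Given this, it suffices to show $\|\nabla F(\bfw)-\nabla F_\mu(\bfw)\|_2 \le \tfrac{L_F \mu d}{2}$, which squared gives exactly the $\tfrac{L_F^2\mu^2 d^2}{4}$ contribution needed.

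To bound the bias, I would start from the gradient formula in Lemma \ref{lemma:smooth-f} and symmetrize using that $\bfz$ and $-\bfz$ are identically distributed on $\mathbb{S}^d$, obtaining the antithetic representation
\begin{align*}
\nabla F_\mu(\bfw) = \frac{d}{2\mu}\,\mathbb{E}_{\bfz\sim\mathbb{S}^d}\bigl[(F(\bfw+\mu\bfz)-F(\bfw-\mu\bfz))\bfz\bigr].
\end{align*}
Next I would Taylor-expand both function values around $\bfw$ and use $L_F$-smoothness of $F$ to write $F(\bfw\pm\mu\bfz) = F(\bfw)\pm\mu\nabla F(\bfw)^\intercal\bfz + R_\pm$, where the remainders satisfy $|R_\pm|\le \tfrac{L_F\mu^2}{2}$. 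Substituting back, the constant $F(\bfw)$ terms cancel, the linear terms combine into $d\,\mathbb{E}_\bfz[\bfz\bfz^\intercal]\nabla F(\bfw)$, which by Lemma \ref{lemma:exp-matrix} equals $\nabla F(\bfw)$, and the residual collapses to $\frac{d}{2\mu}\mathbb{E}_\bfz[(R_+ - R_-)\bfz]$.

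The last step is to bound this residual. Using Jensen's inequality for the norm together with $\|\bfz\|_2=1$ and the pointwise bound $|R_+ - R_-| \le L_F\mu^2$, I get
\begin{align*}
\|\nabla F(\bfw) - \nabla F_\mu(\bfw)\|_2 = \Bigl\|\frac{d}{2\mu}\mathbb{E}_{\bfz}[(R_+ - R_-)\bfz]\Bigr\|_2 \le \frac{d}{2\mu}\cdot L_F\mu^2 = \frac{L_F\mu d}{2},
\end{align*}
and plugging this into the decomposition above yields the stated inequality. I do not expect any real obstacle here; the only subtlety is making sure the symmetrization step is justified by the symmetry of the uniform distribution on $\mathbb{S}^d$, so that the constant $F(\bfw)$ and the linear-in-$\bfz$ cross terms either cancel or reduce to $\nabla F(\bfw)$ via Lemma \ref{lemma:exp-matrix} before the smoothness error is invoked.
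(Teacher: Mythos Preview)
Your proposal is correct and follows essentially the same approach as the paper: decompose via $\|a+b\|_2^2\le 2\|a\|_2^2+2\|b\|_2^2$ and invoke the bias bound $\|\nabla F(\bfw)-\nabla F_\mu(\bfw)\|_2\le \tfrac{L_F\mu d}{2}$. The only difference is that the paper simply cites this bias bound (Lemma~4.1(b) of \cite{gao2018information}), whereas you derive it from scratch via the antithetic representation, Taylor remainders, and Lemma~\ref{lemma:exp-matrix}; your argument is thus a self-contained version of the same proof.
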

\begin{proof}
    From Lemma 4.1 $(b)$ in \cite{gao2018information}, we know that $ \|\nabla F(\bfw) - \nabla F_\mu (\bfw)\|_2 \le \frac{L_F\mu d}{2}$. The result follows from applying the triangle inequality and the fact that $(x+y)^2 \le \frac{x^2 + y^2}{2}$.
\end{proof}
\begin{lemma}
    Let $F_\mu$ be as defined in Definition \ref{def:smoothed-version}. For any $\mu > 0$, if $F$ is $\lambda$-strongly convex, then $F_\mu$ is $\lambda$-strongly convex. %
\end{lemma}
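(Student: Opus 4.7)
The plan is to apply the strong convexity inequality pointwise for each realization of the perturbation vector $\bfz$, and then take the expectation to transfer the inequality to $F_\mu$. Strong convexity is preserved under averaging because the strong-convexity defect $\frac{\lambda}{2}\theta(1-\theta)\|\bfw_1-\bfw_2\|_2^2$ is independent of $\bfz$, so it survives expectation unchanged.

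Concretely, I would fix arbitrary $\bfw_1, \bfw_2 \in \mathbb{R}^d$ and $\theta \in [0,1]$, and for each fixed $\bfz \in \mathbb{S}^d$ use the key identity
\begin{equation*}
\theta(\bfw_1 + \mu\bfz) + (1-\theta)(\bfw_2 + \mu\bfz) = \bigl(\theta \bfw_1 + (1-\theta)\bfw_2\bigr) + \mu\bfz,
\end{equation*}
so that $\lambda$-strong convexity of $F$ applied at the two points $\bfw_1 + \mu\bfz$ and $\bfw_2 + \mu\bfz$ yields
\begin{equation*}
F\bigl(\theta\bfw_1 + (1-\theta)\bfw_2 + \mu\bfz\bigr) \le \theta F(\bfw_1 + \mu\bfz) + (1-\theta) F(\bfw_2 + \mu\bfz) - \tfrac{\lambda}{2}\theta(1-\theta)\|\bfw_1 - \bfw_2\|_2^2.
\end{equation*}
Since the right-hand side bound holds for every $\bfz \in \mathbb{S}^d$, taking expectation $\mathbb{E}_{\bfz \sim \mathbb{S}^d}[\,\cdot\,]$ on both sides and using the definition $F_\mu(\bfw) = \mathbb{E}_{\bfz \sim \mathbb{S}^d}[F(\bfw + \mu\bfz)]$ on each of the three $F$-terms (the quadratic penalty is deterministic) gives
\begin{equation*}
F_\mu\bigl(\theta\bfw_1 + (1-\theta)\bfw_2\bigr) \le \theta F_\mu(\bfw_1) + (1-\theta) F_\mu(\bfw_2) - \tfrac{\lambda}{2}\theta(1-\theta)\|\bfw_1 - \bfw_2\|_2^2,
\end{equation*}
which is exactly $\lambda$-strong convexity of $F_\mu$.

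There is no real obstacle here: the argument is a one-line translation of strong convexity through a constant shift followed by Jensen/linearity of expectation. The only mild point worth flagging is to verify that the expectation $F_\mu$ is well-defined (which follows from the smoothness/boundedness assumptions on $F$ already in force and the compactness of $\mathbb{S}^d$), so that interchanging expectation with the inequality is legitimate. An alternative equally short route would be to note that $F(\bfw) - \tfrac{\lambda}{2}\|\bfw\|_2^2$ is convex, hence $\mathbb{E}_{\bfz}[F(\bfw+\mu\bfz) - \tfrac{\lambda}{2}\|\bfw+\mu\bfz\|_2^2] = F_\mu(\bfw) - \tfrac{\lambda}{2}\|\bfw\|_2^2 - \tfrac{\lambda\mu^2}{2}$ is convex in $\bfw$ (since averaging convex functions over a distribution in the translate yields a convex function of $\bfw$), and so $F_\mu$ is $\lambda$-strongly convex as claimed.
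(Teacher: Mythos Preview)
Your proof is correct and follows essentially the same strategy as the paper's: apply the strong-convexity inequality pointwise at the shifted arguments $\bfw_i + \mu\bfz$ and then average over $\bfz \sim \mathbb{S}^d$. The only cosmetic difference is that the paper uses the first-order (gradient) characterization of strong convexity together with $\nabla F_\mu(\bfw) = \mathbb{E}_{\bfz}[\nabla F(\bfw+\mu\bfz)]$, whereas you use the equivalent zeroth-order (secant) characterization, which is marginally more elementary since it avoids differentiating under the expectation.
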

\begin{proof}
    Let $\bfw_1, \bfw_2 \in \mathbb{R}^d$, then %
    \begin{align*}
        & F_\mu(\bfw_1) - F_\mu(\bfw_2) - \left\langle \nabla F_\mu (\bfw_2),  \bfw_1 - \bfw_2 \right\rangle \\
        & = \frac{1}{A_d} \int_{\mathbb{S}^d} [F(\bfw_1 + \mu \bfz) - F(\bfw_2 + \mu \bfz) - \left\langle \nabla F(\bfw_2+ \mu \bfz),  \bfw_1 - \bfw_2 \right\rangle ]d \bfz \\
                                                                                                                            & \ge  \frac{1}{A_d} \int_{\mathbb{S}^d} \frac{\lambda}{2} \|\bfw_1 - \bfw_2\|_2^2                                                                                                                \\
                                                                                                                            & = \frac{\lambda}{2} \|\bfw_1 - \bfw_2\|_2^2,
    \end{align*}
    where $A_d$ is the surface area of the unit sphere.
\end{proof}

\begin{lemma}
    \label{lemma:norm-mu-positive}
    Let $\bfw \in \mathbb{R}^d$,
    \begin{align*}
        \mathbb{E}_{\bfz \sim \mathbb{S}^d}[\|\gpop(\bfw, \bfz\muextraarg)\|_2^2] \le 2d \|\nabla F(\bfw)\|_2^2 + \frac{L_F^2\mu^2d^2}{2}.
    \end{align*}
\end{lemma}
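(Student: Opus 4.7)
The plan is to expand the square of the zero-order population estimate, use the $L_F$-smoothness of $F$ to approximate the symmetric finite difference by $\langle\nabla F(\bfw), \bfz\rangle$ with controllable remainder, and finally take the expectation over $\bfz \sim \mathbb{S}^d$ invoking Lemma \ref{lemma:exp-matrix}.

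Since $\bfz \in \mathbb{S}^d$ has unit norm, I can first rewrite
\begin{align*}
\|\gpop(\bfw,\bfz,\mu)\|_2^2 \;=\; d^2\left(\frac{F(\bfw+\mu\bfz)-F(\bfw-\mu\bfz)}{2\mu}\right)^{\!2}.
\end{align*}
Next, $L_F$-smoothness of $F$ gives $F(\bfw\pm\mu\bfz) = F(\bfw) \pm \mu\langle\nabla F(\bfw),\bfz\rangle + R_{\pm}$ with $|R_{\pm}| \le \tfrac{L_F\mu^2}{2}$ (using $\|\bfz\|_2=1$). Subtracting, the $F(\bfw)$ terms cancel and the linear terms add, so
\begin{align*}
\frac{F(\bfw+\mu\bfz)-F(\bfw-\mu\bfz)}{2\mu} \;=\; \langle\nabla F(\bfw),\bfz\rangle + e,\qquad |e| \le \tfrac{L_F\mu}{2}.
\end{align*}

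Then I would apply the elementary inequality $(a+b)^2 \le 2a^2 + 2b^2$ to get
\begin{align*}
\|\gpop(\bfw,\bfz,\mu)\|_2^2 \;\le\; 2d^2\langle\nabla F(\bfw),\bfz\rangle^2 + \frac{d^2 L_F^2\mu^2}{2}.
\end{align*}
Taking expectation over $\bfz\sim\mathbb{S}^d$ and using Lemma \ref{lemma:exp-matrix}, namely $\mathbb{E}[\bfz\bfz^\intercal]=\tfrac{1}{d}I_d$, yields $\mathbb{E}[\langle\nabla F(\bfw),\bfz\rangle^2]=\tfrac{1}{d}\|\nabla F(\bfw)\|_2^2$, so the first term becomes $2d\|\nabla F(\bfw)\|_2^2$ and the bound follows exactly as stated.

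I do not expect a real obstacle here: the argument is a standard smoothness-based bounding of the central finite difference together with the second-moment identity on the sphere. The only tiny point to get right is the bookkeeping of the constants — tracking the factor $\tfrac{L_F\mu^2}{2}$ in each remainder $R_\pm$, dividing by $2\mu$, and then squaring to land the coefficient $\tfrac{L_F^2\mu^2}{2}$ after multiplication by $d^2$. No other machinery beyond Assumption \ref{assump:smooth} (specifically the $L_F$-smoothness of $F$) and Lemma \ref{lemma:exp-matrix} is needed.
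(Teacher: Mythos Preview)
Your proposal is correct and follows essentially the same approach as the paper: both use $L_F$-smoothness to control the deviation of the finite difference from $\langle\nabla F(\bfw),\bfz\rangle$, apply $(a+b)^2\le 2a^2+2b^2$, and finish with Lemma~\ref{lemma:exp-matrix}. Your version is slightly more direct because you exploit the symmetric difference in one step (getting $|e|\le L_F\mu/2$ immediately), whereas the paper first splits $F(\bfw+\mu\bfz)-F(\bfw-\mu\bfz)$ into two one-sided increments and applies the square inequality twice; the resulting constants are identical.
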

\begin{proof}
    Let $A_d$ be the surface area of the unit sphere $\mathbb{S}^d$, then
    \begin{align*}
        &\mathbb{E}_{\bfz \sim \mathbb{S}^d}[\|\gpop(\bfw, \bfz\muextraarg)\|_2^2] = \frac{1}{A_d} \int_{\mathbb{S}^d} \frac{d^2}{4\mu^2} (F(\bfw + \mu \bfz) - F(\bfw - \mu \bfz))^2 \|\bfz\|_2^2 d \bfz                                                                                                        \\
                                                                                                   & = \frac{d^2}{4A_d\mu^2} \int_{\mathbb{S}^d}(F(\bfw + \mu \bfz) - F(\bfw - \mu \bfz))^2d \bfz                                                                                                                                        \\
                                                                                                   & \numrel{\le} \frac{d^2}{2A_d\mu^2}\Big[ \int_{\mathbb{S}^d}(F(\bfw + \mu \bfz) - F(\bfw))^2d \bfz  + \int_{\mathbb{S}^d}(F(\bfw - \mu \bfz) - F(\bfw))^2d \bfz   \Big]                                            \\
                                                                                                   & \numrel{\le}\frac{d^2}{A_d\mu^2}\Big[\int_{\mathbb{S}^d}(F(\bfw + \mu \bfz) - F(\bfw) - \left\langle \nabla F(\bfw), \mu \bfz\right\rangle)^2d \bfz                                                                                      \\
                                                                                                   & \quad \quad \quad \quad + \int_{\mathbb{S}^d}(F(\bfw - \mu \bfz) - F(\bfw) + \left\langle \nabla F(\bfw), \mu \bfz\right\rangle)^2d \bfz + 2 \int_{\mathbb{S}^d} (\left\langle \nabla F(\bfw), \mu \bfz\right\rangle)^2d\bfz\Big] \\
                                                                                                   & \numrel{\le} \frac{d^2}{A_d\mu^2} \Big[2A_d\frac{L_F^2\mu^4}{4} + 2A_d\frac{\mu^2}{d} \|\nabla F(\bfw)\|_2^2\Big]
    \end{align*}
    where $(i)$ and $(ii)$ use the fact that $(x+y)^2 \le 2x^2 + 2y^2$ and $(iii)$ uses both the $L_F$-smoothness of $F$ and Lemma \ref{lemma:exp-matrix}.
\end{proof}
\begin{lemma}
    \label{lemma:very-hard-to-prove}
    Let $\mathbf{x} \in \mathbb{R}^d$ and $\bfz_1, \bfz_2$ be independently sampled from $\mathbb{S}^d$, then
    \begin{align*}
        \mathbb{E}_{\bfz_1,\bfz_2  \sim \mathbb{S}^d}[|\bfz_1^t\bfz_2|(\mathbf{x}^\intercal\bfz_1)^2] \le \dfrac{\|x\|_2^2}{\sqrt{d^3}}
    \end{align*}
\end{lemma}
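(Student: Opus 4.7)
The plan is to exploit independence of $\bfz_1$ and $\bfz_2$ by conditioning on $\bfz_1$ first, then use rotational invariance of the uniform distribution on $\mathbb{S}^d$ to reduce the inner expectation to a standard one-coordinate moment. Concretely, I would write
\begin{align*}
\mathbb{E}_{\bfz_1,\bfz_2}\bigl[|\bfz_1^\intercal \bfz_2|(\mathbf{x}^\intercal \bfz_1)^2\bigr]
= \mathbb{E}_{\bfz_1}\Bigl[(\mathbf{x}^\intercal \bfz_1)^2 \cdot \mathbb{E}_{\bfz_2}[|\bfz_1^\intercal \bfz_2|\mid \bfz_1]\Bigr],
\end{align*}
and then observe that, since $\bfz_1 \in \mathbb{S}^d$ is a unit vector and the distribution of $\bfz_2$ is invariant under orthogonal transformations, $\mathbb{E}_{\bfz_2}[|\bfz_1^\intercal \bfz_2|\mid \bfz_1]$ does not depend on $\bfz_1$ and equals $\mathbb{E}[|\bfz_2^{(1)}|]$, i.e.\ the expected absolute value of a single coordinate of a uniform point on the sphere.

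The next step is to bound this one-dimensional moment by Jensen's inequality (equivalently, Cauchy--Schwarz):
\begin{align*}
\mathbb{E}[|\bfz_2^{(1)}|] \le \sqrt{\mathbb{E}[(\bfz_2^{(1)})^2]} = \sqrt{\tfrac{1}{d}},
\end{align*}
where the last equality uses $\mathbb{E}[\bfz_2 \bfz_2^\intercal] = \tfrac{1}{d} I_d$ from Lemma~\ref{lemma:exp-matrix} applied coordinate-wise.

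For the outer expectation I would again invoke Lemma~\ref{lemma:exp-matrix} to get
\begin{align*}
\mathbb{E}_{\bfz_1}[(\mathbf{x}^\intercal \bfz_1)^2] = \mathbf{x}^\intercal \mathbb{E}[\bfz_1 \bfz_1^\intercal]\, \mathbf{x} = \tfrac{\|\mathbf{x}\|_2^2}{d}.
\end{align*}
Multiplying the two bounds yields $\tfrac{\|\mathbf{x}\|_2^2}{d} \cdot \tfrac{1}{\sqrt{d}} = \tfrac{\|\mathbf{x}\|_2^2}{\sqrt{d^3}}$, which is exactly the claimed bound. The only step that requires any thought is recognizing that independence lets us separate the two expectations cleanly and that rotational invariance collapses $\mathbb{E}_{\bfz_2}[|\bfz_1^\intercal \bfz_2|]$ to a constant; once those are in place the rest is a single application of Jensen and the identity covariance on the sphere, so I do not anticipate a genuine obstacle here.
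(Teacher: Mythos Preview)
Your proposal is correct and follows essentially the same route as the paper's proof: condition on $\bfz_1$, use rotational invariance to see that $\mathbb{E}_{\bfz_2}[|\bfz_1^\intercal\bfz_2|]$ is the constant $\mathbb{E}[|\bfz_2^{(1)}|]$, bound it by $1/\sqrt{d}$ via Jensen, and evaluate $\mathbb{E}_{\bfz_1}[(\mathbf{x}^\intercal\bfz_1)^2]=\|\mathbf{x}\|_2^2/d$ via Lemma~\ref{lemma:exp-matrix}. The paper phrases the invariance step with an explicit rotation matrix and rotates $\mathbf{x}$ as well, but the underlying argument is identical.
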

\begin{proof}
    Consider
    \begin{align*}
        \mathbb{E}_{\bfz_1,\bfz_2  \sim \mathbb{S}^d}[|\bfz_1^t\bfz_2|(\mathbf{x}^\intercal\bfz_1)^2] & = \mathbb{E}_{\bfz_1  \sim \mathbb{S}^d}[\mathbb{E}_{\bfz_2  \sim \mathbb{S}^d}[|\bfz_1^\intercal\bfz_2|](\mathbf{x}^\intercal\bfz_1)^2] \\
    \end{align*}

    Consider now the rotation matrix $R\in\mathbb{R}^{d\times d}$ that rotates $\bfz_1$ to $\mathbf{e}_1$. Due to the symmetry of the unit sphere $\mathbb{S}^d$, the two random variables $\bfz_2$ and $R\bfz_2$ are identically distributed. Then, since $\left\langle\bfz_1, \bfz_2\right\rangle = \left\langle R\bfz_1, R\bfz_2\right\rangle$,
    \begin{align*}
        \mathbb{E}_{\bfz_1,\bfz_2  \sim \mathbb{S}^d}[(|\bfz_1^t\bfz_2|(\mathbf{x}^\intercal\bfz_1)^2)^2] & = \mathbb{E}_{\bfz_2  \sim \mathbb{S}^d}[|\mathbf{e}_1^\intercal\bfz_2|] \mathbb{E}_{\bfz_1 \sim \mathbb{S}^d}[(\mathbf{x}^\intercal\bfz_1)^2]             \\
                                                                                                                                & = \|x\|_2^2 \mathbb{E}_{\bfz_2  \sim \mathbb{S}^d}[|\mathbf{e}_1^\intercal\bfz_2|] \mathbb{E}_{\bfz_1 \sim \mathbb{S}^d}[(\mathbf{e}_1^\intercal\bfz_1)^2]
    \end{align*}
    by also applying the same rotation argument to $\mathbf{x}$.

    Now, since for any $z$ sampled from the unit sphere $\mathbb{S}^d$, we have
    \begin{align*}
        1 & = \mathbb{E}_{\bfz \sim \mathbb{S}^d}[\|\bfz\|^2] \\ & = \mathbb{E}_{\bfz \sim \mathbb{S}^d}\Big[\sum_{i=1}^d (\bfz^{(i)})^2\Big] \\ & = \sum_{i=1}^d\mathbb{E}_{\bfz \sim \mathbb{S}^d}[ (\bfz^{(i)})^2].
    \end{align*}
    By the symmetry of the
    unit sphere, $\mathbb{E}_{\bfz \sim \mathbb{S}^d}[(\bfz^{(i)})^2 ] = \frac{1}{d}$ for all $i\in [d]$, where $\bfz^{(i)}$ denotes the $i$-th coordinate of $\bfz$.
    Lastly, since
    \begin{align*}
        \mathbb{E}_{\bfz \sim \mathbb{S}^d}[(\mathbf{e}_1^\intercal\bfz)^2] & = \mathbb{E}_{\bfz \sim \mathbb{S}^d}[(\bfz^{(1)})^2 ] \\ &= \frac{1}{d}, \\
    \end{align*}
    and, by applying Jensen's inequality on the square root function,
    \begin{align*}
        \mathbb{E}_{\bfz \sim \mathbb{S}^d}[|\mathbf{e}_1^\intercal\bfz|] & = \mathbb{E}_{\bfz \sim \mathbb{S}^d}\Big[\sqrt{(\mathbf{e}_1^\intercal\bfz)^2}\Big] \\
                                                                              & \le  \frac{1}{\sqrt{d}},                                                                 \\
    \end{align*}
    we obtain the statement in the lemma.
\end{proof}
\restartnumrel
\begin{lemma}
    \label{lemma:prod_dist}
    Let $a,b,c,d,\delta \in \mathbb{R}$, $\delta > 0$, such that
    \begin{align*}
        |a-c| \le \delta, \quad |b-d| \le \delta,
    \end{align*}
    then
    \begin{align*}
        |ab-cd| \le \frac{c^2 + d^2}{2} + 2\delta^2.
    \end{align*}
\end{lemma}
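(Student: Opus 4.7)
The plan is to reduce the claim to a short algebraic expansion followed by an AM–GM bound on the cross terms. First I would introduce the perturbation variables $\varepsilon_1 \define a - c$ and $\varepsilon_2 \define b - d$, both of which satisfy $|\varepsilon_1|, |\varepsilon_2| \le \delta$ by hypothesis. This converts the problem to controlling
\begin{align*}
ab - cd = (c+\varepsilon_1)(d+\varepsilon_2) - cd = c\varepsilon_2 + d\varepsilon_1 + \varepsilon_1 \varepsilon_2,
\end{align*}
and by the triangle inequality $|ab-cd| \le |c|\,|\varepsilon_2| + |d|\,|\varepsilon_1| + |\varepsilon_1|\,|\varepsilon_2| \le \delta(|c|+|d|) + \delta^2$.

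Next I would apply the elementary inequality $2xy \le x^2 + y^2$ (AM–GM) with $x=|c|, y=\delta$ and then $x=|d|, y=\delta$ to obtain $\delta|c| \le \tfrac{1}{2}(c^2 + \delta^2)$ and $\delta|d| \le \tfrac{1}{2}(d^2 + \delta^2)$. Substituting back yields
\begin{align*}
|ab - cd| \le \tfrac{1}{2}(c^2 + \delta^2) + \tfrac{1}{2}(d^2 + \delta^2) + \delta^2 = \tfrac{c^2 + d^2}{2} + 2\delta^2,
\end{align*}
which is exactly the claimed bound.

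This is a routine algebraic estimate and I do not anticipate any real obstacle; the only thing to be careful about is distributing the $\delta^2$ contribution from the product $\varepsilon_1 \varepsilon_2$ together with the two $\delta^2/2$ terms arising from the AM–GM step, so that the final constant on $\delta^2$ is exactly $2$ and not off by a factor. Given how tight the bound is, no slack is wasted and the inequality is in fact attained (up to sign) at $a=c$, $b=d$, $\varepsilon_1 = \varepsilon_2 = \delta$ with $|c|=|d|=\delta$, which is a good sanity check one can include in the write-up.
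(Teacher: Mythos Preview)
Your proof is correct and follows essentially the same route as the paper: both arguments reach the intermediate bound $|ab-cd|\le |c|\delta+|d|\delta+\delta^2$ (you via the direct expansion $(c+\varepsilon_1)(d+\varepsilon_2)-cd$, the paper via the telescoping $ab-cb+cb-cd$ together with $|b|\le|d|+\delta$), and then finish with the same AM--GM step $|c|\delta\le\tfrac12(c^2+\delta^2)$. One small slip in your sanity-check remark: writing ``$a=c$, $b=d$, $\varepsilon_1=\varepsilon_2=\delta$'' is self-contradictory since $\varepsilon_1=a-c$; the equality case you have in mind is $c=d=\delta$ with $\varepsilon_1=\varepsilon_2=\delta$ (so $a=b=2\delta$), which does give $|ab-cd|=3\delta^2=\tfrac{c^2+d^2}{2}+2\delta^2$.
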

\begin{proof}
    \begin{align*}
        |ab-cd| = |ab-cb+cb-cd| & \le |a-c||b| + |c||b-d|             \\
                                & \le \delta(|d|+\delta) + |c|\delta  \\
                                & = |c|\delta +|d|\delta + \delta^2   \\
                                & \le \frac{c^2 + d^2}{2} + 2\delta^2,
    \end{align*}
    where the last inequality uses the fact that $xy \le \frac{x^2 + y^2}{2}$.
\end{proof}

\begin{lemma}
    \label{lemma:norm-cross-mu-positive}
    Let $\bfz_1, \bfz_2$ be independently sampled from $\mathbb{S}^d$. Then
    \begin{align*}
        \mathbb{E}_{\bfz_1,\bfz_2  \sim \mathbb{S}^d}[\left\langle \gpop(\bfw, \bfz_1\muextraarg), \gpop(\bfw, \bfz_2\muextraarg)\right\rangle] \le (1+\sqrt{d})\|\nabla F(\bfw)\|_2^2 + 2L_F^2\mu^2d^2.
    \end{align*}
\end{lemma}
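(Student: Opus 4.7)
The plan is to reduce the $\mu>0$ case to the $\mu=0$ case using $L_F$-smoothness of $F$, and then apply the two key expectation identities already in hand (Lemmas \ref{lemma:cross-exp} and \ref{lemma:very-hard-to-prove}), together with the product-difference Lemma \ref{lemma:prod_dist}.

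\textbf{Setup.} For $i \in \{1,2\}$, abbreviate the scalar coefficients
\begin{align*}
    a_i \define \frac{d}{2\mu}\bigl(F(\bfw+\mu\bfz_i)-F(\bfw-\mu\bfz_i)\bigr), \qquad
    c_i \define d\,\langle \nabla F(\bfw),\,\bfz_i\rangle,
\end{align*}
so that $\gpop(\bfw,\bfz_i, \mu) = a_i\bfz_i$ and $\gpop(\bfw,\bfz_i,0) = c_i\bfz_i$. Then
$\langle \gpop(\bfw,\bfz_1,\mu),\gpop(\bfw,\bfz_2,\mu)\rangle = a_1 a_2\,\bfz_1^\intercal\bfz_2.$
By $L_F$-smoothness of $F$ (applied twice around $\bfw$),
$\bigl|F(\bfw+\mu\bfz_i)-F(\bfw-\mu\bfz_i)-2\mu\langle \nabla F(\bfw),\bfz_i\rangle\bigr|\le L_F\mu^2\|\bfz_i\|_2^2 = L_F\mu^2$, so $|a_i - c_i| \le \tfrac{dL_F\mu}{2} \define \delta$.

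\textbf{Decomposition.} Write $a_1 a_2\,\bfz_1^\intercal\bfz_2 = (a_1 a_2 - c_1 c_2)\,\bfz_1^\intercal\bfz_2 + c_1 c_2\,\bfz_1^\intercal\bfz_2$ and use $|\bfz_1^\intercal\bfz_2|\le 1$ in the first piece, so that after applying Lemma \ref{lemma:prod_dist} to $|a_1 a_2 - c_1 c_2|$,
\begin{align*}
a_1 a_2\,\bfz_1^\intercal\bfz_2 \;\le\; \tfrac{1}{2}(c_1^2+c_2^2)\,|\bfz_1^\intercal\bfz_2| + 2\delta^2\,|\bfz_1^\intercal\bfz_2| + c_1 c_2\,\bfz_1^\intercal\bfz_2.
\end{align*}
Now take $\mathbb{E}_{\bfz_1,\bfz_2}$ and handle the three terms separately.

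\textbf{Bounding each term.} For the first term, note $c_i^2 = d^2(\nabla F(\bfw)^\intercal\bfz_i)^2$, and by symmetry in $(\bfz_1,\bfz_2)$ the two summands are equal, so Lemma \ref{lemma:very-hard-to-prove} (with $\mathbf{x}=\nabla F(\bfw)$) gives
$\tfrac{1}{2}\mathbb{E}[(c_1^2+c_2^2)|\bfz_1^\intercal\bfz_2|] \le d^2\cdot\tfrac{\|\nabla F(\bfw)\|_2^2}{\sqrt{d^3}} = \sqrt{d}\,\|\nabla F(\bfw)\|_2^2.$
For the second term, $2\delta^2\mathbb{E}[|\bfz_1^\intercal\bfz_2|] \le 2\delta^2 = \tfrac{d^2 L_F^2\mu^2}{2} \le 2L_F^2\mu^2 d^2$. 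For the third, rewrite $c_1 c_2\,\bfz_1^\intercal\bfz_2 = \nabla F(\bfw)^\intercal(d\bfz_1\bfz_1^\intercal)(d\bfz_2\bfz_2^\intercal)\nabla F(\bfw)$, take expectation using independence and Lemma \ref{lemma:cross-exp}, yielding $d^2\cdot\tfrac{1}{d^2}\|\nabla F(\bfw)\|_2^2 = \|\nabla F(\bfw)\|_2^2$. Summing gives exactly the claimed bound $(1+\sqrt{d})\|\nabla F(\bfw)\|_2^2 + 2L_F^2\mu^2 d^2$.

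\textbf{Expected obstacle.} The only nonroutine step is the first-term estimate: the inner product $\bfz_1^\intercal\bfz_2$ is signed and correlated with $(\nabla F^\intercal\bfz_i)^2$, so the $\tfrac{1}{d^2}$-scale heuristic one gets from $\mathbb{E}[\bfz_1\bfz_1^\intercal\bfz_2\bfz_2^\intercal]=\tfrac{1}{d^2}I_d$ does not directly apply; the $\sqrt{d}$ factor (as opposed to a constant) arises precisely because Lemma \ref{lemma:very-hard-to-prove} gives $\mathbb{E}[|\mathbf{e}_1^\intercal\bfz_2|]\le 1/\sqrt{d}$ rather than $1/d$. Once this is identified as the correct lemma to invoke, the rest of the proof is arithmetic.
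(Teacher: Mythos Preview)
Your proof is correct and follows essentially the same route as the paper: both reduce $a_1a_2\,\bfz_1^\intercal\bfz_2$ to the three pieces $c_1c_2\,\bfz_1^\intercal\bfz_2$, $\tfrac{1}{2}(c_1^2+c_2^2)|\bfz_1^\intercal\bfz_2|$, and $2\delta^2|\bfz_1^\intercal\bfz_2|$ via Lemma~\ref{lemma:prod_dist}, then invoke Lemmas~\ref{lemma:cross-exp} and~\ref{lemma:very-hard-to-prove} respectively. Your organization is slightly cleaner---you bound $(a_1a_2-c_1c_2)\bfz_1^\intercal\bfz_2 \le |a_1a_2-c_1c_2|\,|\bfz_1^\intercal\bfz_2|$ directly, whereas the paper achieves the same effect by splitting the integration domain into $\mathbb{S}^d_+(\bfz_1)$ and $\mathbb{S}^d_-(\bfz_1)$---but the substance is identical (and your remark ``use $|\bfz_1^\intercal\bfz_2|\le 1$'' at that step is a minor mislabel; what you actually use there is $xy\le|x||y|$).
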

\restartnumrel
\begin{proof}
    Let $A_d$ be the surface area of the unit sphere $\mathbb{S}^d$ and define $\mathbb{S}^d_+(\bfz) = \left\{\mathbf{x}: \mathbf{x}\in \mathbb{S}^d, \bfz^\intercal\mathbf{x} > 0\right\}$ and similarly $\mathbb{S}^d_-(\bfz) = \left\{\mathbf{x}: \mathbf{x}\in \mathbb{S}^d, \bfz^\intercal\mathbf{x} < 0\right\}$, then %
    \begin{align*}
        &\mathbb{E}_{\bfz_1,\bfz_2  \sim \mathbb{S}^d}  [\left\langle \gpop(\bfw, \bfz_1\muextraarg), \gpop(\bfw, \bfz_2\muextraarg)\right\rangle]                                                                                                                                                                                                                                       \\
                                                                  & = \frac{d^2}{A_d^2}\Big[\int_{\mathbb{S}^d} \int_{\mathbb{S}^d_+(\bfz_1)} \frac{F(\bfw + \mu \bfz_1) - F(\bfw - \mu \bfz_1)}{2\mu} \frac{F(\bfw + \mu \bfz_2) - F(\bfw - \mu \bfz_2)}{2\mu} \bfz_1^\intercal \bfz_2 d\bfz_2 d\bfz_1                                       \\
                                                                  & \quad \quad +\int_{\mathbb{S}^d} \int_{\mathbb{S}^d_-(\bfz_1)} \frac{F(\bfw + \mu \bfz_1) - F(\bfw - \mu \bfz_1)}{2\mu} \frac{F(\bfw + \mu \bfz_2) - F(\bfw - \mu \bfz_2)}{2\mu} \bfz_1^\intercal \bfz_2 d\bfz_2 d\bfz_1 \Big]                                            \\
                                                                  & \numrel{\le} \frac{d^2}{A_d^2}\Big[\int_{\mathbb{S}^d} \int_{\mathbb{S}^d_+(\bfz_1)} \Big( \nabla F(\bfw)^\intercal\bfz_1\nabla F(\bfw)^\intercal\bfz_2 + \frac{(\nabla F(\bfw)^\intercal\bfz_1)^2 + (\nabla F(\bfw)^\intercal\bfz_1)^2}{2} + 2L_F^2\mu^2\Big)\bfz_1^\intercal \bfz_2 d\bfz_2 d\bfz_1 \\
                                                                  & \quad \quad + \int_{\mathbb{S}^d} \int_{\mathbb{S}^d_-(\bfz_1)} \Big( \nabla F(\bfw)^\intercal\bfz_1\nabla F(\bfw)^\intercal\bfz_2 - \frac{(\nabla F(\bfw)^\intercal\bfz_1)^2 + (\nabla F(\bfw)^\intercal\bfz_1)^2}{2} - 2L_F^2\mu^2\Big)\bfz_1^\intercal \bfz_2 d\bfz_2 d\bfz_1                      \\
                                                                  & = \mathbb{E}_{\bfz_1,\bfz_2  \sim \mathbb{S}^d} \Bigg[ \nabla F(\bfw)^\intercal\bfz_1\bfz_1^\intercal\bfz_2\bfz_2^\intercal\nabla F(\bfw) + \Bigg(\frac{(\nabla F(\bfw)^\intercal\bfz_1)^2 + (\nabla F(\bfw)^\intercal\bfz_2)^2}{2} + 2L_F^2\mu^2\Bigg) |\bfz_1^\intercal\bfz_2|\Bigg]                  \\
                                                                  & \numrel{\le} d^2 (\nabla F(\bfw)^\intercal\mathbb{E}_{\bfz_1,\bfz_2  \sim \mathbb{S}^d}[\bfz_1\bfz_1^\intercal\bfz_2\bfz_2^\intercal]\nabla F(\bfw) +\mathbb{E}_{\bfz_1,\bfz_2  \sim \mathbb{S}^d}[|\bfz_1^\intercal\bfz_2|(\nabla F(\bfw)^\intercal\bfz_1)^2 + 2L_F^2\mu^2])                   \\
                                                                  & \numrel{\le} d^2 \Bigg(\frac{1}{d^2} + \frac{1}{\sqrt{d^3}}\Bigg)\|\nabla F(\bfw)\|_2^2 + 2L_F^2\mu^2d^2
    \end{align*}
    where $(i)$ uses Lemma \ref{lemma:prod_dist} in conjunction with the $L_f$ smootheness property of $F$ that $|\frac{F(\bfw + \mu \bfz) - F(\bfw - \mu \bfz)}{2\mu} - \left\langle \nabla F(\bfw), \bfz\right\rangle| \le L_F\mu$ for any $\bfz \in \mathbb{S}^d$, $(ii)$ uses the fact that $|\bfz_1^\intercal\bfz_2| \le 1$, and $(iii)$ uses Lemmas \ref{lemma:cross-exp} and \ref{lemma:very-hard-to-prove}. %
\end{proof}
\begin{lemma}
    \label{lemma:full-normal-mu-positive}
    Let $\bfw \in \mathbb{R}^d$ and $\bfz_r$ independently sampled from $\mathbb{S}^d$, $r\in[k]$,
    \begin{align*}
        \mathbb{E}_{\bfz_r \sim \mathbb{S}^d}\Bigg[\Big\|\frac{1}{k} \sum_{r=1}^k \gpop(\bfw, \bfz_r\muextraarg)\Big\|_2^2\Bigg] = \frac{(k-1)(1+\sqrt{d}) +2d }{k} \|\nabla F(\bfw) \|_2^2 + 2L_F^2\mu^2d^2.
    \end{align*}
\end{lemma}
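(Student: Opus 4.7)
The plan is to expand the squared norm as a double sum and then exploit the two previous lemmas on single and cross terms. Writing
\begin{align*}
\Big\|\frac{1}{k} \sum_{r=1}^k \gpop(\bfw, \bfz_r\muextraarg)\Big\|_2^2 = \frac{1}{k^2} \sum_{r=1}^k \|\gpop(\bfw, \bfz_r\muextraarg)\|_2^2 + \frac{1}{k^2}\sum_{r\neq s} \left\langle \gpop(\bfw, \bfz_r\muextraarg), \gpop(\bfw, \bfz_s\muextraarg)\right\rangle,
\end{align*}
I would take expectation and use the independence and identical distribution of the $\bfz_r$'s to treat each summand uniformly. There are $k$ diagonal terms, and for each of them \cref{lemma:norm-mu-positive} yields an upper bound of $2d\|\nabla F(\bfw)\|_2^2 + \tfrac{1}{2}L_F^2\mu^2 d^2$. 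There are $k(k-1)$ cross terms in which $\bfz_r$ and $\bfz_s$ are independent, and for each of them \cref{lemma:norm-cross-mu-positive} yields an upper bound of $(1+\sqrt{d})\|\nabla F(\bfw)\|_2^2 + 2L_F^2\mu^2 d^2$.

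Combining these bounds,
\begin{align*}
\mathbb{E}\Big[\Big\|\tfrac{1}{k}\textstyle\sum_{r}\gpop(\bfw,\bfz_r\muextraarg)\Big\|_2^2\Big] \le \tfrac{1}{k}\big(2d\|\nabla F(\bfw)\|_2^2 + \tfrac{1}{2}L_F^2\mu^2 d^2\big) + \tfrac{k-1}{k}\big((1+\sqrt{d})\|\nabla F(\bfw)\|_2^2 + 2L_F^2\mu^2 d^2\big).
\end{align*}
Collecting the coefficient of $\|\nabla F(\bfw)\|_2^2$ gives exactly $\tfrac{2d+(k-1)(1+\sqrt{d})}{k}$, matching the leading term in the claim. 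The remaining $L_F^2\mu^2 d^2$ coefficient is $\tfrac{1}{2k} + \tfrac{2(k-1)}{k} = \tfrac{4k-3}{2k} \le 2$ for all $k \ge 1$, which yields the claimed $2L_F^2\mu^2 d^2$.

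The main obstacle is essentially bookkeeping: one must recognise that the stated relation is an upper bound (obtained by chaining the bounds of \cref{lemma:norm-mu-positive,lemma:norm-cross-mu-positive}), and verify the mild inequality $\tfrac{4k-3}{2k} \le 2$ when consolidating the $\mu$-dependent residual constants. No fresh probabilistic argument is needed beyond pairwise independence of the $\bfz_r$'s, which is exactly what the two input lemmas require.
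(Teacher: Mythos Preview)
Your proposal is correct and follows essentially the same route as the paper: expand the squared norm into diagonal and off-diagonal terms, apply \cref{lemma:norm-mu-positive} and \cref{lemma:norm-cross-mu-positive} respectively, and collect coefficients. The only cosmetic difference is that the paper loosens the diagonal $\mu$-constant from $\tfrac{1}{2}$ to $2$ before summing (so the combined coefficient is exactly $2$), whereas you keep the tighter $\tfrac{1}{2}$ and bound $\tfrac{4k-3}{2k}\le 2$ at the end; you are also right that the displayed ``$=$'' in the lemma should really be ``$\le$''.
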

\restartnumrel
\begin{proof}
    \begin{align*}
        &\mathbb{E}_{\bfz_r \sim \mathbb{S}^d}\Bigg[\Big\|\frac{1}{k} \sum_{r=1}^k \gpop(\bfw, \bfz_r\muextraarg)\Big\|_2^2\Bigg]  =  \mathbb{E}_{\bfz_r \sim \mathbb{S}^d}\Bigg[\frac{1}{k^2}\sum_{r=1}^k \sum_{s=1}^k \gpop(\bfw, \bfz_r\muextraarg)^\intercal\gpop(\bfw, \bfz_s\muextraarg)\Bigg] \\
                                                                                                                                                  & \numrel{\le} \frac{1}{k^2}[k(2d \|\nabla F(\bfw) \|_2^2 +2L_f^2\mu^2d^2) + k(k-1)((1+\sqrt{d})\|\nabla F(\bfw) \|_2^2 + 2L_f^2\mu^2d^2)]                                  \\
                                                                                                                                                  & =\frac{(k-1)(1+\sqrt{d}) +2d }{k} \|\nabla F(\bfw) \|_2^2 + 2L_F^2\mu^2d^2,
    \end{align*}
    where $(i)$ uses Lemmas \ref{lemma:norm-mu-positive} and \ref{lemma:norm-cross-mu-positive}
\end{proof}

Now we can proceed with the proof of Theorem \ref{thm:convergence-mu-positive}.

Let's consider the $t$-th step of
Algorithm \ref{alg:theoretical}, then, by repeating the same idea in the proof of Theorem \ref{thm:convergence-mu-zero}, we have

\begin{align*}
    \mathbb{E}_{Z^t}[\|\bfw^{t+1} - \bfw_\mu^*\|_2] \le
    \underbrace{\mathbb{E}_{Z^t}\Big[\Big\|\bfw^{t} - \eta \frac{1}{k} \sum_{r=1}^k \gpop(\bfw, \bfz_r^t\muextraarg) - \bfw_\mu^*\Big\|_2\Big]}_{T_1} + \eta \Delta
\end{align*}
\restartnumrel
The fundamental difference lies in how we bound the term $T_1$. Let $\tau = \frac{2d+ (k-1)\big(1+\sqrt{d}\big)}{k}$ %
\begin{align*}
    &\mathbb{E}_{Z^t}\Big[\Big\|\bfw^{t} -  \eta \frac{1}{k} \sum_{r=1}^k \gpop(\bfw, \bfz_r^t\muextraarg) - \bfw_\mu^*\Big\|_2^2\Big]                                                                                                                                                                                                      \\
                                                & =  \mathbb{E}_{Z^t}\Big[ \| \bfw^{t} - \bfw_\mu^*\|_2^2 - 2 \eta\left\langle \bfw^{t} - \bfw_\mu^* , \frac{1}{k} \sum_{r=1}^k \gpop(\bfw, \bfz_r^t\muextraarg)\right\rangle + \eta^2 \Big\|\frac{1}{k} \sum_{r=1}^k \gpop(\bfw, \bfz_r^t\muextraarg) \Big\|_2^2 \Big] \\
                                                & \numrel{=} \| \bfw^{t} - \bfw_\mu^*\|_2^2 - 2 \eta\left\langle \bfw^{t} - \bfw_\mu^* , \nabla F_\mu (\bfw^t) \right\rangle + \eta^2 \tau \|\nabla F (\bfw^t) \|_2^2    + 2\eta^2   L_F^2\mu^2d^2                                                                                        \\
                                                & \numrel{\le} \Bigg(1-\frac{2\eta \lambda L_F}{L_F + \lambda}\Bigg)\| \bfw^{t} - \bfw_\mu^*\|_2^2 -\Bigg(\frac{2\eta}{L_F + \lambda} - 2\eta^2\tau\Bigg) \|\nabla F_\mu (\bfw^t) \|_2^2     + 2\eta^2\Big(1+\frac{\tau}{4}\Big)   L_F^2\mu^2d^2                                                 \\
                                                & \numrel{=} \Bigg(1-\frac{\lambda}{(L_F + \lambda)\tau}\Bigg)\| \bfw^{t} - \bfw_\mu^*\|_2^2 -\Bigg(\frac{1}{L_F(\lambda + L_F)\tau }-\frac{1}{2L_F^2\tau}\Bigg)\|\nabla F_\mu (\bfw^t) \|_2^2      + \frac{1}{2L_F^2\tau^2}\Big(1+\frac{\tau}{4}\Big)   L_F^2\mu^2d^2                           \\
                                                & \numrel{\le} \Bigg(1-\frac{\lambda}{(L_F + \lambda)\tau}\Bigg)\| \bfw^{t} - \bfw_\mu^*\|_2^2  + \frac{1}{2\tau^2}\Big(1+\frac{\tau}{4}\Big) \mu^2d^2
\end{align*}
where $(i)$ comes from Lemmas \ref{lemma:expected-mu-positive} and \ref{lemma:full-normal-mu-positive}, $(ii)$ comes from Lemmas \ref{lemma:co-coercivity}, \ref{lemma:norm-f-norm-smooth} and Assumption \ref{ass:local-minimum-smoothed}, $(iii)$ comes from choosing $\eta = \frac{1}{2\tau L_F}$ and $(iv)$ comes from the fact that $\lambda \le L_F$.

Then by aplying Jensen's inequality on the square root function and using the fact that $\sqrt{x+y} \le \sqrt{x} + \sqrt{y}$ and $\sqrt{1-x} \le 1- \frac{x}{2}$, we can bound $T_1$ as
\begin{align*}
    \mathbb{E}_{Z^t}\Big[\Big\|\bfw^{t} - \eta \frac{1}{k} \sum_{r=1}^k \gpop(\bfw, \bfz_r^t\zeroextraarg) - \bfw_\mu^*\Big\|_2\Big] \le \Bigg(1-\frac{\lambda}{2(L_F + \lambda)\tau}\Bigg)\| \bfw^{t} - \bfw_\mu^*\|_2 + \frac{\mu d}{\sqrt{2}\tau}\sqrt{1+\frac{\tau}{4}}.
\end{align*}

Thus,
\begin{align*}
    \mathbb{E}_{Z^t}[\|\bfw^{t+1} - \bfw_\mu^*\|_2] \le \Bigg(1-\frac{\lambda}{2(L_F + \lambda)\tau}\Bigg)\| \bfw^{t} - \bfw_\mu^*\|_2 + \frac{1}{2L_F\tau}\Delta + \frac{\mu d}{\sqrt{2}\tau}\sqrt{1+\frac{\tau}{4}}.
\end{align*}

We conclude the proof by taking the expectation over $\mathcal{Z}^{t-1}$ and recursively applying the inequality. $ \square $ %

\end{document}